\DeclareMathOperator*{\argmax}{argmax}
\newcommand{\Log}{\log \ }
\newcommand{\R}{\mathbb{R}}
\newcommand{\E}{\mathbb{E}}
\newcommand{\cA}{{\mathcal A}}
\newcommand{\cB}{{\mathcal B}}
\newcommand{\cC}{{\mathcal C}}
\newcommand{\cD}{{\mathcal D}}
\newcommand{\cI}{{\mathcal I}}
\newcommand{\cP}{{\mathcal P}} 
\newcommand{\cR}{{\mathcal R}} 
\newcommand{\cS}{{\mathcal S}} 
\newcommand{\cT}{{\mathcal T}}
\newcommand{\cM}{{\mathcal M}}
\newcommand{\cO}{{\mathcal O}}
\newcommand{\cX}{{\mathcal X}}
\newcommand{\bma}{{\bm a}}
\newcommand{\bmb}{{\bm b}}
\newcommand{\bmc}{{\bm c}}
\newcommand{\bmd}{{\bm d}}
\newcommand{\bmh}{{\bm h}}
\newcommand{\bmo}{{\bm o}}
\newcommand{\bmp}{{\bm p}}
\newcommand{\bmr}{{\bm r}}
\newcommand{\bms}{{\bm s}}
\newcommand{\bmx}{{\bm x}}
\newcommand{\bmz}{{\bm z}}
\newcommand{\bmA}{{\bm A}}
\newcommand{\bmB}{{\bm B}}
\newcommand{\bmC}{{\bm C}}
\newcommand{\bmD}{{\bm D}}
\newcommand{\bmI}{{\bm I}}
\newcommand{\bmN}{{\bm N}}
\newcommand{\bmQ}{{\bm Q}}
\newcommand{\bmR}{{\bm R}}
\newcommand{\bmT}{{\bm T}}
\newcommand{\bmW}{{\bm W}}
\newcommand{\bmbeta}{{\bm \beta}}
\newcommand{\bmdelta}{{\bm \delta}}
\newcommand{\bmmu}{{\bm \mu}}
\newcommand{\bmpsi}{{\bm \psi}}
\newcommand{\bmphi}{{\bm \phi}}
\newcommand{\bmxi}{{\bm \xi}}
\newcommand{\bmsigma}{{\bm \sigma}}
\newcommand{\bmrho}{{\bm \rho}}
\newcommand{\bmtheta}{{\bm \theta}}
\newcommand{\bmSigma}{{\bm \Sigma}}
\newcommand{\bmone}{{\bm 1}}
\newcommand{\cN}{\mathcal N}
\newcommand{\Dir}{\textrm{Dir}}
\newcommand{\Beta}{\mathbf{Beta}}
\newcommand{\DP}{\mathbf{DP}}
\newcommand{\Cat}{\mathbf{Cat}}
\newcommand{\Mul}{\textrm{Mul}}
\newcommand{\GEM}{\textrm{GEM}}
\newcommand{\ELBO}{\textrm{ELBO}}
\newcommand{\MLP}{\textrm{MLP}}
\newcommand{\KL}[2]{{\cal KL}\left(#1 \,||\, #2\right)}
\newtheorem{prop}{Proposition}
\newtheorem{thm}{Theorem}
\newtheorem{lem}{Lemma}
\newtheorem{defn}{Definition}
\newenvironment{proof}{\paragraph{Proof:}}{\hfill$\square$}
\renewcommand\thesubfigure{(\alph{subfigure})}
\title{Reinforcement Learning in Presence of Discrete Markovian Context Evolution}
\author{Hang Ren$^\ast$\\
Huawei UK R\&D
\And
Aivar Sootla\thanks{equal contribution} \\
Huawei UK R\&D 
\And 
Taher Jafferjee \\
Huawei UK R\&D 
\And
Junxiao Shen \\
Huawei UK R\&D \\
University of Cambridge\\
\And
Jun Wang  \\
University College London\\
\texttt{jun.wang@cs.ucl.ac.uk}
\And
Haitham Bou-Ammar\\
Huawei UK R\&D and Honorary Lecturer at UCL\\
\texttt{haitham.ammar@huawei.com}
}
\begin{document}

\maketitle

\begin{abstract}  
We consider a context-dependent Reinforcement Learning (RL) setting, which is characterized by: a) an unknown finite number of not directly observable contexts; b) abrupt (discontinuous) context changes occurring during an episode; and c) Markovian context evolution. We argue that this challenging case is often met in applications and we tackle it using a Bayesian approach and variational inference. We adapt a sticky Hierarchical Dirichlet Process (HDP) prior for model learning, which is arguably best-suited for Markov process modeling. We then derive a context distillation procedure, which identifies and removes spurious contexts in an unsupervised fashion. We argue that the combination of these two components allows to infer the number of contexts from data thus dealing with the context cardinality assumption. We then find the representation of the optimal policy enabling efficient policy learning using off-the-shelf RL algorithms. Finally, we demonstrate empirically (using gym environments cart-pole swing-up, drone, intersection) that our approach succeeds where state-of-the-art methods of other frameworks fail and elaborate on the reasons for such failures.
\end{abstract}
\section{Introduction}  
Our world becomes more automated every day with the development of self-driving cars, robotics, and unmanned factories. Many of these automation processes rely on solutions to sequential decision-making problems. Reinforcement Learning (RL) has recently been shown to be an effective tool for solving such problems achieving notable successes, e.g., solving Atari games~\citep{mnih2013playing}, defeating the (arguably all-time) best human players in the game of GO~\citep{silver2016mastering}, accelerating robot skill acquisition~\citep{kober2013reinforcement}. Most of the successful RL algorithms rely on abstracting the sequential nature of the decision-making as Markov Decision Processes (MDPs), which typically assume both \emph{stationary} transition dynamics and reward functions. 

As classic RL departs a well-behaved laboratory setting, stationarity assumptions can quickly become prohibitive, sometimes leading to catastrophic consequences. As an illustration, imagine an autonomous agent driving a vehicle with changing weather conditions impacting visibility and tyre grip. The agent must identify and quickly adapt to these weather conditions changes in order to avoid losing control of the vehicle. Similarly, an unmanned aerial vehicle hovering around a fixed set of coordinates needs to deal with sudden atmospheric condition changes (e.g., wind, humidity etc). Another similar and realistic example is an actuator failure, which changes how the action affects the MDP. Following~\cite{menke1995sensor} we distinguish  ``soft'' (a percentage drop in action efficiency) and ``hard'' (action is not affecting the MDP) failures. The failures can also be dynamic as a ``soft'' failure in one actuator can overload other actuators introducing a chain of failures.
With every fixed specific weather condition or actuator failure such environments can be modeled as MDPs, however, with the changing weather or arising failures the environment becomes non-stationary. 

We can model this type of environments by making MDP state transitions dependant on the \emph{context} variable, which encapsulates the non-stationary and/or other dependencies. This kind of contextual Markov Decision Processes (C-MDPs) incorporate a number of different RL settings and RL frameworks~\citep{crl_review}: non-stationary RL, where the context changes over time and the agent needs to adapt to the context (e.g., the weather conditions are slowly changing over time); continual and/or meta RL, where the context is sampled from a distribution before the start of the episode (e.g, the weather changes abruptly between the instances the vehicle has been deployed).

Although a significant progress in solving specific instances of C-MDPs has been made, the setting with a countable number of contexts with Markovian transitions between the contexts has not received sufficient attention in the literature --- a gap we are aiming to fill. The closest related works consider only special cases of our setting assuming: no context transitions~\citep{xu2020task}; Markovian context transitions with {\it a priori} known context transition times~\citep{xie2020deep}; finite state-action spaces~\citep{choi2000hidden}. To enable sample efficient context adaptation~\cite{xu2020task} and~\cite{xie2020deep} developed model-based reinforcement learning algorithms. Specifically,~\cite{xie2020deep} learned a latent space variational auto-encoder model with Markovian evolution in continuous context-space, while~\cite{xu2020task} adopted a Gaussian Process model for MDPs and a Dirichlet process (DP) prior to model \emph{static non-evolving} contexts. Note the use of DP, which is a conjugate prior for a categorical distribution, and fits perfectly with the static context case. We also propose a model-based RL algorithm, however, we model the context and state transitions using the Hierarchical Dirichlet Process (HDP)~\citep{teh2006hierarchical, fox2008nonparametric} prior and a neural network with outputs parametrizing a Guassian distribution (i.e., its mean and variance), respectively, and refer to the model as HDP-C-MDP. We chose the HDP prior since it only requires the knowledge of an upper bound on context cardinality, and it is better suited for Markov chain modeling than other priors such as DP~\citep{teh2006hierarchical}. Inspired by~\cite{blei2006variational} we derive a model learning algorithm using variational inference, which is amenable to RL applications using off-the-shelf algorithms.

Our algorithm relies on two theoretical results guiding the representation learning: a) we propose a context distillation procedure (i.e., removing spurious contexts); b) we show that the optimal policy depends on the context belief (context posterior probability given past observations). We derive another theoretical result, which shows performance improvement bounds for the fully observable context case. Equipped with these results, we experimentally demonstrate that we can infer the true context cardinality from data. Further, the context distillation procedure can be used during training as a regularizer. Interestingly, it can also be used to merge similar contexts, where the measure of similarity is only implicitly defined through the learning loss. Thus context merging is completely unsupervised. We then show that our model learning algorithm appears to provide an optimization profile with fewer local maxima and minima than the maximum likelihood approach, which we attribute to the Bayesian nature of our algorithm. Finally, we illustrate RL applications on an autonomous car left turn and an autonomous drone take-off tasks. We also demonstrate that state-of-the-art algorithms of different frameworks (such as continual RL and Partially-Observable Markov Decision Processes (POMDPs)) fail to solve C-MDPs in our setting, and we elaborate on potential reasons why this is the case. 

\section{Problem Formulation and Related Work} \label{s:related_work} 
We define a contextual Markov Decision Process (C-MDP) as a tuple $\cM_{\rm c} = \langle \cC, \cS, \cA, \cP_{\cC}, \cP_{\cS}, \cR, \gamma_d \rangle$, where $\cS$ is the continuous state space; $\cA$ is the action space; $\gamma_d \in [0, 1]$ is the discount factor; and $\cC$ denotes the context set with cardinality $|\cC|$. In our setting, the state transition and reward function depend on the context, i.e., $\cP_{\cS}: \cC \times \cS \times \cA \times \cS \rightarrow [0, 1]$, $\cR:  \cC \times \cS \times \cA \rightarrow \R$. Finally, the context distribution probability $\cP_{\cC}: \cT_t \times \cC \rightarrow [0, 1]$ is conditioned on $\cT_t$ - the past states, actions and contexts $\{\bms_0, \bma_0, \bmc_0, \dots, \bma_{t-1}, \bmc_{t-1}, \bms_t\}$. Our definition is a generalization of the C-MDP definition by~\cite{hallak2015contextual}, where the contexts are stationary, i.e., $\cP_{\cC}: \cC \rightarrow [0, 1]$. We adapt our definition in order to encompass all the settings presented by~\cite{crl_review}, where such C-MDPs were used but not formally defined.

Throughout the paper, we will restrict the class of C-MDPs by making the following assumptions: (a) {\bf Contexts are unknown and not directly observed} (b) {\bf Context cardinality is finite and we know its upper bound $K$}; (c) {\bf Contexts switches can occur during an episode and they are Markovian}. In particular, we consider the contexts $\bmc_k$ representing the parameters of the state transition function $\bmtheta_k$, and the context set $\cC$ to be a subset of the parameter space $\Theta$. To deal with uncertainty, we consider a set $\widetilde \cC$ such that: a) $|\widetilde \cC| = K > |\cC|$; b) all its elements $\bmtheta_k\in\widetilde\cC$ are sampled from a distribution $H(\lambda)$, where $\lambda$ is a hyper-parameter. Let $z_t\in[0,\dots, K)$ be the index variable pointing toward a particular parameter vector $\bmtheta_{z_t}$, which leads to:
\begin{equation}
\begin{aligned}
&z_0 \ | \ \bmrho_0 \sim \Cat(\bmrho_0), \qquad z_{t} \ | \ z_{t-1}, \{\bmrho_j\}_{j=1}^{|\widetilde\cC|} \sim \Cat(\bmrho_{z_{t-1}}), \\
&\bms_{t} \ | \ \bms_{t-1}, \bma_{t-1}, z_{t}, \{\bmtheta_k\}_{k=1}^{|\widetilde\cC|} \sim p(\bms_{t}|\bms_{t-1}, \bma_{t-1}, \bmtheta_{z_{t}}),\quad \bmtheta_k \ | \ \lambda \sim H(\lambda), t \ge 1, 
\end{aligned} \label{eq:transition}
\end{equation}
where $\bmrho_0$ is the initial context distribution, while $\bmR = [\bmrho_1,..., \bmrho_{|\widetilde\cC|}]$ represents the context transition operator. 

As the reader may notice our model is tailored to the case, where the model parameters change abruptly due to external factors such as weather conditions, cascading actuator failures etc. The change is formalized by a Markov variable $z_t$, which changes the MDP parameters. Our approach can also be related to switching systems modeling (cf.~\cite{fox2008nonparametric, becker2019switching, dong2020collapsed}) and in this case the context is representing the system's mode. While we can draw parallels with these works, we improve the model by using nonlinear dynamics (in comparison to~\cite{becker2019switching, fox2008nonparametric}), by using the HDP prior (\cite{becker2019switching, dong2020collapsed} use maximum likelihood estimators), and finally, by proposing the distillation procedure and using deep learning (in comparison to~\cite{fox2008nonparametric}). Also note that typically a switching system aims to represent a complex nonlinear (Markov) model using a collection of simpler (e.g., linear) models, which is different from our case. Other restrictions on the space of C-MDPs lead to different problems and solutions~\citep{crl_review}. We briefly mention a few notable cases, while relegating a detailed discussion to Appendix~\ref{app:lit-review}. Assuming $\cP_{\cC}$ is deterministic with $z_t = t$ puts us in the non-stationary RL setting (cf.~\cite{chandak2020optimizing}), where it is common to assume a slowly or smoothly changing non-stationarity as opposed to our case of possibly abrupt changes. Restricting the context to a stationary distribution sampled in specific time points (e.g., at the start of the episode) can be tackled from the continual RL (cf.~\cite{nagabandi2018deep}) and meta-RL perspectives (cf.~\cite{finn2017model}), but both are not designed to handle the Markovian context case. Finally, our C-MDP can be seen as a POMDP. Recall that a POMDP is defined by a tuple $\cM_{po} = \{\cX, \cA, \cO, \cP_{\cX}, \cP_{\cO}, \cR, \gamma_d, p(\bmx_0)\}$, where $\cX$, $\cA$, $\cO$ are the state, action, observation spaces, respectively;  $\cP_{\cX}: \cX \times \cA \times \cX \rightarrow [0, 1]$ is the state transition probability; $\cP_{\cO}: \cX \times \cA \times \cO \rightarrow [0, 1]$ is the conditional observation probability; and $p(\bmx_0)$ is the initial state distribution. In our case, $\bmx = (\bms, z)$ and $\bmo = \bms$.

\section{Reinforcement Learning for Markov Processes with Markovian Context Evolution}
There are three main components in our algorithm: the HDP-C-MDP derivation, the model learning algorithm using probabilistic inference and the control algorithms. We firstly briefly comment on each on these components to give an overview of the results and then explain our main contributions to each. The detailed description of all parts of our approach can be found in Appendix.

In order to learn the model of the context transitions, we choose the Bayesian approach and we employ Hierarchical Dirichlet Processes (HDP) as priors for context transitions, inspired by time-series modeling and analysis tools reported by~\cite{fox2008nonparametric, fox2008hdp} (see also Appendix~\ref{app:hdp}). We improve the model by proposing a context spuriosity measure allowing for reconstruction of ground truth contexts.
We then derive a model learning algorithm using probabilistic inference. 
Having a model, we can take off-the-shelf algorithms such as a Model Predictive Control (MPC) approach using Cross-Entropy Minimization (CEM) (cf.~\cite{chua2018deep} and Appendix~\ref{appendix:cem}), or a policy-gradient approach Soft-actor critic (SAC) (cf.~\cite{haarnoja2018soft} and Appendix~\ref{appendix:sac}), which are both well-suited for model-based reinforcement learning. While MPC can be directly applied to our model, for policy-based control we first derive the representation of the optimal policy.

\textbf{Generative model: HDP-C-MDP.} Before presenting our probabilistic model, let us develop some necessary tools. \textit{A Dirichlet process (DP),} denoted as $\DP(\gamma, H)$, is characterized by a concentration parameter $\gamma$ and a base distribution $H(\lambda)$ defined over the parameter space $\Theta$. A sample $G$ from $\DP(\gamma, H)$ is a probability distribution satisfying $(G(A_1), ..., G(A_r)) \sim \Dir(\gamma H(A_1),...,\gamma H(A_r))$ for every finite measurable partition $A_1,...,A_r$ of $\Theta$, where $\Dir$ denotes the Dirichlet distribution. Sampling $G$ is often performed using the stick-breaking process~\citep{sethuraman1994constructive} and constructed by randomly mixing atoms independently and identically distributed samples $\bmtheta_k$ from~$H$:
\begin{equation}
\label{eq:high_dp}
    \nu_k \sim \Beta(1, \gamma), \quad \beta_k=\nu_k\prod_{i=1}^{k-1} (1 - \nu_i), \quad G = \sum_{k=1}^{\infty} \beta_k \delta_{\bmtheta_k}, 
\end{equation}
where $\delta_{\bmtheta_k}$ is the Dirac distribution at $\bmtheta_k$. 
We note that the stick-breaking procedure assigns progressively smaller values to $\beta_k$ for large $k$, thus encouraging a smaller number of meaningful atoms. 
\textit{The Hierarchical Dirichlet Process (HDP)} is a group of DPs sharing a base distribution, which itself is a sample from a DP: $G \sim \DP(\gamma, H)$, $ G_j \sim \DP(\alpha, G)$  for all $j=0, 1, 2,\dots$~\citep{teh2006hierarchical}. The distribution $G$ guarantees that all $G_j$ inherit the same set of atoms, i.e., atoms of $G$, while keeping the benefits of DPs in the distributions $G_j$. 
It can be shown that  $G_j = \sum_{k=0}^\infty \rho_{j k} \delta_{\bm\theta_k}$ for some $\rho_{j k}$ their sampling can be performed using another stick-breaking process~\citep{teh2006hierarchical}. We consider its modified version introduced by~\cite{fox2011bayesian}:
\begin{equation}
    \mu_{j k} \ | \ \alpha, \kappa, \beta \sim \Beta\left( \alpha \beta_{k}  + \kappa \tilde{\delta}_{j k}, \ \alpha + \kappa  - \left(\sum_{i =1}^{k}  \alpha \beta_i +  \kappa \tilde{\delta}_{j i}\right) \right), \,\,
    \rho_{j k} = \mu_{j k}\prod_{i=1}^{k-1} (1 - \mu_{j i}),  
    \label{eq:low_dp}
\end{equation}
where $k \ge 1$, $j \ge0$, $\tilde{\delta}_{j k}$ is the Kronecker delta, the parameter $\kappa \ge 0$, called the sticky factor, modifies the transition matrix priors encouraging self-transitions. The sticky factor serves as another measure of regularization reducing the average number of transitions. 
\begin{wrapfigure}{r}{0.5\textwidth}
\centering
\includegraphics[height=.21\textwidth]{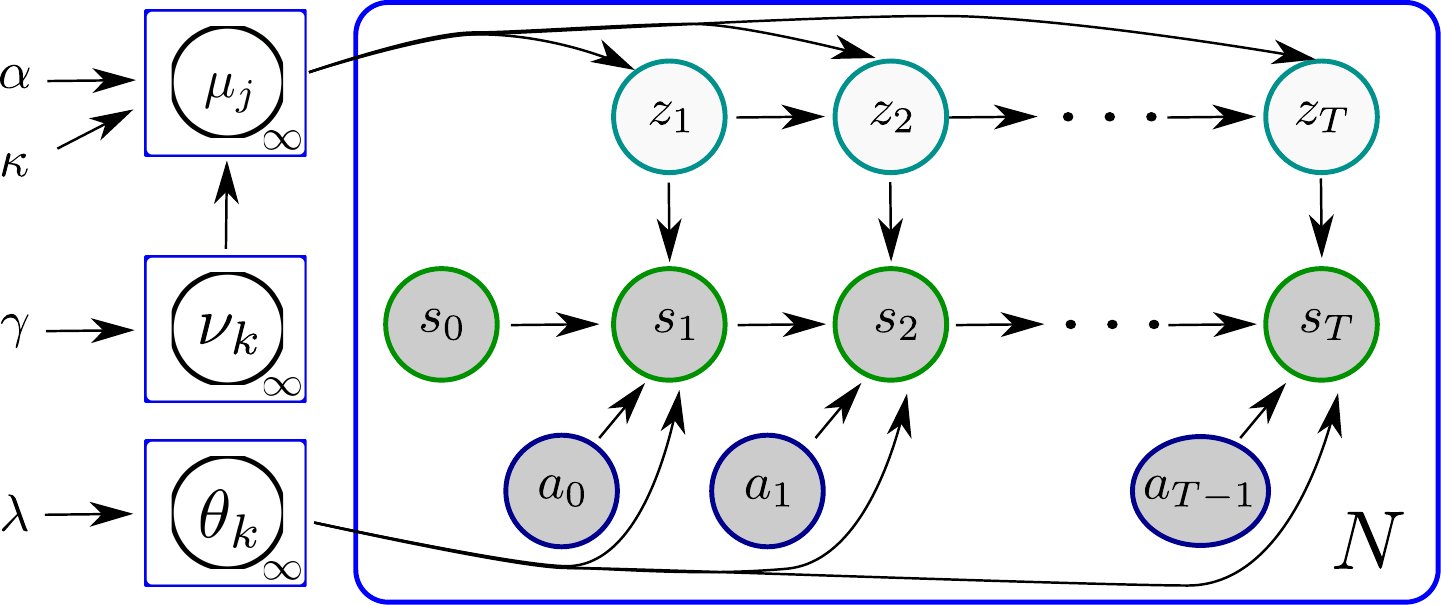}
\caption{HDP-C-MDP}
     \label{fig:gm}
\end{wrapfigure} 
In our case, the atoms $\{\bmtheta_k\}$ forming the context set 
$\widetilde \cC$ are sampled from $H(\lambda)$, while $\rho_{j k}$ are the parameters of the Hidden Markov Model: $\bm{\rho}_0$ is the initial context distribution and $\bm{\rho}_j$ are the rows in the transition matrix $\bmR$. Our probabilistic model is constructed in Equations~\ref{eq:transition},\ref{eq:high_dp},\ref{eq:low_dp} and illustrated in Figure~\ref{fig:gm} as a graphical model. We stress that the HDP in its stick-breaking construction assumes that $|\widetilde \cC|$ is infinite and countable. In practice, however, we make an approximation and set $|\widetilde \cC| = K$ with a  large enough~$K$.

\textit{Context Distillation.} HDP-C-MDP promotes a small number of meaningful contexts and some contexts will almost surely be spurious, i.e., we will transition to these contexts with a very small probability. While this probability is small we may still need to explicitly remove these spurious contexts. Here we propose a measure of context spuriosity and derive a distillation procedure removing these spurious contexts. As a spuriosity measure we will use the stationary distribution of the chain $\bmp^\infty$, which is computed by solving $\bmp^\infty = \bmp^\infty \bmR$. The distillation is then performed as follows:
if in stationarity the probability mass of a context is smaller than a threshold $\varepsilon_{\rm distil}$ then transitioning to this context is unlikely and it can be removed. We develop the corresponding distilled Markov chain in the following result, which we prove in Appendix~\ref{proof_thm:distillation}, while the distillation algorithm can be found in Appendix~\ref{appendix:context_distill}.
\begin{thm}\label{thm:distillation}
Consider a Markov chain $\bmp^{t} = \bmp^{t-1} \bmR$ with a stationary distribution $\bmp^\infty$ and distilled $\cI_1 = \{i | \bmp^\infty_i \ge \varepsilon_{\rm distil} \}$ and spurious $\cI_2 = \{i | \bmp^\infty_i < \varepsilon_{\rm distil}\}$ state indexes, respectively.  Then a) the matrix $\widehat \bmR =  \bmR_{\cI_1, \cI_1} +  \bmR_{\cI_1, \cI_2} (\bmI -  \bmR_{\cI_2, \cI_2} )^{-1} \bmR_{\cI_2, \cI_1}$ is a valid probability transition matrix; b) the Markov chain $\widehat \bmp^{t}=  \widehat \bmp^{t-1}  \widehat \bmR$ is such that its stationary distribution $\widehat \bmp^{\infty}\propto\bmp^{\infty}_{\cI_1}$.
\end{thm}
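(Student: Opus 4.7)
The plan is to interpret $\widehat \bmR$ as the transition matrix of the Markov chain obtained by observing the original chain only at times when it visits $\cI_1$, treating excursions into $\cI_2$ as transient. With this interpretation both claims become essentially algebraic identities derived from block manipulation of $\bmR$.

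For part (a), I would first verify that $(\bmI - \bmR_{\cI_2, \cI_2})^{-1}$ exists and has non-negative entries. Under the natural assumption that the chain is irreducible (so every state in $\cI_2$ reaches $\cI_1$ with positive probability), $\bmR_{\cI_2, \cI_2}$ is sub-stochastic with spectral radius strictly less than one; hence the Neumann series $\sum_{k \ge 0} \bmR_{\cI_2, \cI_2}^k$ converges to the inverse and is entrywise non-negative. Non-negativity of $\widehat \bmR$ then follows since it is a sum and product of non-negative matrices. For the row-sum condition, let $\bmone_1$ and $\bmone_2$ denote all-ones column vectors of sizes $|\cI_1|$ and $|\cI_2|$ respectively. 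Stochasticity of $\bmR$ gives $\bmR_{\cI_2, \cI_1} \bmone_1 = (\bmI - \bmR_{\cI_2, \cI_2}) \bmone_2$ and $\bmR_{\cI_1, \cI_1} \bmone_1 = \bmone_1 - \bmR_{\cI_1, \cI_2} \bmone_2$. Substituting these into $\widehat \bmR \bmone_1$, the factor $(\bmI - \bmR_{\cI_2, \cI_2})^{-1} (\bmI - \bmR_{\cI_2, \cI_2})$ telescopes to the identity, and the remaining terms collapse to $\bmone_1$.

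For part (b), the cleanest route is to simply verify that $\bmp^\infty_{\cI_1}$ is a left eigenvector of $\widehat \bmR$ with eigenvalue one, and invoke uniqueness of the stationary distribution (under irreducibility of $\widehat \bmR$, which follows from irreducibility of $\bmR$ restricted to the recurrent part). Splitting $\bmp^\infty = \bmp^\infty \bmR$ into its $\cI_1$ and $\cI_2$ blocks gives the two equations $\bmp^\infty_{\cI_1} = \bmp^\infty_{\cI_1} \bmR_{\cI_1, \cI_1} + \bmp^\infty_{\cI_2} \bmR_{\cI_2, \cI_1}$ and $\bmp^\infty_{\cI_2} (\bmI - \bmR_{\cI_2, \cI_2}) = \bmp^\infty_{\cI_1} \bmR_{\cI_1, \cI_2}$. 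Solving the second for $\bmp^\infty_{\cI_2}$ and substituting into the first yields exactly $\bmp^\infty_{\cI_1} = \bmp^\infty_{\cI_1} \widehat \bmR$. After renormalization so that the entries sum to one, this gives $\widehat \bmp^\infty \propto \bmp^\infty_{\cI_1}$.

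The main (and really the only) obstacle is establishing invertibility of $\bmI - \bmR_{\cI_2, \cI_2}$ and entrywise non-negativity of its inverse; everything else is bookkeeping. This step is not automatic from the statement as given, so I would add a brief remark that the result is used in the regime where the original chain is irreducible (or at least where the recurrent class intersects $\cI_1$), which is the only setting in which $\bmp^\infty$ and the distillation are themselves meaningful. If one wanted a fully unconditional statement, one could alternatively define $(\bmI - \bmR_{\cI_2, \cI_2})^{-1}$ via the Neumann series and carry out the argument in terms of taboo probabilities, which makes non-negativity immediate and only requires that the series converges.
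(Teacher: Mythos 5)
Your proposal is correct and follows essentially the same route as the paper's proof: block-decompose $\bmR\bmone=\bmone$ and $\bmp^\infty\bmR=\bmp^\infty$, verify $\widehat\bmR\bmone_{\cI_1}=\bmone_{\cI_1}$ and $\bmp^\infty_{\cI_1}\widehat\bmR=\bmp^\infty_{\cI_1}$ by substitution, with the only real work being invertibility and entrywise nonnegativity of $(\bmI-\bmR_{\cI_2,\cI_2})^{-1}$. The paper handles that step via a Perron--Frobenius comparison of spectral radii and M-matrix theory under the same (almost-sure) irreducibility assumption you flag, which is equivalent to your sub-stochastic/Neumann-series argument.
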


\textbf{Model Learning using Probabilistic Inference.} 
We aim to find a variational distribution $q(\bm\nu, \bm\mu, \bm\theta)$ to approximate the true posterior $p(\bm\nu, \bm\mu, \bm\theta | \cD)$, for a dataset $\cD = \{(\bms^i,\bma^i)\}_{i=1}^N$, where $\bms^i=\{\bms^i_t\}_{t=-1}^T$ and $\bma^i=\{\bma^i_t\}_{t=-1}^T$ are the state and action sequences in the $i$-th trajectory. We minimize $\KL{q(\bm\nu, \bm\mu, \bm\theta)}{p(\bm\nu, \bm\mu, \bm\theta | \cD)}$, or equivalently, maximize the evidence lower bound (ELBO):
\begin{equation}
\label{eq:elbo}
\ELBO = \E_{q(\bm\mu, \bm\theta)} \left[\sum_{i=1}^N \log p(\bms^i | \bma^i, \bm\mu, \bm\theta)\right] - \KL{q(\bm\nu, \bm\mu, \bm\theta)}{p(\bm\nu, \bm\mu, \bm\theta)}.
\end{equation}

The variational distribution above involves infinite-dimensional random variables $\bm\nu, \bm\mu, \bm\theta$. To reach a tractable solution, we assume $|\widetilde \cC| = K$ and exploit the standard mean-field assumption~\citep{blei2017variational} and the truncated variational distribution similarly to~\cite{blei2006variational, hughes2015reliable, bryant2012truly} as follows:
\begin{align}
        &q(\bm\nu, \bm\mu, \bm\theta) = q(\bm\nu)q(\bm\mu)q(\bm\theta), \ 
        q(\bm\theta|\hat{\bm\theta}) = \prod_{k=1}^{K} \delta(\bm\theta_k|\hat{\bm\theta}_k), \ q(\bm\nu|\hat{\bm\nu}) = \prod_{k=1}^{K-1} \delta(\nu_k|\hat{\nu}_{k}), \  q(\nu_K=1) = 1, \notag
        \\
        &q(\bm\mu|\hat{\bm\mu}) = \prod_{j=0}^{K} \prod_{k=1}^{K-1} \Beta\left(\mu_{j k} \bigg| \hat{\mu}_{j k}, \hat{\mu}_{j} - \sum_{i=1}^k \hat{\mu}_{j i}\right), \quad q(\mu_{j K}=1) = 1, \label{eq:tvi}
\end{align}
where hatted symbols represent free parameters. For $\bm\theta$ and $\bm\nu$, we seek a MAP point estimate instead of a full posterior (see Appendix~\ref{appendix:vi_just} for a discussion on our design choices). Random variables not shown in the truncated variational distribution are conditionally independent of data, and thus can be discarded from the problem. 
We maximize ELBO using stochastic gradient ascent, while the gradient computations are performed using the following two techniques: (a) we compute the exact context posterior using a forward-backward message passing algorithm, (b) we use implicit reparametrized gradients to differentiate with respect to parameters of variational distributions~\citep{figurnov2018implicit}. We present the detailed derivations in Appendix~\ref{appendix:vi}. 
We also can perform context distillation during training as discussed in Appendix~\ref{appendix:context_distill}. While adding some computational complexity, this procedure acts as a regularization for model learning as we show in our experiments.

\textbf{Representation of the optimal policy.} 
First, we notice that the model in Equation~\ref{eq:transition} is a POMDP, which we get by setting $\bmx_{t} := ( z_{t},  \bms_{t})$ and $\bmo_{t} := \bms_t$. In the POMDP case, we cannot claim that $\bmo_{t+1}$ depends \textit{only} on $\bmo_t$ and $\bma_t$. Therefore the Bellman dynamic programming principle does not hold for these variables and solving the problem is more involved. In practice, one constructs \emph{the belief state}  $\bmb_t = p(\bmx_t | \bmI_t^C)$~\citep{astrom1965optimal}, where $\bmI_t^C = \{\bmb_0, \bmo_{\leq t}, \bma_{<t}\}$ is called the information state and is used to compute the optimal policy. Since the belief state is a distribution, it is generally costly to estimate in continuous observation or state spaces. In our case, estimating the belief is tractable, since the belief of the state $\bms_t$ is the state itself (as the state $\bms_t$ is observable) and the belief of $z_t$, which we denote as $\bmb_t^z$, is a vector of a fixed length at every time step (as $z_t$ is discrete). We have the following result with the proof in Appendix~\ref{proof_thm:policy}.
\begin{thm} \label{thm:policy}
a) The belief of $z$ can be computed as $p(z_{t+1} | \bmI_t^C) =\bmb^{z}_{t+1}$, where $(\bmb^{z}_{t+1})_i \propto \bmN_i = \sum_j  p(\bms_{t+1}|\bms_t, \bmtheta_i,\bma_t) \rho_{j i} (\bmb^{z}_t)_j$, where  $(\bmb_{t}^z)_i$ are the entries of $\bmb^{z}_t$;
b) the optimal policy can be computed as $\pi(\bms, \bmb^{z}) = \argmax_{\bma} \bmQ(\bms, \bmb^{z}, \bma)$, where the value function satisfies the dynamic programming principle $\bmQ(\bms_t, \bmb_t^{z}, \bma_t) = \bmr(\bms_t, \bmb_t^{z}, \bma_t) + \gamma \int\sum_i \bmN_i \max_{\bma_{t+1}} \bmQ(\bms_{t+1}, \bmb_{t+1}^{z}, \bma_{t+1}) \ d \bms_{t+1}$.
\end{thm}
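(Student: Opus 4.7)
The plan for part (a) is to derive the one-step Bayes filter for the hidden context of the HDP-C-MDP. I would begin by writing $p(z_{t+1}=i \mid \bmI_{t+1}^C)$, expand the information state to include the newly observed $\bms_{t+1}$, and apply Bayes' rule to split off the likelihood of $\bms_{t+1}$:
\begin{equation*}
p(z_{t+1}=i \mid \bms_{\le t+1}, \bma_{\le t}, \bmb_0) \propto p(\bms_{t+1} \mid z_{t+1}=i, \bms_{\le t}, \bma_{\le t}, \bmb_0)\, p(z_{t+1}=i \mid \bms_{\le t}, \bma_{\le t}, \bmb_0).
\end{equation*}
The first factor reduces to $p(\bms_{t+1} \mid \bms_t, \bmtheta_i, \bma_t)$ by the conditional independence encoded in Equation~\ref{eq:transition} (given $z_{t+1}$, $\bms_{t+1}$ depends only on $\bms_t, \bma_t$). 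The second factor, the one-step predictive prior, is obtained by marginalizing $z_t$ and invoking the Markov property of the context chain together with the graphical structure in Figure~\ref{fig:gm}, giving $\sum_j \rho_{ji}(\bmb^z_t)_j$. Collecting terms and noting that the $\bms$-likelihood is constant in $j$ yields exactly $(\bmb^z_{t+1})_i \propto \sum_j p(\bms_{t+1}\mid \bms_t,\bmtheta_i,\bma_t)\,\rho_{ji}\,(\bmb^z_t)_j = \bmN_i$, as claimed.

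For part (b), the strategy is to reduce the POMDP induced by the HDP-C-MDP to an equivalent fully-observable Markov Decision Process on the belief space, and then apply the standard Bellman optimality principle in the style of Astrom (1965). The key observation is that because $\bms_t$ is directly observed and only $z_t$ is hidden, the belief state simplifies to the pair $(\bms_t, \bmb^z_t)$, and by part (a) the map $(\bms_t, \bmb^z_t, \bma_t, \bms_{t+1}) \mapsto \bmb^z_{t+1}$ is deterministic. I would first show that $(\bms_t, \bmb^z_t)$ is a sufficient statistic for the information state $\bmI_t^C$ with respect to the predictive distribution of future rewards, i.e., that conditional on $(\bms_t, \bmb^z_t, \bma_t)$ the next state $\bms_{t+1}$ has density
\begin{equation*}
p(\bms_{t+1}\mid \bms_t, \bmb^z_t, \bma_t) \;=\; \sum_i p(\bms_{t+1}\mid\bms_t,\bmtheta_i,\bma_t)\sum_j \rho_{ji}(\bmb^z_t)_j \;=\; \sum_i \bmN_i,
\end{equation*}
so the process $\{(\bms_t,\bmb^z_t)\}$ is Markov under any policy that is a function of $(\bms_t,\bmb^z_t)$.

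With this belief MDP in hand, I would then invoke the standard result that the optimal value function depends only on the sufficient statistic and satisfies the Bellman equation on the belief state. Writing $\bmQ(\bms,\bmb^z,\bma)$ as the optimal action-value for the belief MDP and substituting the expression for $p(\bms_{t+1}\mid \bms_t,\bmb^z_t,\bma_t)$ gives
\begin{equation*}
\bmQ(\bms_t,\bmb^z_t,\bma_t) = \bmr(\bms_t,\bmb^z_t,\bma_t) + \gamma_d \int \Bigl(\sum_i \bmN_i\Bigr)\,\max_{\bma_{t+1}} \bmQ(\bms_{t+1},\bmb^z_{t+1},\bma_{t+1})\, d\bms_{t+1},
\end{equation*}
where $\bmb^z_{t+1}$ is the deterministic update from part (a). The optimal policy is greedy in $\bmQ$, which is exactly the statement in the theorem.

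I expect the main obstacle to be the rigorous justification of the sufficiency of $(\bms_t,\bmb^z_t)$ and the corresponding Markov property under measurable policies; this is the classical content of the belief-state reduction, and in the write-up I would appeal to \cite{astrom1965optimal} rather than reproving measurability conditions. A secondary care-point is handling the abuse of notation whereby $p(z_{t+1}\mid\bmI_t^C)$ in the statement should strictly be interpreted as the filter after observing $\bms_{t+1}$, which the Bayes derivation above makes explicit.
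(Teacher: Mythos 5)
Your proposal is correct and follows essentially the same route as the paper: part (a) is the same Bayes-filter derivation the paper carries out in its Lemma on belief updates, and part (b) is the same belief-MDP reduction (sufficiency of $(\bms_t,\bmb^z_t)$, computing $p(\bms_{t+1}\mid\bms_t,\bmb^z_t,\bma_t)=\sum_i\bmN_i$, then the Bellman equation in belief space) that the paper develops following \cite{astrom1965optimal,hauskrecht2000value,porta2006point}. Your closing remark about interpreting $p(z_{t+1}\mid\bmI_t^C)$ as the filter after observing $\bms_{t+1}$ matches how the paper itself writes the update, so no gap remains.
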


\textbf{Computational framework.} Algorithm~\ref{algo:meta} summarizes our approach and is based on the standard model-based RL frameworks (e.g.,~\cite{Pineda2021MBRL}). Effectively, we alternate between model updates and policy updates. For the policy updates we relabel (recompute) the beliefs for the historical transition data. As MPC methods compute the sequence of actions based solely on the model such relabeling is not required.

\begin{algorithm}[ht]
\SetAlgoLined
\textbf{Input:} $\varepsilon_{\rm distill}$ - distillation threshold, $N_{\rm warm}$ - number of trajectories for warm start, $N_{\rm traj}$ - number of newly collected trajectories per epoch, $N_{\rm epochs}$ - number of training epochs, {\sc agent} - policy gradient or MPC agent\\
Initialize {\sc agent} with {\sc random agent}, $\cD = \emptyset$; \\
\For{$i = 1,\dots, N_{\rm epochs}$}{
    Sample $N_{\rm traj}$ ($N_{\rm warm}$ if $i = 1$) trajectories from the environment with {\sc agent};\\
    Set $\cD_{\rm new} = \{(\bms^i,\bma^i)\}_{i=1}^{N_{\rm traj}}$, where $\bms^i=\{\bms^i_t\}_{t=-1}^T$ and $\bma^i=\{\bma^i_t\}_{t=-1}^T$ are the state and action sequences in the $i$-th trajectory. Set $\cD = \cD \cup \cD_{\rm new}$; \\
    Update generative model parameters by gradient ascent on ELBO in Equation~\ref{eq:elbo}; \\
    Perform context distillation with $\varepsilon_{\rm distill}$; \\
    \If{{\sc agent} is {\sc policy}}{
        Sample trajectories for policy update from $\cD$;\\
        Recompute the beliefs using the model for these trajectories;\\
        Update policy parameters
    }
}
\Return {\sc agent}
\caption{Learning to Control HDP-C-MDP}\label{algo:meta}
\end{algorithm}

\textbf{Performance gain for observable contexts.} It is not surprising that observing the ground truth of the contexts should improve the maximum expected return. In particular, even knowing the ground truth context model we can correctly estimate the context $z_{t+1}$ only \emph{a posteriori}, i.e., after observing the next state $\bms_{t+1}$. Therefore at every context switch we can mislabel it with a high probability. This leads to a performance loss, which the following result quantifies using the value functions. We have the following result with the proof in Appendix~\ref{si_ss:performance_gain}.
\begin{thm} \label{thm:performance}
Assume we know the true transition model of the contexts and states and consider two settings: we observe the ground truth $z_t$ and we estimate it using $\bmb_{t}^z$. Assume we computed the optimal model-based policy $\pi(\cdot | \bms_t, \bmb_{t}^z)$ with the return $\cR$ and the optimal ground-truth policy  $\pi_{\rm gt}(\cdot | \bms_t, z_{t+1})$ with the corresponding optimal value functions $V_{\rm gt}(\bms, z)$ and $Q_{\rm gt}(\bms,z, \bma)$, then:
\begin{equation*}
  \E_{z_1, \bms_0} V_{\rm gt}(\bms_0, z_1) - \cR \ge    \E_{\tau,\bma^{\rm gt}_{t_m}\sim\pi_{\rm gt}, \bma_{t_m}\sim\pi}  \sum\limits_{m=1}^M \gamma^{t_m} (Q(\bms_{t_m}, z_{t_m+1}, \bma^{\rm gt}_{t_m}) - Q(\bms_{t_m}, z_{t_m+1}, \bma_{t_m})),
\end{equation*}
where $M$ is the number of misidentified context switches in a trajectory $\tau$.
\end{thm}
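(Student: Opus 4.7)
The plan is to apply a performance-difference-lemma (PDL) style telescoping argument and then restrict the resulting sum to the subset of time steps at which $\pi$ errs. Because $\pi_{\rm gt}$ is optimal in the oracle MDP with state $(\bms,z)$, its value functions satisfy $V_{\rm gt}(\bms,z)=Q_{\rm gt}(\bms,z,\pi_{\rm gt}(\bms,z))$ and the Bellman recursion $Q_{\rm gt}(\bms,z,\bma)=r(\bms,z,\bma)+\gamma\,\E_{z',\bms'\mid \bms,z,\bma}[V_{\rm gt}(\bms',z')]$. Writing $\cR=\E_{\tau\sim\pi}\sum_{t}\gamma^{t} r(\bms_t,z_t,\bma_t)$ along the trajectory generated by the belief policy $\pi$, I would add and subtract $\gamma^{t}\,Q_{\rm gt}(\bms_t,z_{t+1},\bma_t)$ inside this expectation and unfold $V_{\rm gt}$ step by step through the Bellman equation. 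The telescoping yields the identity
\[
\E_{z_1,\bms_0}V_{\rm gt}(\bms_0,z_1)-\cR \;=\; \E_{\tau,\bma^{\rm gt}_t\sim\pi_{\rm gt},\bma_t\sim\pi}\sum_{t\ge 0}\gamma^{t}\bigl[Q_{\rm gt}(\bms_t,z_{t+1},\bma^{\rm gt}_t)-Q_{\rm gt}(\bms_t,z_{t+1},\bma_t)\bigr],
\]
which is the classical PDL transplanted to our information-augmented setting.

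Every summand on the right-hand side is non-negative because $\bma^{\rm gt}_t=\argmax_{\bma}Q_{\rm gt}(\bms_t,z_{t+1},\bma)$. Dropping all terms except those indexed by the misidentified context switches $\{t_m\}_{m=1}^M$ can therefore only decrease the sum, and this immediately yields the claimed lower bound. The complementary interpretation that the retained $M$ terms essentially carry the whole performance gap follows from Theorem~\ref{thm:policy}(a): at any time step outside a misidentified switch, the Bayesian belief $\bmb^{z}_t$ (propagated by the forward-backward rule and reinforced by stickiness of the chain on contexts) concentrates on the correct $z_{t+1}$, so $\pi(\bms_t,\bmb^{z}_t)=\pi_{\rm gt}(\bms_t,z_{t+1})$ and the corresponding summand vanishes.

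The main technical obstacle is making the PDL telescoping rigorous in the mixed-information regime, where $\pi$ depends on the belief $\bmb^{z}_t$, a function of the entire observed history, while $V_{\rm gt}$ is defined only on the oracle pair $(\bms,z)$. I would handle this by working with the joint process $(\bms_t,z_t,\bmb^{z}_t)$ induced by playing $\pi$ in the true environment. Since $V_{\rm gt}$ is measurable in $(\bms,z)$ alone, the Bellman-equation expectation over the next $(\bms',z')$ commutes with the conditional independence of the belief update given $(\bms_t,z_t,\bma_t)$, so the standard telescoping carries through verbatim. Non-negativity of the remaining summands then delivers the stated inequality.
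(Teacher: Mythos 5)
Your proposal is correct and arrives at the stated bound, but it follows a cleaner and somewhat different route than the paper. The paper's proof conditions on a realization of the context sequence and argues via a ``best case'' scenario: it assumes the belief policy produces the same action distribution as $\pi_{\rm gt}$ at every step except at the misidentified switch times, splits the return around each switch time $t_m$, and uses the Bellman equation for $Q_{\rm gt}$, $V_{\rm gt}$ to show that in this best case the gap $\E V_{\rm gt}-\cR$ \emph{equals} the sum of $Q_{\rm gt}$-differences at the switches; the inequality then comes from the (informally stated) observation that any other behaviour of $\pi$ can only enlarge the gap. You instead establish the exact performance-difference identity $\E_{z_1,\bms_0}V_{\rm gt}(\bms_0,z_1)-\cR=\E_{\tau\sim\pi}\sum_{t}\gamma^{t}\E_{\bma^{\rm gt}_t\sim\pi_{\rm gt}}\left[Q_{\rm gt}(\bms_t,z_{t+1},\bma^{\rm gt}_t)-Q_{\rm gt}(\bms_t,z_{t+1},\bma_t)\right]$ over the oracle MDP with state $(\bms_t,z_{t+1})$, note every summand is nonnegative by greediness of $\pi_{\rm gt}$ with respect to $Q_{\rm gt}$, and drop all terms except those at the misidentified switches. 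This buys you two things the paper's argument lacks: the telescoping is valid for the actual history-dependent belief policy without any assumption that it matches $\pi_{\rm gt}$ off the switch times (you correctly handle this by observing that the one-step conditional expectation given $(\bms_t,z_{t+1},\bma_t)$ reproduces $Q_{\rm gt}$ regardless of how $\bma_t$ was generated), and the resulting inequality is rigorous rather than resting on a best-case heuristic; what the paper's version buys in exchange is the explicit interpretation that the bound is tight exactly when the belief policy only errs at switches, which in your write-up appears only as the (correctly flagged, non-essential) concentration remark. Two cosmetic points: your expression for $\cR$ uses $r(\bms_t,z_t,\bma_t)$ while the oracle state pairing is $(\bms_t,z_{t+1})$ (the paper uses $\bmr(\bms_t,\bma_t,\bms_{t+1})$), so the indices should be aligned, and the telescoping needs the usual $\gamma^{T}V_{\rm gt}\to 0$ (or finite-horizon) remark; neither affects the substance.
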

\section{Experiments}
In this section, we demonstrate that the HDP offers an effective prior for model learning, while the distillation procedure refines the model and can regulate the context set complexity. We also explain why state-of-the-art methods from continual RL, meta-RL and POMDP literature can fail in our setting. We finally show that our algorithm can be adapted to high dimensional environments. We delegate several experiments to Appendix due to space limitations. We show that we can learn additional unseen contexts without relearning the whole model from scratch. We also illustrate how the context distillation during training can be used to merge contexts in an unsupervised manner thus reducing model complexity. We finally show that our model can generalize to non-Markovian and state dependent context transitions.  \\
We choose \textbf{the switching process} to be a chain, however, we enforce a cool-off period, i.e, the chain cannot transition to a new state until the cool-off period has ended. This makes the context switching itself a non-stationary MDP. This is done to avoid switches at every time step, but also to show that our method is not limited to the stationary Markov context evolution.\\
{\bf Control Baselines:} (1) SAC algorithm with access to the ground truth context information (one-hot-encoded variable $z_t$) denoted as {\it FI-SAC}; (2) SAC algorithm with no context information denoted as {\it NI-SAC} (3) a continual RL algorithm for contextual MDPs~\citep{xu2020task}, where a Gaussian process is used to learn the dynamics while identifying and labeling the data with contexts, which is denoted as {\it GPMM}; (4) A POMDP approach, where the context set cardinality is known and the belief is estimated using an RNN, while PPO~\citep{schulman2017proximal, ppo-pytroch} is used to update the policy. We denote this approach as {\it RNN-PPO}. \\
{\bf Modeling Prior Baselines:} (1) a model with sticky Dirichlet priors $\bm\rho_j \sim \Dir(\alpha_k=\alpha/K + \kappa \tilde\delta_{j k})$; (2) a model which removes all priors and conducts a maximum-likelihood (MLE) learning. All the other relevant experimental details (including hyper-parameters) are provided in Appendix~\ref{appendix:exp-details}.

\begin{figure}[ht]
     \centering
     \begin{subfigure}[b]{0.23\textwidth}
     \centering
     \includegraphics[width=0.99\textwidth]{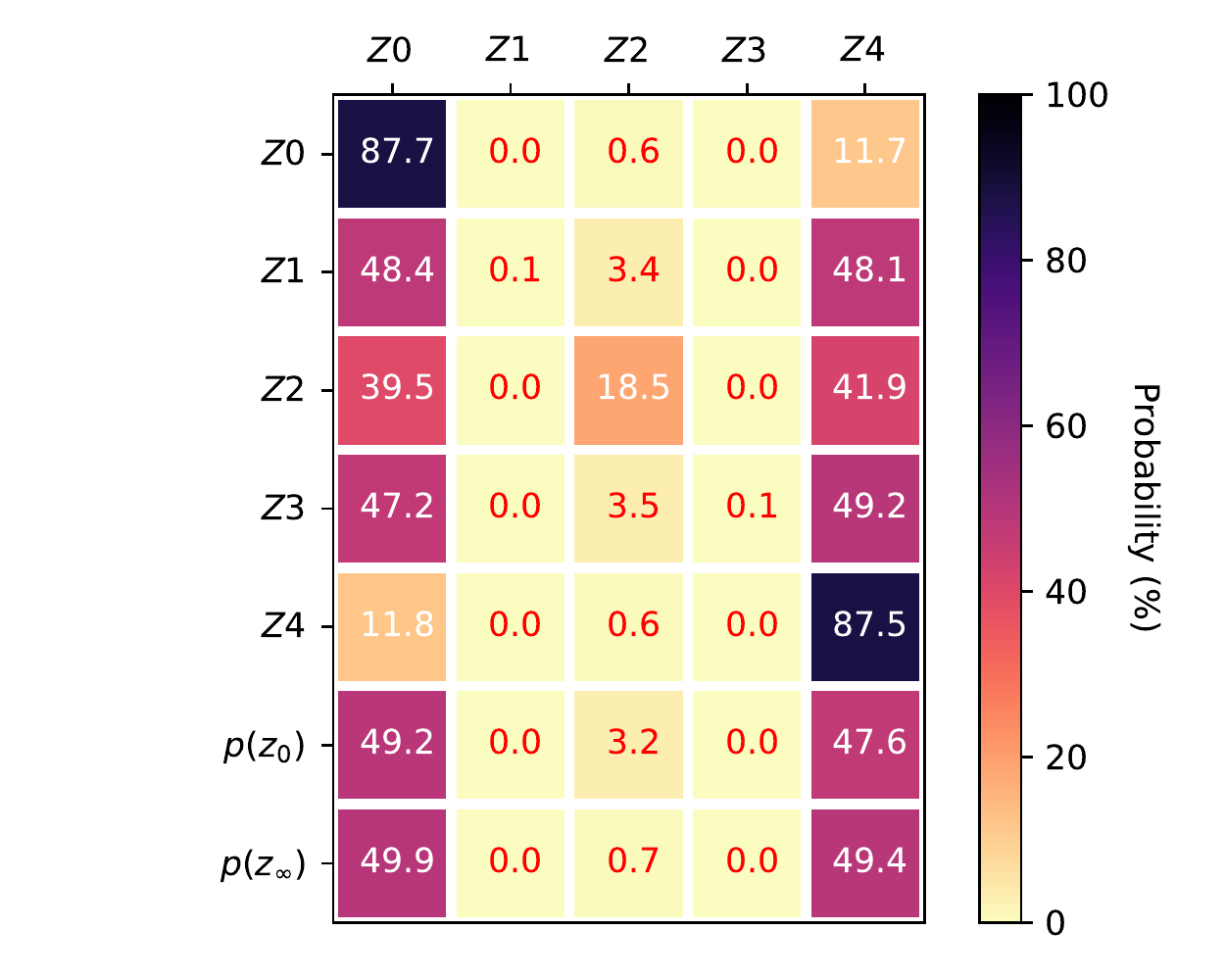} 
     \caption{HDP}
     \label{fig:hdp_rho}
     \end{subfigure}
     \begin{subfigure}[b]{0.23\textwidth}     \centering
     \includegraphics[width=0.99\textwidth]{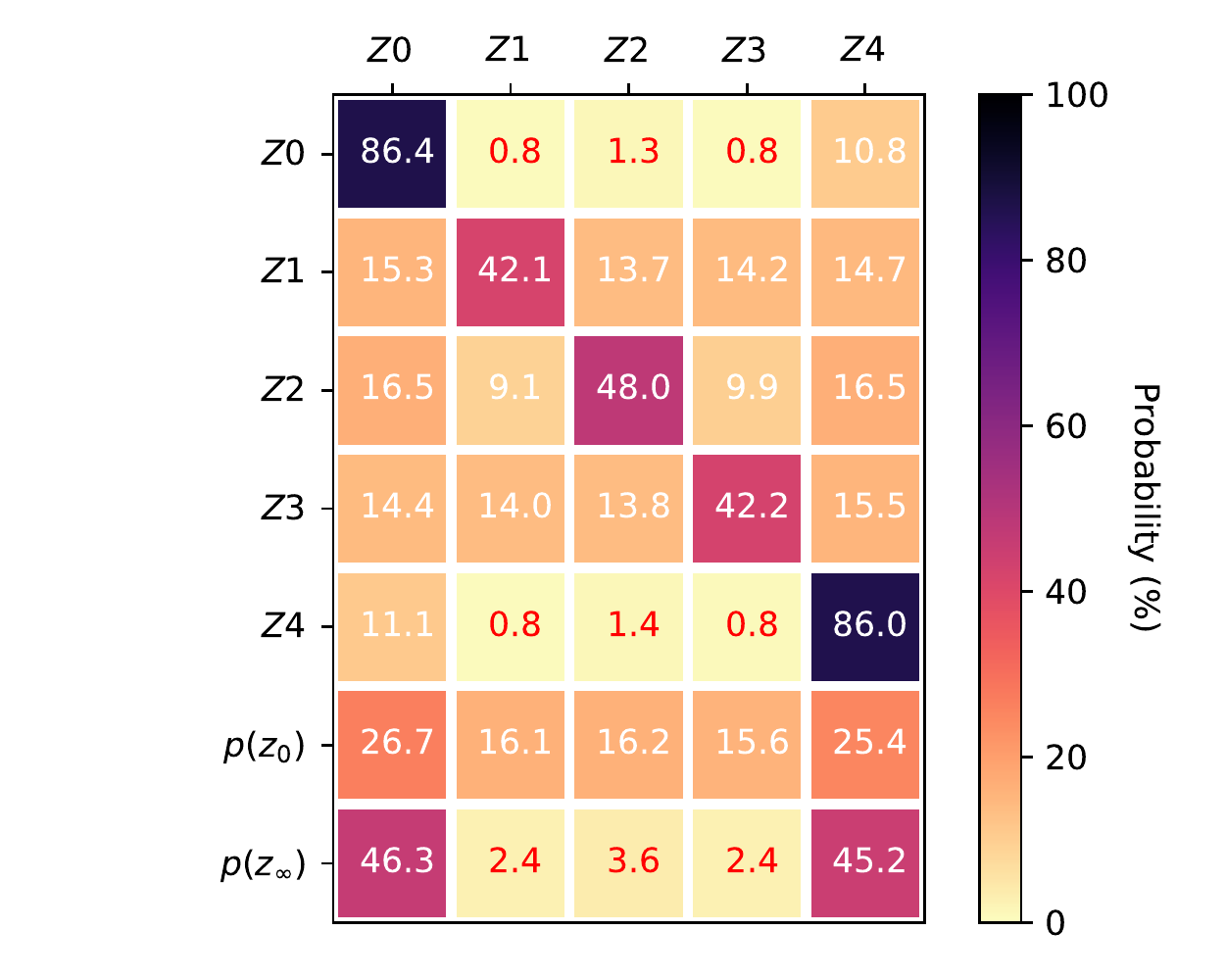} 
     \caption{Dirichlet}
     \label{fig:dir_rho}
     \end{subfigure}
     \begin{subfigure}[b]{0.23\textwidth}     \centering
     \includegraphics[width=0.99\textwidth]{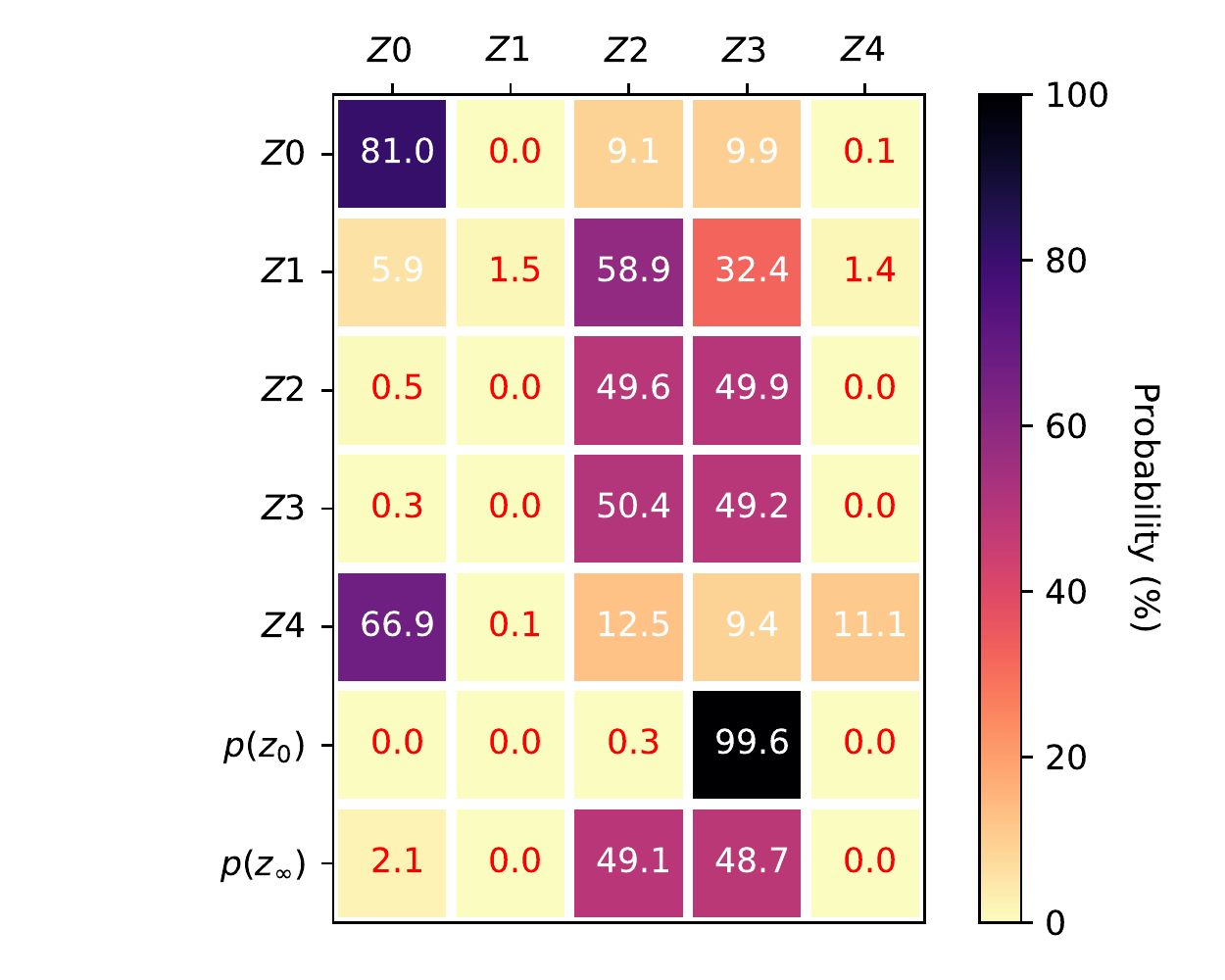} 
     \caption{MLE}
     \label{fig:mle_rho}
     \end{subfigure}    
     \begin{subfigure}[b]{0.23\textwidth}     \centering
     \includegraphics[width=0.99\textwidth]{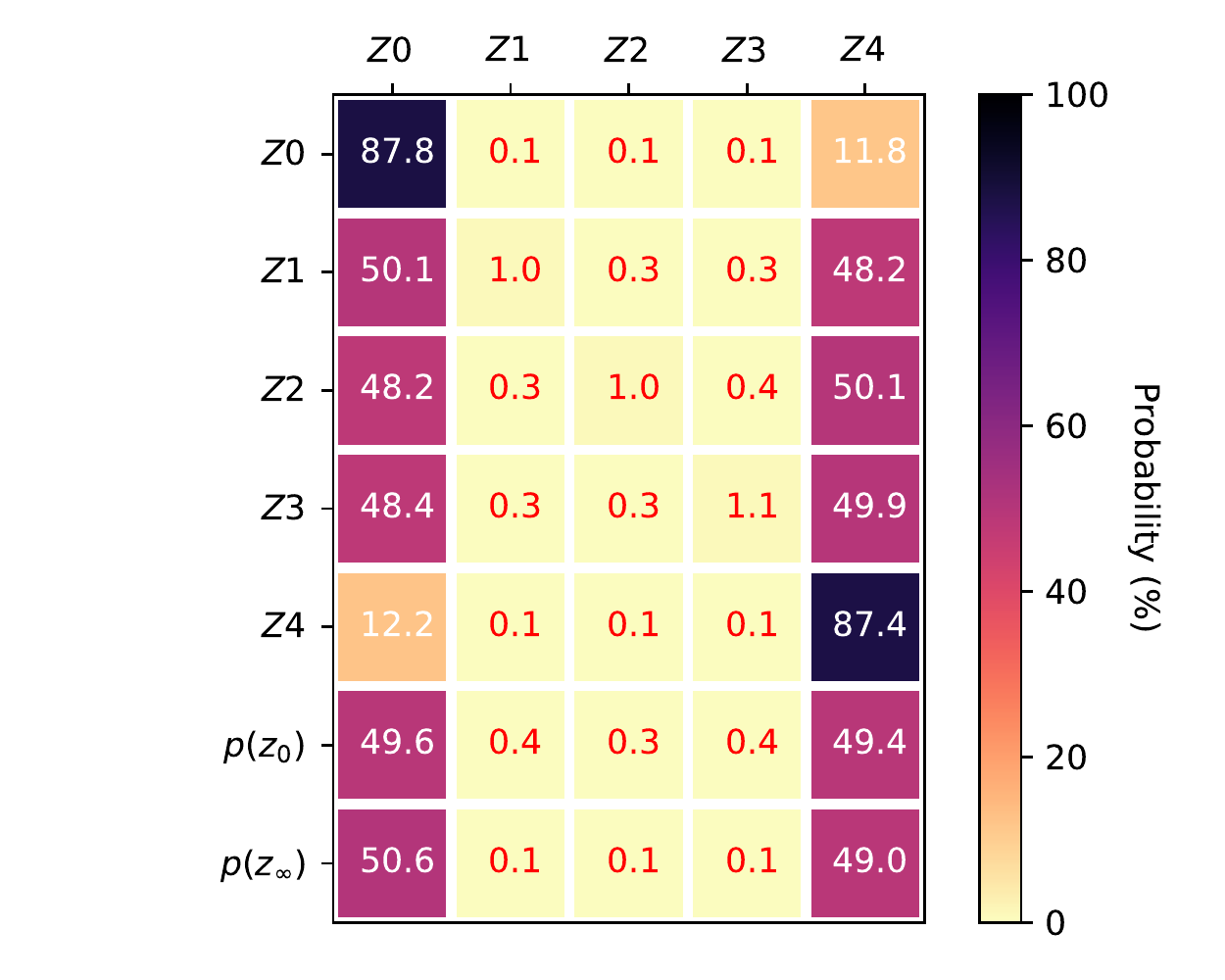} 
     \caption{HDP w distillation}
     \label{fig:hdp_rho_distilled}
     \end{subfigure}
        \caption{Cart-Pole Swing-Up. Transition matrices, initial $p(z_0)$ and stationary $p(z_\infty)$ distributions of the learned context models for Result A. $Z0$ -- $Z4$ stand for the learned contexts.}
        \label{fig:rho}
\end{figure}
\begin{figure}
\centering
\begin{subfigure}[b]{0.3\textwidth}
\centering\includegraphics[width=0.99\textwidth]{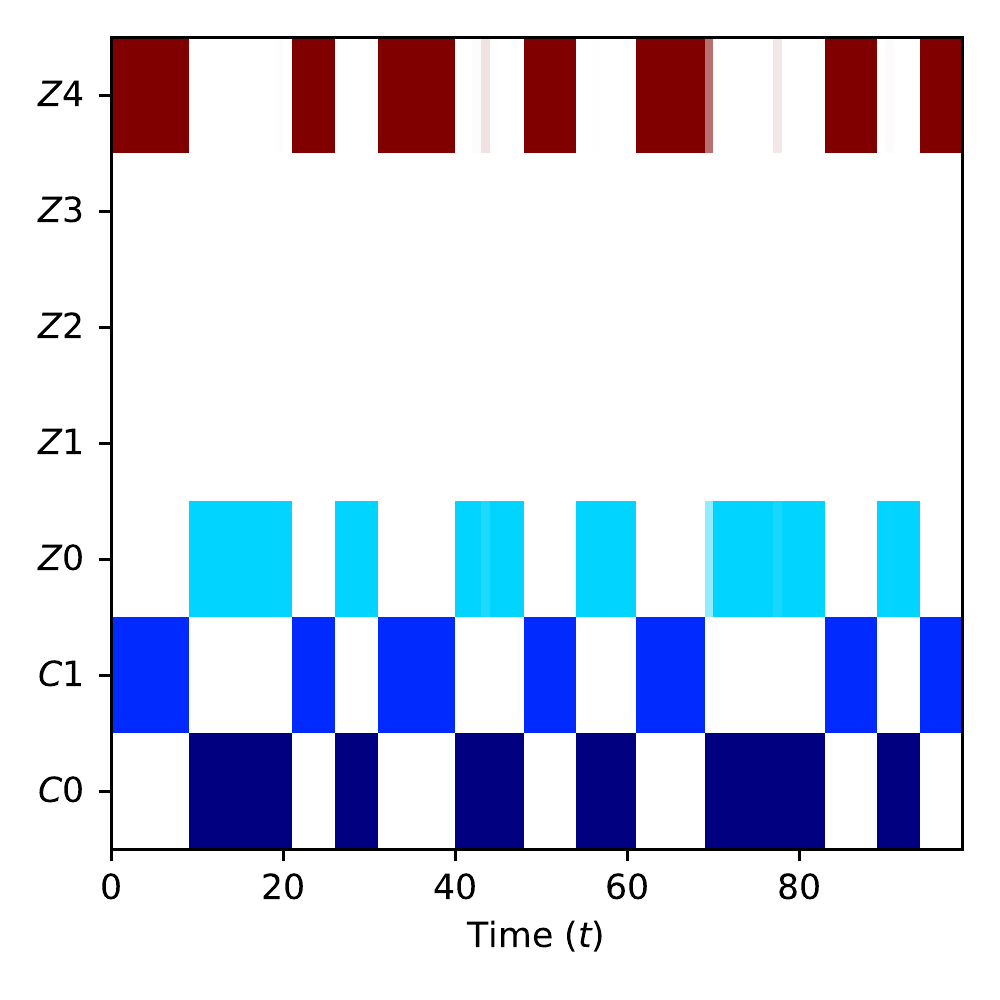}\caption{HDP} \label{fig:hdp_z_seq}
\end{subfigure}
\begin{subfigure}[b]{0.3\textwidth}
\centering\includegraphics[width=0.99\textwidth]{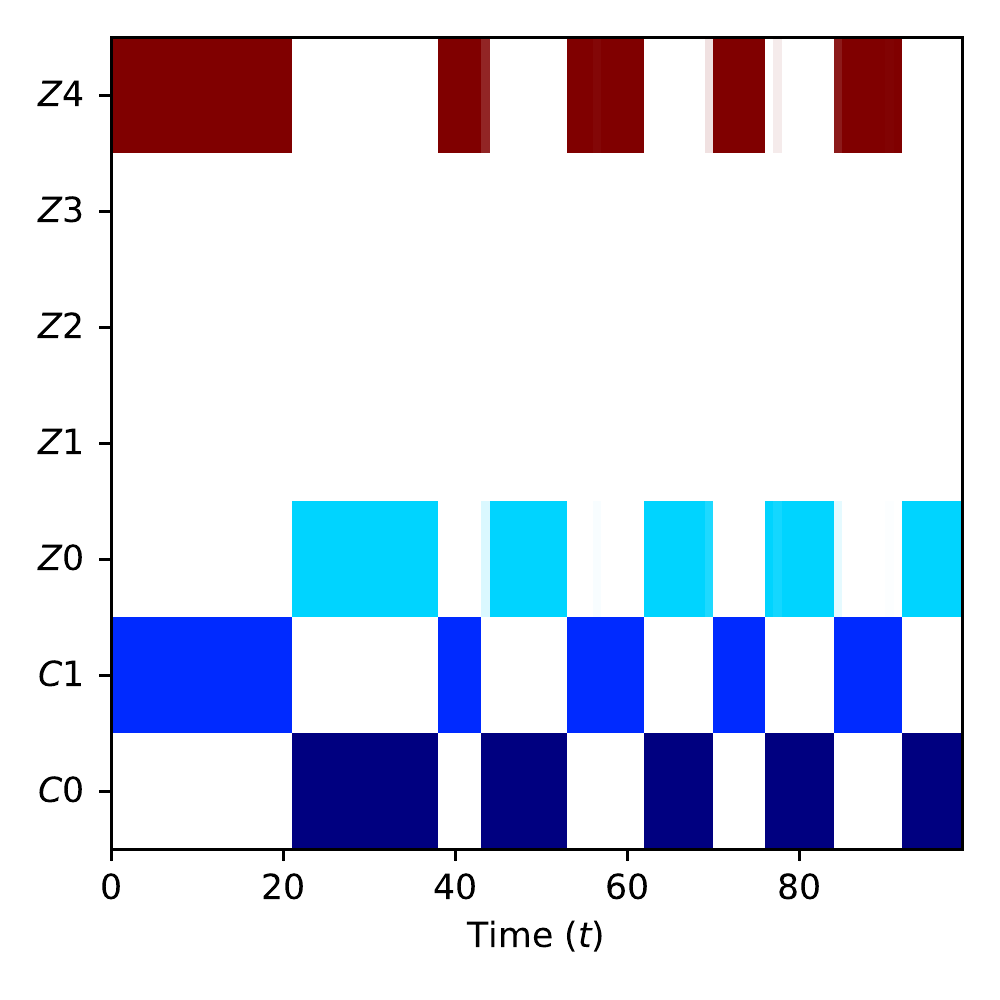}\caption{Dirichlet}\label{fig:dir_z_seq}
\end{subfigure}
\begin{subfigure}[b]{0.3\textwidth}
\centering\includegraphics[width=0.99\textwidth]{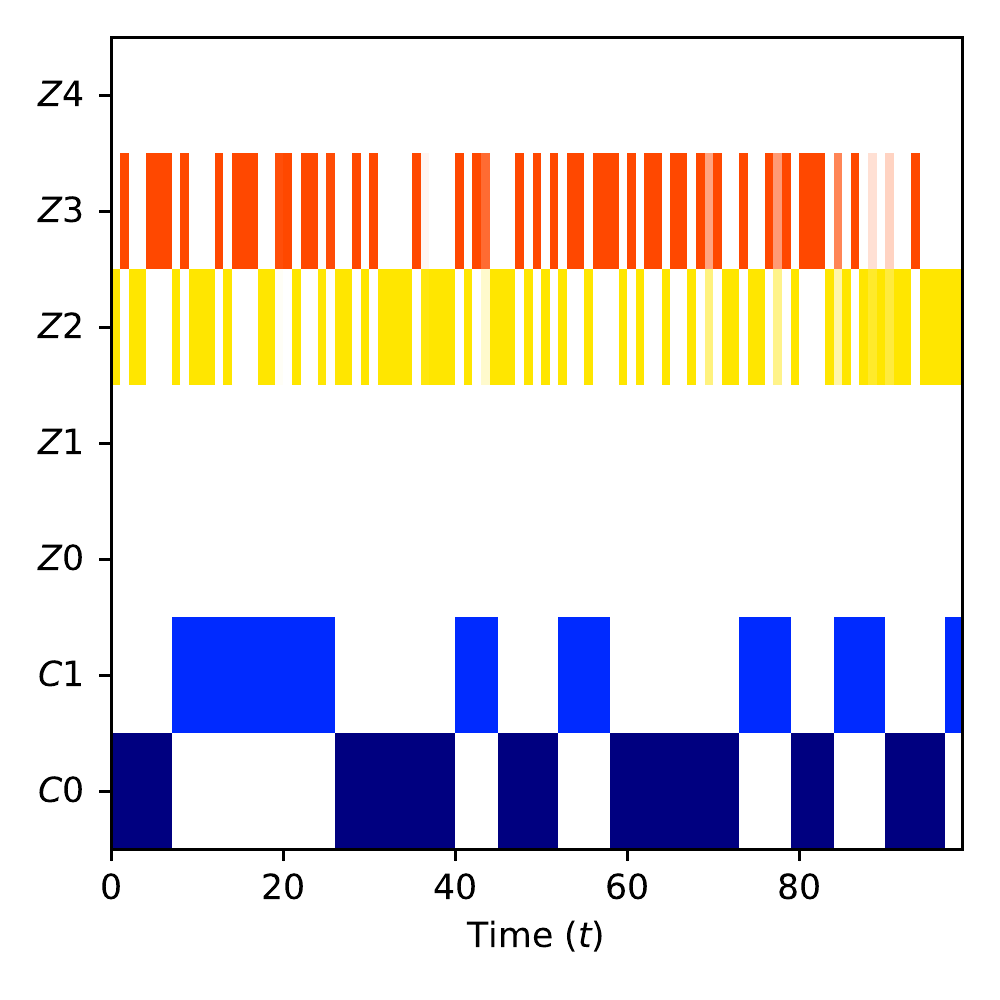}\caption{MLE}\label{fig:mle_z_seq}
\end{subfigure}
    \caption{ Cart-Pole Swing-Up. Time courses the learned context models for Result A. $C0$ and $C1$ stand for the ground true contexts, while $Z0$ -- $Z4$ are the learned contexts.}
    \label{fig:z_seq}
\end{figure}
\textbf{Initial testing on Cart-Pole Swing-up Task~\citep{gym-cartpole-swingup}.} We attempt to swing up and balance a pole attached to a cart. This environment has four states and one action. We introduce the contexts by multiplying the action with a constant $\chi$ thus modulating the actuation effect. We will allow negative $\chi$ modeling catastrophic (or hard) failures, and positive $\chi$ modeling soft actuation failures. \\
{\bf Result A: HDP is an effective prior for learning an accurate and interpretable model.} In Figure~\ref{fig:rho}, we plot the expectation of $\bmrho_0$ and $\bmR$ extracted from the variational distribution $q(\bm\mu)$ for HDP, Dirichlet and MLE priors for the Cart-Pole Swing-up Environment with the context set $\cC = \{1, -1\}$ and $|\widetilde \cC|= K = 5$. 
The MLE learning appears to be trapped in a local optimum as the results in Figure~\ref{fig:mle_rho} suggest. A similar phenomenon has been reported by~\cite{dong2020collapsed}, where an MLE method was used and a heuristic entropy regularization and temperature annealing method is adopted to alleviate the issue. All in all, while MLE learning can appear to be competitive with a different random seed, this approach does not give consistent results. The use of Dirichlet priors appears to provide a better model. Furthermore, with an appropriate distillation threshold the distilled transition matrices with HDP and Dirichlet priors are very similar to each other. However, the threshold for Dirichlet prior distillation needs to be much higher as calculations of the stationary distributions suggest. This implies that spurious transitions are still quite likely. In contrast, the HDP prior helps to successfully identify two main contexts ($Z0$ and $Z2$) and accurately predict the context evolution (see Figure~\ref{fig:z_seq}). Furthermore, the model is more interpretable and the meaningful contexts can often be identified with a naked eye.\\
{\bf Result B: Distillation acts as a regularizer.} We noticed that the context $Z2$ has a low probability mass in stationarity, but a high probability of self-transition (Figure~\ref{fig:hdp_rho}). This suggest that spurious transitions can happen, while highly unlikely. We speculate that the learning algorithm tries to fit the uncertainty in the model (e.g., due to unseen data) to one context. This can lead to over-fitting and unwanted side-effects. Results in Figure~\ref{fig:hdp_rho_distilled} suggest that distillation during training can act as a regularizer when we used a high enough threshold $\varepsilon_{\rm distil}=0.1$. We proceed by varying the context set cardinality $|\widetilde \cC|$ (taking values $4$, $5$, $6$, $8$, $10$ and $20$) and the distillation threshold $\varepsilon_{\rm distil}$ (taking values $0$, $0.01$, and $0.1$). 
Note that we distill during training and we refer to the transition matrix for the distilled Markov chain as the distilled transition matrix. As the ground truth context cardinality is equal to two, the probability of the third most likely context would signify the learning error. In Table~\ref{table:distilled_third_context}, we present the stationary probability of the context with the third largest probability mass. In particular, for $|\widetilde \cC| = 20$ the probability mass values for this context are larger than $0.01$. This indicates a small but not insignificant possibility of a transition to this context, if the distillation does not remove this context. We present some additional details on this experiment in Appendix~\ref{app:model_learning}. Overall, we can conclude that it is safe to overestimate the context cardinality. \\
\begin{table}[ht]
    \centering
\caption{Comparing the probability mass of the third most probable state in the stationary distribution. We vary the cardinality of the estimated context set $\widetilde \cC$ and 
the distillation threshold $\varepsilon_{\rm distil}$. Red indicates underestimation of distillation threshold.} 
\label{table:distilled_third_context}
\begin{tabular}{l|cccccc}
\toprule
\diagbox{
$\varepsilon_{\rm distil}$ $\downarrow$}{$|\widetilde \cC|$ $\rightarrow$}
 &     \textbf{4} &     \textbf{5} &     \textbf{6} &     \textbf{8} &     \textbf{10} &     \textbf{20} \\
\hline
\hline
\textbf{0   } & \color{red}{\bf 8.58e-03} & \color{red}{\bf 
7.06e-03} & \color{red}{\bf 3.71e-03} & \color{red}{\bf 6.85e-03} & \color{red}{\bf 2.20e-03} & \color{red}{\bf 2.25e-02} \\
\textbf{0.01} & 1.06e-03 & 1.24e-03 & 1.37e-03 & 2.19e-03 & 2.56e-03 & \color{red}{\bf 1.60e-02} \\
\textbf{0.1 } & 1.21e-03 & 1.54e-03 & 1.70e-03 & 2.80e-03 & 3.54e-03 & 9.86e-03 \\
\bottomrule
\end{tabular}
\end{table}
\textbf{Result C: MDP, POMDP and continual RL methods can be ineffective.} In Figure~\ref{fig:learning_curves}, we plot the learning curves for our algorithms and compare them to each other for $\chi=-1$. CEM, which is known to perform well in low-dimensional environments, learns faster than SAC. Note that there is no significant performance loss of C-SAC in comparison with the full information case exhibiting the power of our modeling approach. We evaluated FI-SAC, C-SAC, C-CEM on three  seeds.
\begin{figure}[ht]
\centering
\begin{subfigure}[b]{0.24\textwidth}
\centering
\includegraphics[width=0.99\textwidth]{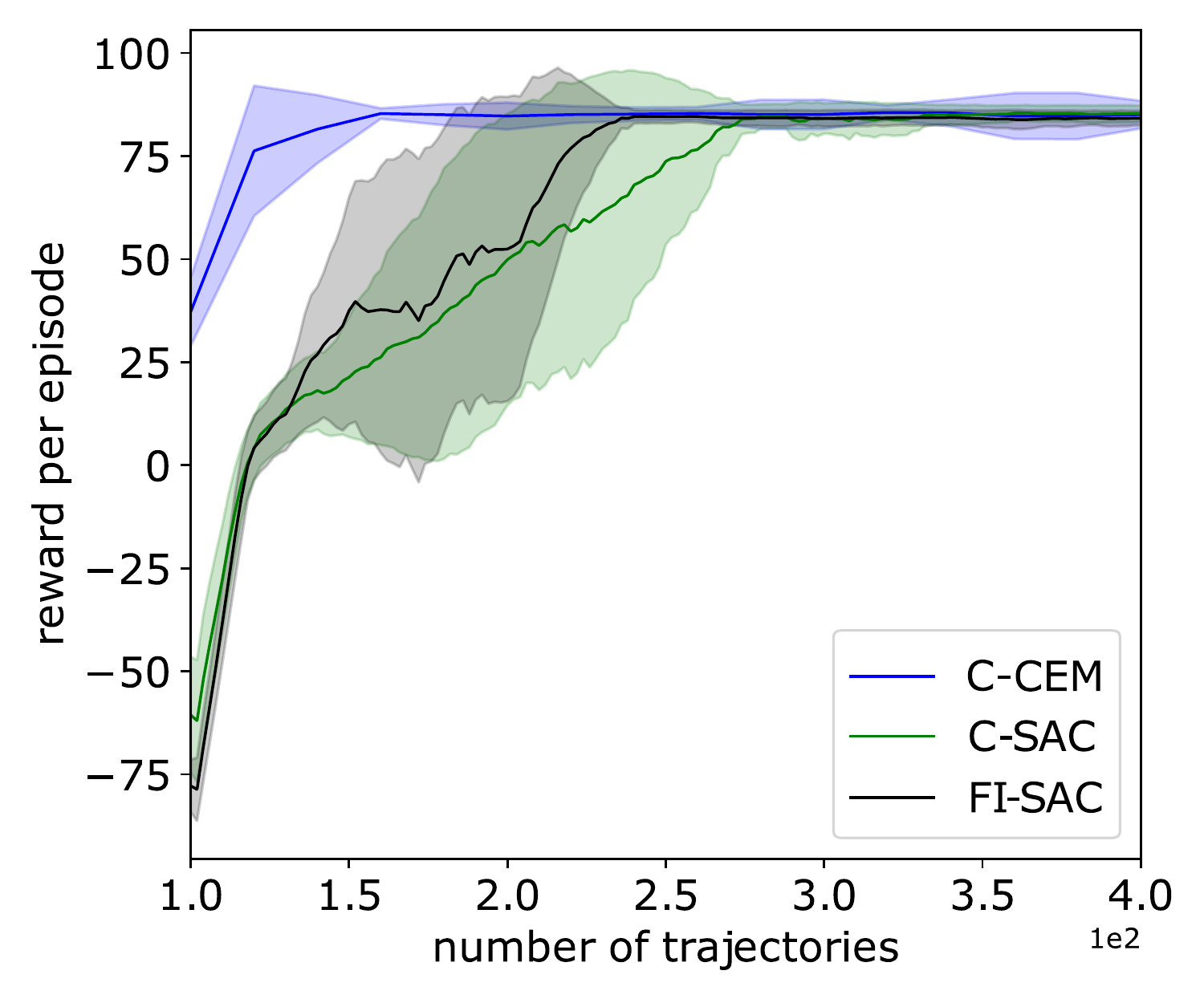}
\caption{Learning curves}\label{fig:learning_curves}
\end{subfigure}
\begin{subfigure}[b]{0.2\textwidth}
\centering 
\includegraphics[width=0.99\textwidth]{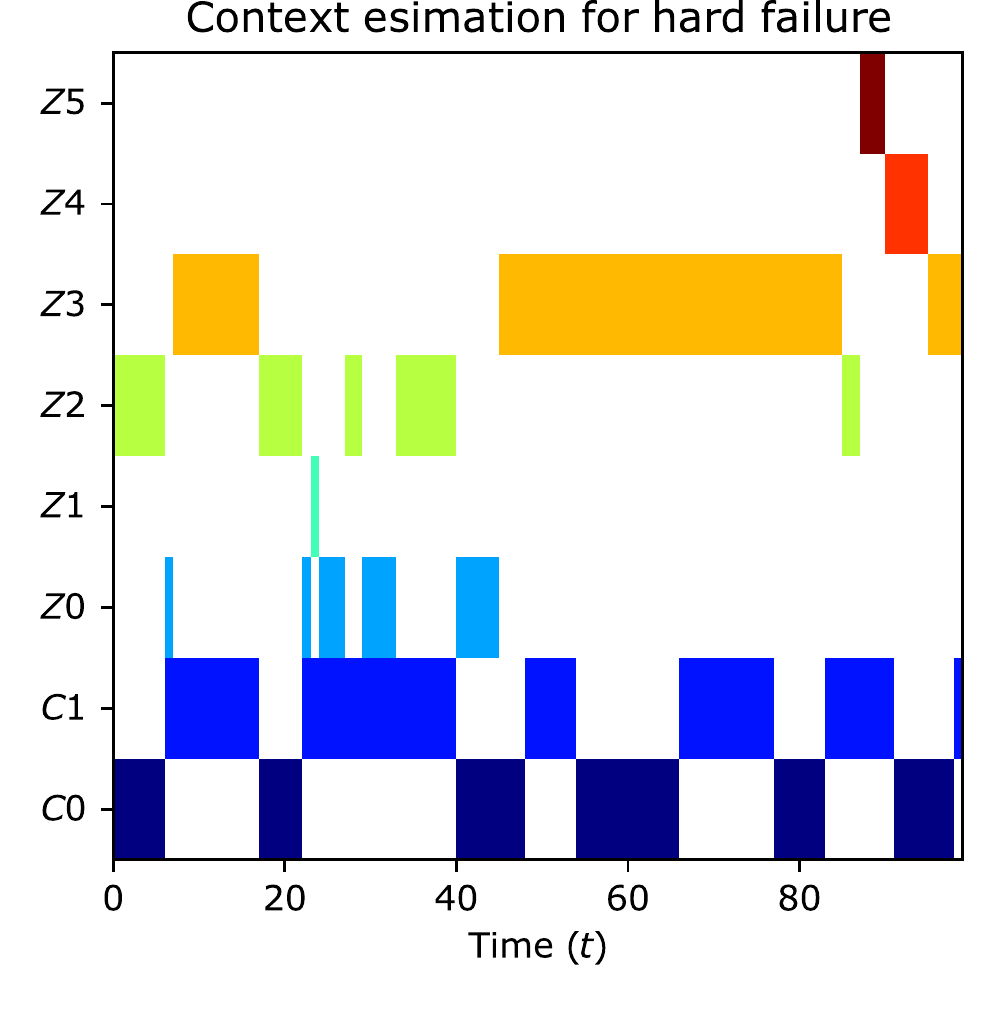}
\caption{GPMM $\chi=-1$}
\label{fig:gpmm_seq_01}
\end{subfigure}
\begin{subfigure}[b]{0.2\textwidth}
\centering 
\includegraphics[width=0.99\textwidth]{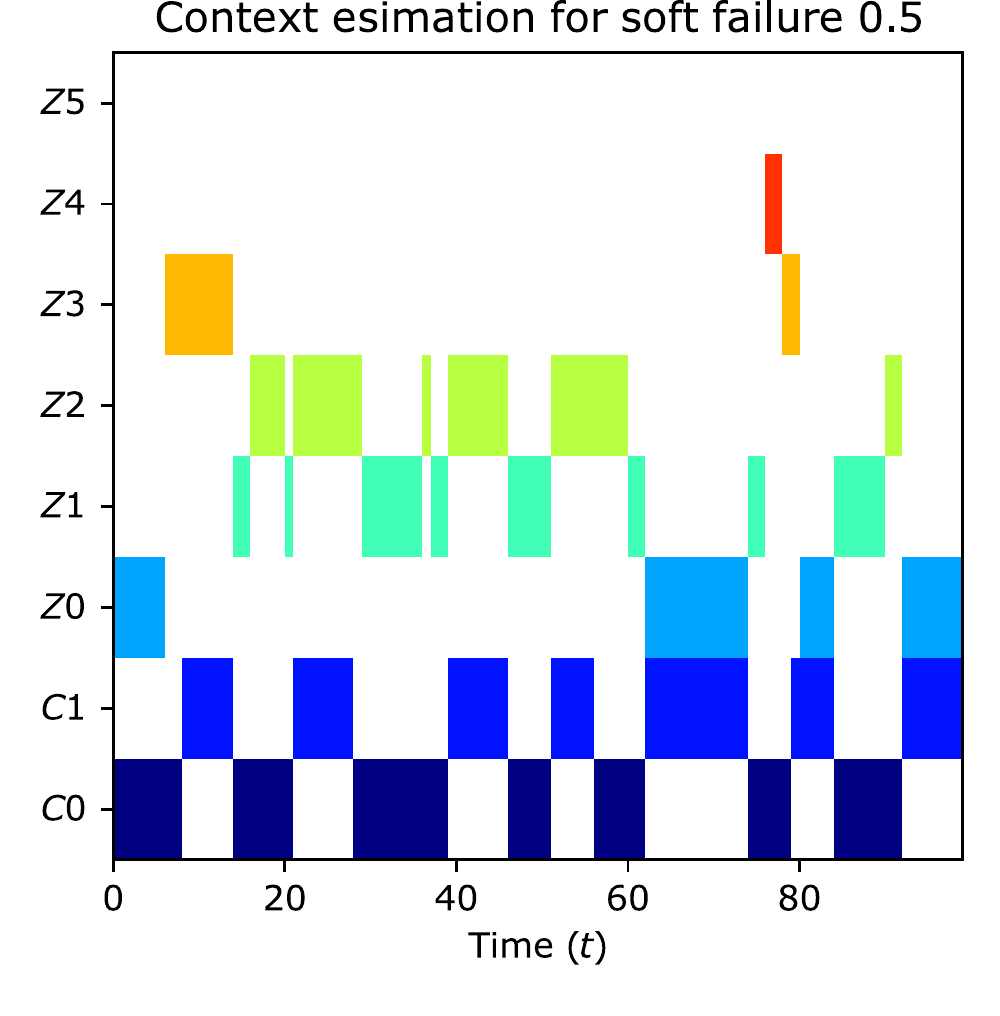}
\caption{GPMM $\chi=0.5$}
\label{fig:gpmm_seq_05}
\end{subfigure}
\begin{subfigure}[b]{0.2\textwidth}
\centering\includegraphics[width=0.99\textwidth]{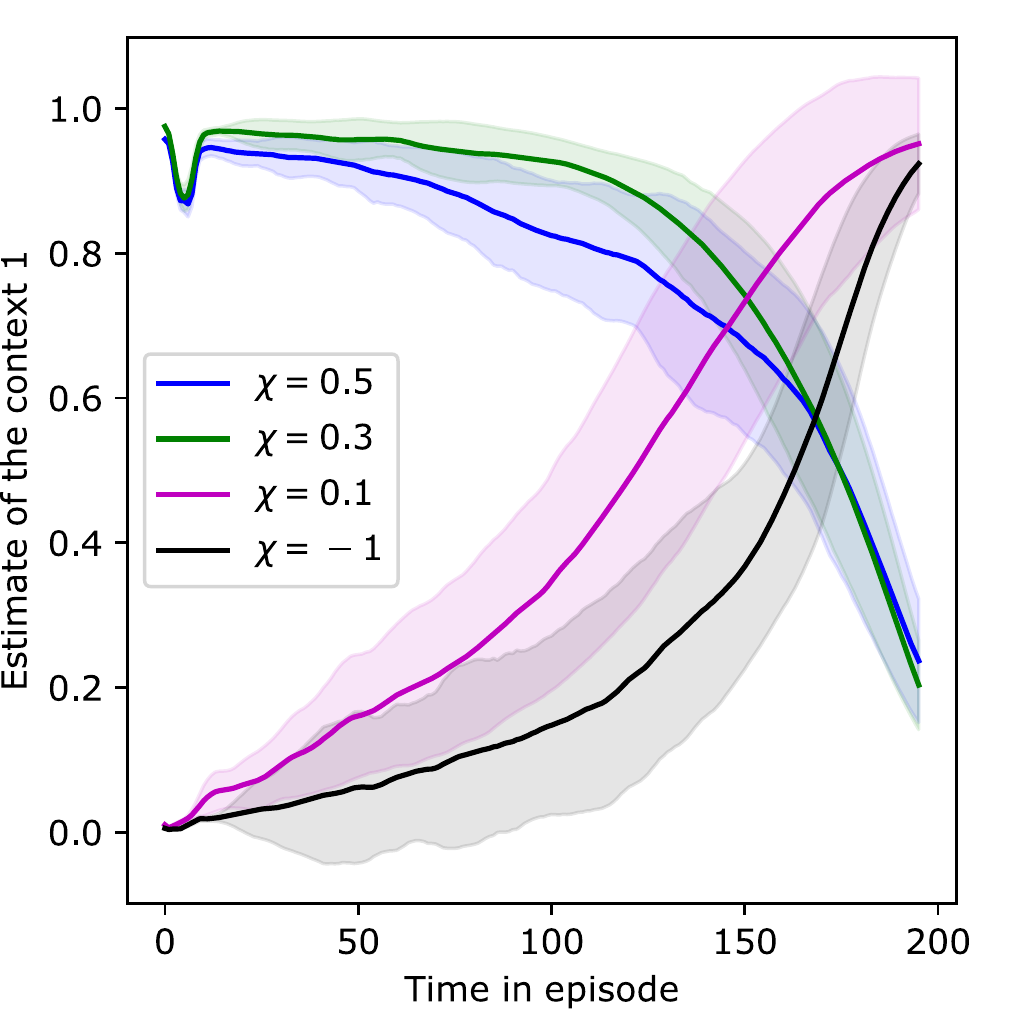}\caption{RNN ``Beliefs''}\label{fig:rnn_beliefs}
\end{subfigure}
    \caption{ Cart-Pole Swing-Up. Learning curves for $\chi=-1$ (a), time courses the learned context models using GPMM (b)-(c) and the learned model belief by RNN-PPO (d).}
    \label{fig:rnn_gpmm}
\end{figure}

\begin{table}[ht]
    \centering
\begin{tabular}{l|cccc}
\toprule
\diagbox{algo $\downarrow$}{failure $\rightarrow$} &               hard & soft $\alpha=0.1$ & soft $\alpha=0.3$ & soft $\alpha=0.5$ \\
\hline
\hline
FI-SAC  &   $84.50 \pm 1.79$ &   $\mathbf{76.63 \pm 8.54}$ &  $\mathbf{84.75 \pm 3.07}$ &    $\mathbf{86.92 \pm 1.03}$ \\
C-SAC   &   $85.38 \pm 1.64$ &   $\mathbf{76.80 \pm 8.91}$&   $\mathbf{86.76 \pm 2.88}$&   $\mathbf{88.35 \pm 1.30}$ \\
C-CEM   &  $\mathbf{87.63 \pm 0.14}$ &  $60.15 \pm 25.91$ &   $83.15 \pm 7.72$ &  $\mathbf{89.08 \pm 1.90}$ \\
GPMM    &   $3.50 \pm 18.59$ &    $3.55 \pm 7.83$ &  $10.64 \pm 16.10$ &  $49.61 \pm 19.13$ \\
RNN-PPO &  $-0.17 \pm 18.06$ &  $64.10 \pm 21.37$ &  $74.58 \pm 20.66$ &   $67.01 \pm 8.52$ \\
\bottomrule
\end{tabular}
\caption{Mean $\pm$ standard deviation of expected return for: our algorithms (C-SAC, C-CEM), a continual RL algorithm (GPMM), a POMDP algo (RNN-PPO), and SAC with a known context (FI-SAC). For soft failure experiments, we have increased the maximum applicable force by the factor of two. Best performances are highlighted in bold.}
    \label{tab:returns_soft_and_hard_failures}
\end{table}
We now compare the control algorithms for various values of $\chi$. We present the results of our experiments in Table~\ref{tab:returns_soft_and_hard_failures} and we also discuss the comparison protocols in Appendix~\ref{si_sec:comparing_to_baselines}. Here we focus on the reasons why both RNN-PPO and GPMM can fail in some experiments and seem to perform well in others. In GPMM, it is explicitly assumed that the context does not change during the episode, however, the algorithm can adapt to a new context. While {\it a posteriori} context estimation has a limited success for $\chi=0.5$ (see Figure~\ref{fig:rnn_gpmm}), the context adaptation is rather slow for our setting resulting in many context estimation errors, which reduces the performance. Furthermore, it appears that estimating hard failures is a challenge for GPMM. RNN-PPO appears to perform very well for $\chi > 0$ (see Table~\ref{tab:returns_soft_and_hard_failures}) and fail for $\chi = -1$, however, when we plot the output of the RNN, which is meant to predict the beliefs, we see that the average context prediction is quite similar across different experiments (see Figure~\ref{fig:rnn_gpmm}). It is worth noting that the mean of the true belief variable is $0.5$ for all $\chi$, as both contexts are equally probable at every time step. Therefore, the RNN approach does not actually learn a belief model, but an ``average'' adjustment signal for the policy, and hence it will often fail to solve a C-MDP. Interestingly, with $\chi = 0.5$ our modeling algorithm learns only {\bf one meaningful context} with high distillation threshold while still solving the task. This is because for both $\chi = 0.5$ and $\chi = 1$ the sign of optimal actions for swing up are the same and both have sufficient power to solve the task. We compare to further baselines in Appendix~\ref{si_sec:comparing_to_baselines}.

{\bf Our model is effective for control in twelve dimensional environments (Drone and Intersection)}. In the drone environment~\citep{panerati2021learning}, the agent aims at balancing roll and pitch angles of the drone, while accelerating vertically, i.e., the task is to maximize the upward velocity. This environment has twelve states (positions, velocities, Euler angles, angular velocities in three dimensions) and four actions (motor speeds in  rotation per minute). In the highway intersection environment~\citep{highway-env}, the agent aims at performing the unprotected left turn maneuver with an incoming vehicle turning in the same direction. The goal of the agent is to make the left turn and follow the social vehicle without colliding with it. The agent measures positions, velocities and headings in $x$, $y$ axes of the ego and social vehicles (twelve states in total), while controlling the steering angle and acceleration / deceleration.
\begin{table}[ht]
\centering
\begin{tabular}{l|cccc}
\toprule
       & FI-SAC &   NI-SAC & C-SAC & C-CEM \\ 
\hline\hline
Drone  &    $\mathbf{36.13 \pm 0.26}$ &    $-0.80 \pm 3.08$ &    $28.41 \pm 1.16$ & $32.30 \pm 2.78$ \\ 
Intersection &  $\mathbf{572.09 \pm 20.25}$ &  $499.62 \pm 19.98$ &  $555.11 \pm 20.21$ &  $529.75 \pm 78.12$ \\ 
\bottomrule
\end{tabular}
    \caption{Mean $\pm$ standard deviation of expected return for various algorithms and tasks over three seeds. Our contextual approaches, which marked by the letter C, are competitive with FI-SAC (SAC with full context information) and outperform NI-SAC (SAC with no context information).}
    \label{tab:applications} 
\end{table}
In both environments we introduce the contexts {\it by  multiplying the maximum actuation effect by a constant $\chi$}, specifically, motor speeds in the drone environment and steering angle in the highway intersection environment. The results in Table~\ref{tab:applications} demonstrate that both MPC and policy learning approaches with the model are able to solve the task, while using no information (NI) about the contexts dramatically reduces the performance. Note that in the drone environment C-CEM algorithm exhibits slightly better performance than C-SAC, while in the intersection environment C-SAC controls the car much better. We can only hypothesize that the policy's feedback architecture (mapping states to actions) is better suited for complex tasks such as low-level vehicle control, where MPC approaches require a substantial tuning and computational effort to compete with a policy based approach.

\section{Conclusion and Discussion} 
We studied a hybrid discrete-continuous variable process, where unobserved discrete variable represents the context and observed continuous variables represents the dynamics state. We proposed a variational inference algorithm for model learning using a sticky HDP prior. This prior allows for effective learning of an interpretable model and coupled with our context distillation procedure offers a powerful tool for learning C-MDPs. In particular, we showed that the combination of the HDP prior and the context distillation method allows learning the true context cardinality. We also showed that the model quality is not affected if the upper bound on context cardinality set is overestimated. Furthermore, we illustrated that the distillation threshold can be used as a regularization trade-off parameter and it can also be used to merge similar contexts in an unsupervised manner. Furthermore, we present additional experiments in Appendix suggesting that our model can potentially generalize to non-Markovian and state-dependent settings. While we presented several experiments in various environments, further experimental evaluation is required, e.g., using~\cite{benjamins2021carl}.

We showed that continual and meta-RL approaches are likely to fail as their underlying assumptions on the environment do not fit our setting. The learned models do not appear to capture the complexity of Markovian context transitions. This, however, should not be surprising as these methods are tailored to a different problem: adapting existing policy / model to a new setting. If the context is very different and / or the contexts changing too fast then the continual and meta-RL algorithms would struggle by design. We derived our policy by exploiting the relation of our setting and POMDPs. We demonstrated the necessity of our model by observing that standard POMDP approaches (i.e., modeling the context dynamics using an RNN) fail to learn the model. We attribute this behavior to the lack of effective priors and model structure. While in some cases it can appear that the RNN policy is effective, disregarding the context altogether has a similar effect. 

Our model-based algorithm can be further enhanced by using synthetic one-step transitions similarly to~\cite{janner2019trust}, which would improve sample efficiency. We can also use an ensemble of models, which would allow to constantly improve the model using cross-validation over models in the ensemble. However, evaluation of the model quality is more involved since the context is unobservable. In future, we also plan to extend our model to account for a partially observable setting, i.e., where the state only indirectly measured similarly to POMDPs. This setting would allow for a rigorous treatment of controlling from pictures in the context-dependent setting. While we show that we can learn unseen contexts without re-learning the entire model, this procedure is not fully automated. Hence it can benefit from adaptation of continual learning methods in order to increase efficiency of the learning procedure. 

\bibliographystyle{iclr2022_conference}
\bibliography{biblio}
\newpage 

\setcounter{equation}{0}
\setcounter{thm}{0}
\setcounter{defn}{0}
\setcounter{algocf}{0}
\setcounter{figure}{0}
\setcounter{table}{0}
\setcounter{page}{1}
\setcounter{section}{0}

\renewcommand{\thethm}{A\arabic{thm}}
\renewcommand{\thedefn}{A\arabic{defn}}
\renewcommand{\thepage}{a\arabic{page}}
\renewcommand{\theprop}{A\arabic{prop}}
\renewcommand{\thelem}{A\arabic{lem}}
\renewcommand{\thesection}{A\arabic{section}}
\renewcommand{\thealgocf}{A\arabic{algocf}}

\renewcommand{\theequation}{A\arabic{equation}}
\renewcommand{\thetable}{A\arabic{table}}
\renewcommand{\thefigure}{A\arabic{figure}}
\renewcommand\thesubfigure{(\alph{subfigure})}

\renewcommand\ptctitle{}
\appendix 
\renewcommand{\partname}{Appendices}
\renewcommand{\thepart}{}
\doparttoc
\part{} 
\parttoc

\section{Detailed Literature Review} \label{app:lit-review}
For convenience, we reproduce the definitions and assumptions from the main text to make the appendix self-contained. We define a contextual Markov Decision Process (C-MDP) as a tuple $\cM_{\rm c} = \langle \cC, \cS, \cA, \cP_{\cC}, \cP_{\cS}, \cR, \gamma_d \rangle$, where $\cS$ is the continuous state space; $\cA$ is the action space; $\gamma_d \in [0, 1]$ is the discount factor; and $\cC$ denotes the context set with cardinality $|\cC|$. In our setting, the state transition and reward function depend on the context, i.e., $\cP_{\cS}: \cC \times \cS \times \cA \times \cS \rightarrow [0, 1]$, $\cR:  \cC \times \cS \times \cA \rightarrow \R$. Finally, the context distribution probability $\cP_{\cC}: \cT_t \times \cC \rightarrow [0, 1]$ is conditioned on $\cT_t$ - the past states, actions and contexts $\{\bms_0, \bma_0, \bmc_0, \dots, \bma_{t-1}, \bmc_{t-1}, \bms_t\}$. Our definition is a generalization of the C-MDP definition by~\cite{hallak2015contextual}, where the contexts are stationary, i.e., $\cP_{\cC}: \cC \rightarrow [0, 1]$. We adapt our definition in order to encompass all the settings presented by~\cite{crl_review}, where such C-MDPs were used but not formally defined.

Throughout the paper, we will restrict the class of C-MDPs by making the following assumptions: (a) {\bf Contexts are unknown and not directly observed} (b) {\bf Context cardinality is finite and we know its upper bound $K$}; (c) {\bf Context distribution is Markovian}. In particular, we consider the contexts $\bmc_k$ representing the parameters of the state transition function $\bmtheta_k$, and the context set $\cC$ to be a subset of the parameter space $\Theta$. To deal with uncertainty, we consider a set $\widetilde \cC$ such that: a) $|\widetilde \cC| = K > |\cC|$; b) all its elements $\bmtheta_k\in\widetilde\cC$ are sampled from a distribution $H(\lambda)$, where $\lambda$ is a hyper-parameter. Let $z_t\in[0,\dots, K)$ be the index variable pointing toward a particular parameter vector $\bmtheta_{z_t}$. We thus write the  environment model as: 
\begin{subequations}
\begin{align}
&z_0 \ | \ \bmrho_0 \sim \Cat(\bmrho_0), \qquad z_{t} \ | \ z_{t-1}, \{\bmrho_j\}_{j=1}^{|\widetilde\cC|} \sim \Cat(\bmrho_{z_{t-1}}), \label{si_eq:context_mc} \\
&\bms_{t} \ | \ \bms_{t-1}, \bma_{t-1}, z_{t}, \{\bmtheta_k\}_{k=1}^{|\widetilde\cC|} \sim p(\bms_{t}|\bms_{t-1}, \bma_{t-1}, \bmtheta_{z_{t}}),\quad \bmtheta_k \ | \ \lambda \sim H(\lambda), t \ge 1. \label{si_eq:state_mp}
\end{align} \label{si_eq:transition}
\end{subequations}
For convenience we also write $\bmR = [\bmrho_1,..., \bmrho_{|\widetilde\cC|}]$ representing the context transition operator, even if $|\widetilde\cC|$ is countable and infinite. 

Other restrictions on the space of C-MDPs lead to different problems and solutions. We present some settings graphically in Figure~\ref{si_fig:graphical_models_nsmdp} adapted from the review by~\cite{crl_review}.

\begin{figure}[ht]
    \centering
    \includegraphics[width=0.8\linewidth]{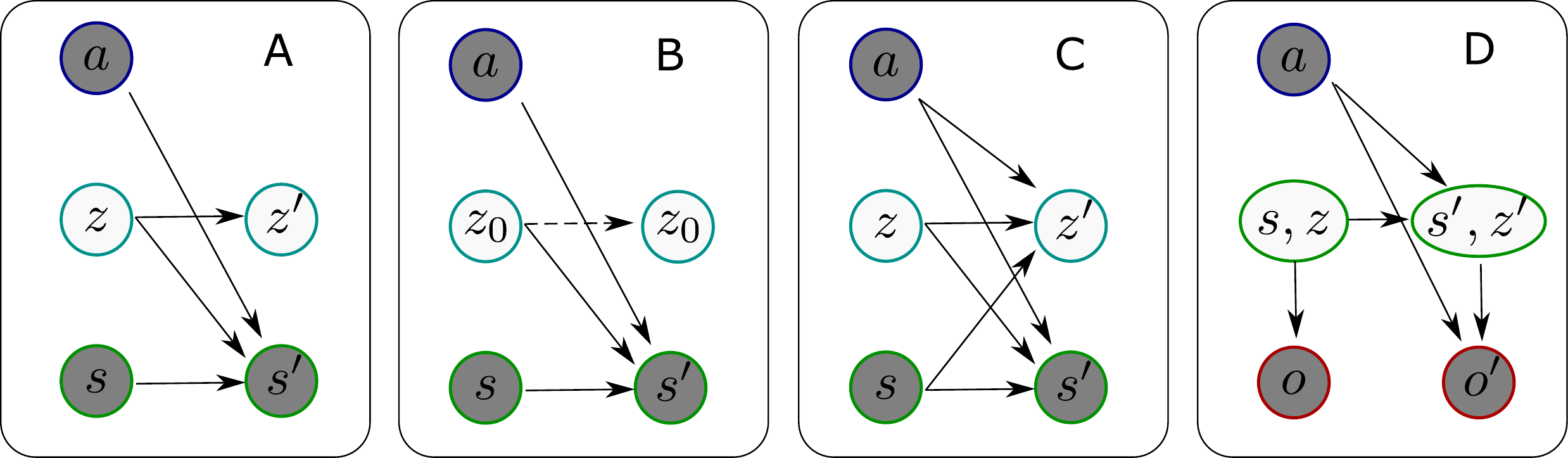}
    \caption{Graphical models for C-MDP modeling. In all panels $a$ stands for action, $s$ - state, $o$ - observation, $z$ - context, while the operator $'$ indicates the next time step. A:  The context variable $z$ evolves according to a Markov process;
    B: The context variable $z$ is drawn once per episode; 
    C: The context variable $z$ evolves according to a Markov process and depends on state and/or action variables;  D: a POMDP. 
    } 
    \label{si_fig:graphical_models_nsmdp}
\end{figure}
{\bf Setting A.} In this setting, a Markov assumption is made on the context evolution.
Only a few works make such an assumption similarly to our work. For example,~\cite{choi2000hidden} assume a discrete state, action and context spaces and fix the exact number of contexts thus significantly limiting applicability of their approach. We can also mention the work by~\cite{xu2020task}, where the individual points are classified to different context during the episode. However, the agent explicitly assumes that the context will not change during the execution of its plan. The Markovian context evolution can also be related to switching systems (cf.~\cite{becker2019switching, fox2008nonparametric, dong2020collapsed}), where the context is representing the system's mode. While we take inspiration from these works, we improve the model in comparison to~\cite{becker2019switching, fox2008nonparametric} by using non-linear dynamics, in comparison to~\cite{fox2008nonparametric} by proposing the distillation procedure and using deep learning, in comparison to~\cite{becker2019switching, dong2020collapsed} by using the hierarchical Dirichlet process prior.

{\bf Setting B.} Here, an episodic non-stationarity constraint on the context probability distribution $\cP_\cC$ is introduced, that is, the context may change only (without loss of generality) at the start of the episode. Episodic non-stationarity constraint is widely adopted by meta-RL~\citep{hip-mdp, finn2017model, ca-rl, promp, mb-meta, meta-context, varibad, xie2020deep} and continual RL~\citep{chandak2020optimizing, nagabandi2018deep, crl_review, haitham, l2l, replay-rl} communities. We can distinguish optimization-based and context-based Meta-RL families of methods. Optimization-based Meta-RL methods~\citep{finn2017model,ca-rl, promp, mb-meta} sample training contexts from a stationary distribution $\cP_{\cC}$ and optimize model or policy parameters, which can be quickly adapted to a test context. In contrast, context-based Meta-RL methods~\citep{meta-context, varibad, xie2020deep, rl2} attempt to infer a deterministic representation or a probabilistic belief on the context from the episode history. We also remark recent work on domain shift while controlling from pixels~\cite{hansen2021generalization, kostrikov2020image}. In these papers the authors assume that some pixels may change from an episode to an episode (e.g., red color becomes blue), but the underlying dynamics stay the same. Besides the episodic nature of the context change, this setting differs from ours on two fronts. First, controlling from pictures constitutes a POMDP problem, where the true states (positions, velocities, accelerations) are observed through a proxy (through pixels) and hence need to be inferred. Second, the underlying dynamics stay the same~\citep{hansen2021generalization, kostrikov2020image} and only observations change. Our case is the opposite: the underlying dynamics change, but the observation function stays the same. In future work we aim to extend our methods to control from pixels. Continual (lifelong) RL mainly adopts a context incremental setting, where the agent is exposed to a sequence of contexts~\citep{cont-survey}. While the agent's goal is still to adapt efficiently to the unseen contexts, the emphasis is on overcoming catastrophic forgetting, i.e., maintaining a good performance on old contexts while improving performance on the current one~\citep{replay-rl, l2l}.

We further remark that the Markovian context model and episodic non-stationarity assumptions are often incompatible. Indeed, learning the transition model for $\cP_{\cC}$ in C-MDPs with an episodic non-stationarity constraint is somewhat redundant because $\cP_{\cC}$ can be assumed to be a stationary distribution rather than a Markov process. 
On the other hand, if the context changes according to a Markov process then the episodic non-stationarity assumption may not be enough to capture the rich context dynamics. This can lead not only to suboptimal policies, but also to policies not solving the task at all. 
\cite{xie2020deep} tried to combine the two settings by taking a hierarchical view on non-stationary modeling. In particular, they assume that the context changes in a Markovian fashion, but only between the episodes and during the episode the context does not change. 
This setting allows to model a two time-scale process: the context transitions on a slow time-scale prescribed by the episodes, while the process transitions on a fast time-scale prescribed by the steps. 
While this approach has its merits, it also has the some limitations, e.g., the hierarchical model is artificially imposed on the learning process.

{\bf Settings C and D.} Many frameworks can fit into the settings C and D, (c.f.,~\cite{ong2010planning, action_ns, obs_ns}), however, this makes it hard to compare the differences and similarities between them. We will refer the reader to the review by~\cite{crl_review} for a further discussion. We make a few comments, however, on change point detection methods and on the relation to POMDP formulation~\citep{astrom1965optimal, hauskrecht2000value} in the case of unobservable contexts. Change-point detection methods can be readily used for the context estimation in an online fashion~\citep{rlcd, hadoux2014sequential, qcd, nsrl, policy_reuse}, however, these methods either track change-points in a heuristic way~\citep{rlcd, hadoux2014sequential} or assume strong prior knowledge on the non-stationarity~\citep{qcd, nsrl}. 
A POMDP is defined by tuple $\cM_{po} = \{\cX, \cA, \cO, \cP_{\cX}, \cP_{\cO}, \cR, \gamma_d, p(\bmx_0)\}$, where $\cX$, $\cA$, $\cO$ are the (unobservable) state, action, observation spaces, respectively;  $\cP_{\cX}: \cX \times \cA \times \cX \rightarrow [0, 1]$ is the state transition probability; $\cP_{\cO}: \cX \times \cA \times \cO \rightarrow [0, 1]$ is the conditional observation probability; and $p(\bmx_0)$ is the initial state distribution. In our case, the state is $\bmx_t=(\bms_t, z_t)$ and the observation is $\bmo_t=\bms_t$. A key step in POMDP literature is estimating or approximating the belief, i.e., the probability distribution of the current state using history of observations and actions~\citep{hausknecht2015deep, zhu2017improving, igl2018deep}. 
Since the belief is generally infinite-dimensional, it is prudent to resort to sampling, point-estimates or other approximations~\citep{hausknecht2015deep, zhu2017improving, igl2018deep}. In our case, this is not necessary, however, as the context is a discrete variable. We further stress, that while general C-MDPs can be represented using POMDPs, this representation may not offer any benefits at all due to complexity of belief estimation. 

Finally, we can classify the literature by cardinality of $\cC$. Some Meta-RL algorithms mainly consider a continuous contextual set $\cC$ by implicitly or explicitly parametrizing $\cC$ with real-valued vectors~\citep{hip-mdp, varibad, xie2020deep, meta-context}. On the other hand, exploiting a discrete $\cC$ can be motivated by controlling switching dynamic systems~\citep{fox2008nonparametric} and may achieve better interpretability. Due to complexity, however, some works fix the contextual cardinality $|\cC|$~\citep{choi2000hidden, qcd, nsrl, bpo}, or infer $|\cC|$ from data in an online fashion~\citep{xu2020task, rlcd, hadoux2014sequential}. Besides, most works in continual RL adopt a setting, where there is no explicit specification on $\cC$, but the agent can directly access the discrete contexts of all time steps/episodes~\citep{replay-rl, l2l}. In contrast, our approach learns the context evolution from data.

\section{Proofs}
\subsection{Proof of Theorem~\ref{thm:distillation}}\label{proof_thm:distillation}
For completeness, we restate the theorem below.
\begin{thm}\label{si_thm:distillation}
Consider a Markov chain $\bmp^{t} = \bmp^{t-1} \bmR$ with a stationary distribution $\bmp^\infty$ and distilled $\cI_1 = \{i | \bmp^\infty_i \ge \varepsilon_{\rm distil} \}$ and spurious $\cI_2 = \{i | \bmp^\infty_i < \varepsilon_{\rm distil}\}$ state indexes, respectively.  Then a) the matrix $\widehat \bmR =  \bmR_{\cI_1, \cI_1} +  \bmR_{\cI_1, \cI_2} (\bmI -  \bmR_{\cI_2, \cI_2} )^{-1} \bmR_{\cI_2, \cI_1}$ is a valid probability transition matrix; b) the Markov chain $\widehat \bmp^{t}=  \widehat \bmp^{t-1}  \widehat \bmR$ is such that its stationary distribution $\widehat \bmp^{\infty}\propto\bmp^{\infty}_{\cI_1}$.
\end{thm}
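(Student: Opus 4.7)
The key observation is that $\widehat{\bmR}$ is the transition matrix of the \emph{censored} (or \emph{watched}) Markov chain obtained by recording the state only at visits to $\cI_1$. Under this interpretation, $\bmR_{\cI_1,\cI_2}(\bmI - \bmR_{\cI_2,\cI_2})^{-1}\bmR_{\cI_2,\cI_1}$ sums, via the Neumann series $(\bmI - \bmR_{\cI_2,\cI_2})^{-1} = \sum_{m\ge 0} \bmR_{\cI_2,\cI_2}^m$, over all excursions $i \to k_1 \to \cdots \to k_m \to j$ with $i,j \in \cI_1$ and $k_1,\ldots,k_m \in \cI_2$. I would open the proof by making this interpretation explicit, since it motivates both parts and helps justify all subsequent manipulations.

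For part (a), I would first argue existence and non-negativity of $(\bmI - \bmR_{\cI_2,\cI_2})^{-1}$. Because the stationary distribution is assumed to exist and assign strictly positive mass to $\cI_1$, every state in $\cI_2$ reaches $\cI_1$ almost surely, so $\bmR_{\cI_2,\cI_2}$ is strictly substochastic along some path out of each state and its spectral radius is strictly less than one; hence the Neumann series converges entrywise to a non-negative matrix. Then $\widehat{\bmR}$, being a sum and product of non-negative matrices, is itself non-negative. For row-stochasticity, I would use $\bmR \bmone = \bmone$ written block-wise: $\bmR_{\cI_1,\cI_1}\bmone + \bmR_{\cI_1,\cI_2}\bmone = \bmone$ and $\bmR_{\cI_2,\cI_1}\bmone = (\bmI - \bmR_{\cI_2,\cI_2})\bmone$. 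Substituting the second identity shows $(\bmI - \bmR_{\cI_2,\cI_2})^{-1}\bmR_{\cI_2,\cI_1}\bmone = \bmone$, so
\begin{equation*}
\widehat{\bmR}\bmone = \bmR_{\cI_1,\cI_1}\bmone + \bmR_{\cI_1,\cI_2}\bmone = \bmone.
\end{equation*}

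For part (b), I would write the stationarity condition $\bmp^{\infty}\bmR = \bmp^{\infty}$ in block form, partitioning $\bmp^{\infty}$ into $\bmp^{\infty}_{\cI_1}$ and $\bmp^{\infty}_{\cI_2}$. The $\cI_2$-block gives $\bmp^{\infty}_{\cI_1}\bmR_{\cI_1,\cI_2} + \bmp^{\infty}_{\cI_2}\bmR_{\cI_2,\cI_2} = \bmp^{\infty}_{\cI_2}$, which rearranges to $\bmp^{\infty}_{\cI_2} = \bmp^{\infty}_{\cI_1}\bmR_{\cI_1,\cI_2}(\bmI - \bmR_{\cI_2,\cI_2})^{-1}$. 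Substituting this back into the $\cI_1$-block and using the definition of $\widehat{\bmR}$ yields $\bmp^{\infty}_{\cI_1}\widehat{\bmR} = \bmp^{\infty}_{\cI_1}$. Normalising (which requires $\|\bmp^{\infty}_{\cI_1}\|_1 > 0$, guaranteed because $\cI_1$ is non-empty by construction whenever $\varepsilon_{\rm distil}$ is chosen below $\max_i \bmp^{\infty}_i$) gives $\widehat{\bmp}^{\infty} \propto \bmp^{\infty}_{\cI_1}$.

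The only genuinely delicate step is the invertibility / non-negativity of $\bmI - \bmR_{\cI_2,\cI_2}$, which is where the implicit hypotheses on the chain (existence of $\bmp^{\infty}$ and positivity on $\cI_1$) enter. Everything else reduces to block-matrix algebra and the row-sum identity $\bmR\bmone = \bmone$. I would therefore state the required regularity assumption upfront and keep the rest of the argument as the compact substitution sketched above.
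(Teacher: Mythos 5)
Your proposal follows essentially the same route as the paper's proof: both parts reduce to the block identities obtained from $\bmR\bmone=\bmone$ and $\bmp^\infty\bmR=\bmp^\infty$, your row-stochasticity computation is exactly the paper's, and solving the $\cI_2$-block for $\bmp^{\infty}_{\cI_2}=\bmp^{\infty}_{\cI_1}\bmR_{\cI_1,\cI_2}(\bmI-\bmR_{\cI_2,\cI_2})^{-1}$ and substituting into the $\cI_1$-block is the same algebra the paper carries out (and your censored-chain reading of $\widehat\bmR$ matches the paper's heuristic discussion with $\bmp^t_{\cI_2}\approx\bmp^\infty_{\cI_2}$).

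The one point of divergence is the justification that $\bmI-\bmR_{\cI_2,\cI_2}$ is invertible with nonnegative inverse, and there your argument has a gap. The paper proves $r(\bmR_{\cI_2,\cI_2})<r(\bmR)=1$ via a Perron--Frobenius comparison lemma (its Proposition A2), which uses irreducibility of $\bmR$, and then invokes M-matrix theory; it explicitly notes that irreducibility holds almost surely for learned transition matrices. Your probabilistic substitute---``the stationary distribution exists and puts positive mass on $\cI_1$, hence every state in $\cI_2$ reaches $\cI_1$ almost surely''---is not a valid implication. If $\bmR$ is reducible, $\cI_2$ can contain a closed communicating class: take a chain with two closed classes and choose $\bmp^\infty$ to be a mixture placing mass below $\varepsilon_{\rm distil}$ on one of them. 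Then $\bmp^\infty$ exists and $\cI_1$ carries positive mass, yet the block of $\bmR_{\cI_2,\cI_2}$ on that class is stochastic, $\bmI-\bmR_{\cI_2,\cI_2}$ is singular, and the statement itself fails. So the ``regularity assumption'' you defer to stating upfront must be substantive---irreducibility of $\bmR$, or at least that $\cI_2$ contains no closed set of states---rather than the two conditions you list. With that hypothesis made explicit, your Neumann-series argument for nonnegativity and convergence is correct and is a perfectly acceptable alternative to the paper's M-matrix route; without it, the spectral-radius step is exactly where the proof breaks.
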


Let us first provide an insight into our technical result. In order to so consider the Markov chain evolution:
\begin{equation*}
\begin{aligned}
\bmp^t_{\cI_1} &= \bmp^{t-1}_{\cI_1} \bmR_{\cI_1, \cI_1} + \bmp^{t-1}_{\cI_2} \bmR_{\cI_2, \cI_1}, \\
\bmp^t_{\cI_2} &= \bmp^{t-1}_{\cI_1} \bmR_{\cI_1, \cI_2} + \bmp^{t-1}_{\cI_2} \bmR_{\cI_2, \cI_2}.
\end{aligned}
\end{equation*}
Now assume that $\bmp^{t}_{\cI_2} \approx \bmp^{\infty}_{\cI_2}$ for all large enough $t$, which leads to 
\begin{equation*}
\begin{aligned}
\bmp^t_{\cI_1} &= \bmp^{t-1}_{\cI_1} \bmR_{\cI_1, \cI_1} + \bmp^\infty_{\cI_2} \bmR_{\cI_2, \cI_1}, \\
\bmp^\infty_{\cI_2} & \approx \bmp^{t-1}_{\cI_1} \bmR_{\cI_1, \cI_2} + \bmp^\infty_{\cI_2} \bmR_{\cI_2, \cI_2}.
\end{aligned}
\end{equation*}
If $\bmp^\infty_{\cI_2}$ is small enough, then the Markov chain will rarely end up in the states with indexes $\cI_2$. Meaning that we can remove these states, but we would need to take them into account while computing the new probability transitions. Furthermore, we need to do so while obtaining a new Markov chain in the process. The solution to this question is straightforward, i.e.,
we can solve for $\bmp^\infty_{\cI_2}$ to obtain:
\begin{equation*}
\begin{aligned}
\bmp^t_{\cI_1} &= \bmp^{t-1}_{\cI_1} \left( \bmR_{\cI_1, \cI_1} + \bmR_{\cI_1, \cI_2}(\bmI - \bmR_{\cI_2, \cI_2})^{-1}\bmR_{\cI_2, \cI_1} \right).
\end{aligned}
\end{equation*}
Remarkably, the resulting process is also a Markov chain! In order to verify this we need to prove the following: 
\begin{itemize}
    \item[a)] the matrix $\bmI - \bmR_{\cI_2, \cI_2}$ is invertible  ;
    \item[b)] the matrix $(\bmI - \bmR_{\cI_2, \cI_2})^{-1}$ is nonnegative ;
    \item[c)] the right eigenvector of the matrix $\bmR_{\cI_1, \cI_1} + \bmR_{\cI_1, \cI_2}(\bmI - \bmR_{\cI_2, \cI_2})^{-1}\bmR_{\cI_2, \cI_1}$ can be chosen to be the vector of ones $\bmone$. 
\end{itemize}
We will need to develop some mathematical tools in order to prove these statements. For a matrix $\bmA\in\R^{n\times n}$ the spectral radius $r(\bmA)$ is the maximum absolute value of the eigenvalues $\lambda_i$ of $\bmA$ and $r(\bmA) = \max_i|\lambda_i(\bmA)|$. The matrix $\bmA\in\R^{n\times m}$ is called nonnegative if all its entries are nonnegative and denoted as $\bmA \ge 0$, if at least one element is positive we write $\bmA > 0$ and if all elements are positive we write $\bmA \gg 0$. The matrix $\bmA\in\R^{n\times n}$ is called reducible if there exists a permutation matrix $\bmT$ such that $\bmT \bmA \bmT^T = \begin{pmatrix} \bmB & \bmC \\ \bm{0} & \bmD\end{pmatrix}$ for some square matrices $\bmB$ and $\bmD$. The matrix is called irreducible if such a permutation matrix does not exist. The matrix $\bmA\in\R^{n\times n}$ is called M-matrix, if all its off-diagonal entries are nonpositive and it can be represented as $\bmA = s \bmI - \bmB$ with $s \ge r(\bmB)$~\citep{berman1994nonnegative}. Interestingly the inverse of an M-matrix is always nonnegative (the opposite is generally false)~\citep{berman1994nonnegative}. 
We will also use the following results:

\begin{prop}[Perron-Frobenius Theorem] \label{prop:pf}
Let $\bmA\in\R^{n\times n}$ be a nonnegative matrix, then the spectral radius $r(\bmA)$ is an eigenvalue of $\bmA$ and the corresponding left and right eigenvectors can be chosen to be nonnegative.

If $\bmA$ is additionally irreducible then $r(\bmA)$ is a simple eigenvalue of $\bmA$ and the corresponding left and right eigenvectors can be chosen to be positive.
\end{prop}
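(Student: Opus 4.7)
The plan is to prove the two statements in sequence, first establishing the existence of a nonnegative eigenvector at the spectral radius for an arbitrary nonnegative $\bmA$, then strengthening this to strict positivity and algebraic simplicity under irreducibility. The route I would follow combines a fixed-point argument (to get existence for strictly positive matrices) with a perturbation/compactness argument (to descend to merely nonnegative matrices), and then a polynomial-positivity trick using $(\bmI + \bmA)^{n-1}$ together with a left-eigenvector orthogonality argument for the irreducible case.

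First I would reduce to the positive case by setting $\bmA_\varepsilon = \bmA + \varepsilon \bmJ$ with $\bmJ$ the all-ones matrix, so $\bmA_\varepsilon \gg 0$ for $\varepsilon > 0$. For a strictly positive matrix $\bmB$, consider the continuous map $f : \Delta \to \Delta$ defined on the simplex $\Delta = \{\bmx \ge 0 : \sum_i x_i = 1\}$ by $f(\bmx) = \bmB \bmx / \|\bmB \bmx\|_1$; this is well-defined because $\bmB \bmx \gg 0$ whenever $\bmx \in \Delta$. By Brouwer's fixed-point theorem there exists $\bmv \in \Delta$ with $f(\bmv) = \bmv$, i.e.\ $\bmB \bmv = \lambda \bmv$ with $\lambda = \|\bmB \bmv\|_1 > 0$ and $\bmv \ge 0$. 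To identify $\lambda$ with $r(\bmB)$, I would use the standard triangle-inequality bound: for any eigenvalue $\mu$ of $\bmB$ with eigenvector $\bmw$, componentwise $|\mu|\,|\bmw| \le \bmB |\bmw|$; combined with the Collatz--Wielandt characterisation (or directly by pairing with the left eigenvector constructed the same way), this yields $|\mu| \le \lambda$, hence $\lambda = r(\bmB)$. Now let $\bmv_\varepsilon \in \Delta$ be the corresponding eigenvector of $\bmA_\varepsilon$; by compactness of $\Delta$ and continuity of the spectrum (eigenvalues are continuous in the entries), a subsequence $\bmv_{\varepsilon_k}$ converges to some $\bmv \in \Delta$ with $\bmA \bmv = r(\bmA) \bmv$, giving the first claim. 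The left-eigenvector statement follows by applying the same argument to $\bmA^\transpose$, noting $r(\bmA^\transpose) = r(\bmA)$.

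For the irreducible part, I would first upgrade $\bmv \ge 0$ to $\bmv \gg 0$. The key observation is that $\bmI + \bmA$ is also irreducible with positive diagonal, so $(\bmI + \bmA)^{n-1}$ has the property that $\bigl((\bmI + \bmA)^{n-1}\bigr)_{ij} > 0$ for all $i,j$ (this is the standard combinatorial consequence of irreducibility: paths of length $\le n-1$ exist between every pair of indices). Applying $(\bmI + \bmA)^{n-1}$ to $\bmv$ yields $(1 + r(\bmA))^{n-1} \bmv$, but the left-hand side is strictly positive whenever $\bmv \ge 0$ is nonzero, hence $\bmv \gg 0$; the left eigenvector $\bmu$ is treated identically. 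For simplicity as an eigenvalue, geometric simplicity follows by a convex-combination argument: if $\bmv$ and $\bmv'$ were linearly independent nonnegative eigenvectors, some real combination $\bmv + t\bmv'$ would be nonnegative with a zero entry, contradicting strict positivity established above.

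The part I expect to be the main obstacle is ruling out a nontrivial Jordan block at $r(\bmA)$, i.e.\ showing algebraic multiplicity equals geometric multiplicity. My plan is to argue by contradiction: if there were a generalised eigenvector $\bmw$ with $(\bmA - r(\bmA)\bmI)\bmw = \bmv$, then pairing with the strictly positive left eigenvector $\bmu$ gives
\begin{equation*}
\bmu^\transpose \bmv \;=\; \bmu^\transpose (\bmA - r(\bmA)\bmI)\bmw \;=\; (r(\bmA)\bmu^\transpose - r(\bmA)\bmu^\transpose)\bmw \;=\; 0,
\end{equation*}
which contradicts $\bmu \gg 0$ and $\bmv \gg 0$ (so $\bmu^\transpose \bmv > 0$). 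Combined with the geometric simplicity above, this forces $r(\bmA)$ to be a simple root of the characteristic polynomial, completing the proof.
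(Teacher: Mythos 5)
The paper never proves this proposition: it is stated as the classical Perron--Frobenius theorem and invoked as a known fact (with the surrounding M-matrix machinery attributed to the standard reference on nonnegative matrices), so there is no in-paper argument to compare yours against. Taken on its own terms, your proof follows a correct and standard self-contained route: Brouwer's fixed-point theorem on the simplex for strictly positive matrices, identification of the fixed-point eigenvalue with the spectral radius via the componentwise bound $|\mu|\,|\bmw| \le \bmA|\bmw|$ paired against a positive left eigenvector, a compactness-and-continuity limit to descend from $\bmA + \varepsilon\bmJ$ to $\bmA$, the $(\bmI+\bmA)^{n-1} \gg 0$ device for strict positivity under irreducibility, and the pairing $\bmu^{\transpose}(\bmA - r(\bmA)\bmI)\bmw = 0$ against $\bmu^{\transpose}\bmv > 0$ to exclude a nontrivial Jordan block. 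All of these steps are sound.

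One step is stated too narrowly. For geometric simplicity you only rule out a second linearly independent \emph{nonnegative} eigenvector, but a second eigenvector in the eigenspace of $r(\bmA)$ need not be nonnegative a priori, so as written the argument does not yet show the eigenspace is one-dimensional. The repair is one line: since $\bmA$ and $r(\bmA)$ are real, the eigenspace admits a real basis; take any real eigenvector $\bmw$ independent of $\bmv$, replace $\bmw$ by $-\bmw$ if necessary so that it has a positive entry, and set $t^{\ast} = \min_{i :\, w_i > 0} v_i / w_i$. Then $\bmv - t^{\ast}\bmw$ is a nonnegative, nonzero eigenvector of $\bmA$ at $r(\bmA)$ with a zero component, contradicting the positivity you established via $(\bmI+\bmA)^{n-1}$ --- note that that argument applies to \emph{every} nonnegative nonzero eigenvector at $r(\bmA)$, not only the one you constructed. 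With this repair, combined with your Jordan-block exclusion, the simplicity claim is complete.
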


\begin{prop}\label{prop:comp}
Consider $\bmA, \bmB \in\R^{n\times n}$ and let $\bmA > \bmB \ge 0$, then $r(\bmA) \ge r(\bmB)$. If additionally $\bmA$ is irreducible then the inequality is strict $r(\bmA) > r(\bmB)$.
\end{prop}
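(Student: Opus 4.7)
The plan is to treat the two claims separately. For the non-strict bound $r(\bmA) \ge r(\bmB)$, I would invoke Proposition~\ref{prop:pf} applied to $\bmB$ to obtain a nonnegative right eigenvector $v \ge 0$, $v \neq 0$, with $\bmB v = r(\bmB) v$. Entrywise monotonicity $\bmA \ge \bmB$ together with $v \ge 0$ gives $\bmA v \ge \bmB v = r(\bmB) v$, and the Collatz--Wielandt characterization of the spectral radius for nonnegative matrices then delivers $r(\bmA) \ge r(\bmB)$. An equivalent and arguably cleaner route is to iterate $0 \le \bmB^k \le \bmA^k$ (induction on $k$ using nonnegativity of products of nonnegative matrices) and apply Gelfand's formula $r(\cdot)=\lim_k\|\cdot^k\|^{1/k}$ with any entrywise-monotone matrix norm such as $\|\cdot\|_1$ or $\|\cdot\|_\infty$.

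For the strict inequality under the extra assumption that $\bmA$ is irreducible, my plan is to proceed by contradiction: assume $r(\bmA) = r(\bmB) =: \mu$ and eventually deduce $\bmA = \bmB$, contradicting $\bmA > \bmB$. Set $\bmC := \bmA - \bmB$, so $\bmC \ge 0$ and $\bmC \ne 0$. With the Perron eigenvector $v$ of $\bmB$ from above, $\bmA v = \mu v + \bmC v \ge \mu v$ entrywise. Because $\bmA$ is irreducible, applying the irreducible part of Proposition~\ref{prop:pf} to $\bmA^{\transpose}$ furnishes a strictly positive left Perron eigenvector $u \gg 0$ with $u^{\transpose} \bmA = \mu u^{\transpose}$. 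Pairing $u^{\transpose}$ against the nonnegative vector $\bmA v - \mu v$ yields $u^{\transpose}(\bmA v - \mu v) = \mu u^{\transpose} v - \mu u^{\transpose} v = 0$; since $u \gg 0$, a nonnegative vector annihilated by $u^{\transpose}$ must vanish, so $\bmA v = \mu v$. Thus $v$ is itself a nonnegative right eigenvector of the irreducible matrix $\bmA$ at the Perron eigenvalue $\mu$, and by the positivity part of Proposition~\ref{prop:pf} we obtain $v \gg 0$. Finally, $\bmC v = \bmA v - \bmB v = 0$ together with $\bmC \ge 0$ and $v \gg 0$ forces $\bmC = 0$ entry by entry, which is the desired contradiction.

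The main obstacle is precisely this irreducibility-driven step. The naive argument ``pick $(i_0,j_0)$ with $\bmC_{i_0 j_0} > 0$ and read off $(\bmC v)_{i_0} > 0$'' fails whenever $v_{j_0} = 0$, and this can genuinely happen because $\bmB$ is not assumed irreducible and its Perron eigenvector may have zero coordinates. I therefore exploit the irreducibility of $\bmA$ twice: first, to produce the strictly positive left eigenvector $u$ that promotes the pointwise inequality $\bmA v \ge \mu v$ to the equality $\bmA v = \mu v$; and second, to force every nonnegative eigenvector of $\bmA$ at $\mu$ to be strictly positive, so that $\bmC v = 0$ honestly implies $\bmC = 0$. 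Both uses are essential, which is also why the hypothesis cannot be weakened to irreducibility of $\bmB$ alone.
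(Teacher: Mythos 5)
Your proof is correct. For the non-strict bound you take a genuinely different route from the paper: the paper pairs a left Perron vector $\bma$ of $\bmA$ with a right Perron vector $\bmb$ of $\bmB$ to obtain $r(\bmA)\,\bma^\transpose\bmb \ge r(\bmB)\,\bma^\transpose\bmb$, and must then patch the degenerate case $\bma^\transpose\bmb=0$ with a perturbation/continuity argument; your Collatz--Wielandt argument (or the $0\le\bmB^k\le\bmA^k$ plus Gelfand route) needs no such case split, which is cleaner. For the strict inequality the two arguments are essentially mirror images: both use the strictly positive left Perron vector of the irreducible matrix $\bmA$ to upgrade the entrywise inequality $\bmA v \ge r(\bmB)v$ to the equality $(\bmA-\bmB)v=0$, and both then invoke the simplicity of $r(\bmA)$, i.e.\ that its eigenspace is a ray through a positive vector. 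The paper lands the contradiction by showing $\bmb$ would have a zero entry while lying on that positive ray; you land it by concluding $v\gg 0$ and hence $\bmA-\bmB=0$, contradicting $\bmA>\bmB$ --- the same ingredients in a slightly different order, and your handling of the pitfall that the Perron vector of the possibly reducible $\bmB$ may have zero coordinates is exactly right. One quibble: your closing claim that the hypothesis cannot be weakened to irreducibility of $\bmB$ alone is false. Running your own argument with the roles swapped (strictly positive left Perron vector of $\bmB$, nonnegative right Perron vector of $\bmA$) shows that $r(\bmA)=r(\bmB)$ again forces $\bmA=\bmB$, so irreducibility of either matrix suffices for strictness; this side remark does not affect the validity of your proof of the stated proposition.
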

\begin{proof}
This result is well-known, but for completeness we show the proof here (we adapt a similar technique to~\cite{noutsos2006perron}). Let $\bmb$ be the right nonnegative eigenvector of $\bmB$ corresponding to $r(\bmB)$ and let $\bma$ be the left nonnegative eigenvector of $\bmA$ corresponding to $r(\bmA)$. Now we have 
\begin{equation*}
    r(\bmA) \bma^T \bmb =\bma^T \bmA \bmb  
    \ge^{\text{since $\bmA > \bmB$}} \bma^T \bmB \bmb = r(\bmB) \bma^T \bmb.
\end{equation*}
If $\bma^T \bmb$ is positive, then the first part is shown. If $\bma^T \bmb=0$, then we can make a continuity argument by perturbing $\bmA$ and $\bmB$ to $\bmA'$ and $\bmB'$ in such a way that for their corresponding eigenvectors we have $(\bma')^T \bmb' >0$. Hence the first part of the proof is shown.

Now consider the case when $\bmA$ is irreducible. According to Proposition~\ref{prop:pf}, since $\bmA$ is irreducible the spectral radius $r(\bmA)$ is a simple eigenvalue, the corresponding eigenvector $\bma$ can be chosen to be positive. Let us prove the second part by contradiction and assume that $r(\bmA) = r(\bmB)$, which implies that 
$\bma^T \bmA \bmb = \bma^T \bmB \bmb$ and consequently $\bmA \bmb = \bmB \bmb$ since 
$\bmA \bmb \ge \bmB \bmb$ and $\bma\gg 0$. Now since we also have $\bmA > \bmB$ this means that the vector $\bmb$ has at least one zero entry.
Since we also have $\bmA \bmb = r(\bmA) \bmb$, the vector $\bmb$ is an eigenvector of $\bmA$ corresponding to $r(\bmA)$ and has zero entries by assumption above. This contradicts Perron-Frobenius theorem 
since the eigenspace corresponding to $r(\bmA)$ is a ray (since $r(\bmA)$ is a simple eigenvalue) and $\bmb$ does not lie in this eigenspace. Therefore, $r(\bmA) > r(\bmB)$.
\end{proof}

Now we proceed with the proof. We note that the stationary distribution (a positive normalized left eigenvector of the transition matrix) is unique if the Markov chain is irreducible  (i.e., the transition matrix is irreducible) due to Proposition~\ref{prop:pf}. It is also almost surely true if the transition model is learned. This is because the subset of reducible matrices is measure zero in the set of nonnegative matrices.

Recall that we have a Markov chain $\bmp^{t} = \bmp^{t-1} \bmR$ with a stationary distribution $\bmp^\infty$, that is $\bmp^\infty = \bmp^\infty\bmR$. Consider the index sets: the distilled context indexes $\cI_1 = \{i | \bmp^\infty_i \ge \varepsilon_{\rm distil} \}$ and the spurious context indexes $\cI_2 = \{i | \bmp^\infty_i < \varepsilon_{\rm distil}\}$.

First, we will show that the matrix $\widehat \bmR =\bmR_{\cI_1, \cI_1} + \bmR_{\cI_1, \cI_2}(\bmI - \bmR_{\cI_2, \cI_2})^{-1} \bmR_{\cI_2, \cI_1}$ is nonnegative. Since the matrix $\bmR_{\cI_2, \cI_2}$ is a nonnegative submatrix of $\bmR$, due to Proposition~\ref{prop:comp} we have that  $r(\bmR_{\cI_2, \cI_2}) < r(\bmR) = 1$. This means that $\bmI - \bmR_{\cI_2, \cI_2}$ is an M-matrix, which implies that it is invertible and its inverse is a nonnegative matrix. Hence $\widehat \bmR$ is nonnegative~\citep{berman1994nonnegative}. 

Since $\bmR$ describes a Markov chain we have $\bmR \bmone = \bmone$ (where $\bmone$ is the vector of ones) and $\bmp^\infty \bmR = \bmp^\infty$ 
\begin{subequations}
\begin{align}
\bmone_{\cI_1} &=  \bmR_{\cI_1, \cI_1} \bmone_{\cI_1}^T +  \bmR_{\cI_1, \cI_2} \bmone_{\cI_2}^T, \label{thm2_proof:right_eigen1} \\
\bmone_{\cI_2} &=  \bmR_{\cI_2, \cI_1} \bmone_{\cI_1}^T +  \bmR_{\cI_2, \cI_2} \bmone_{\cI_2}^T, \label{thm2_proof:right_eigen2}\\
\bmp^\infty_{\cI_1} &= \bmp^\infty_{\cI_1} \bmR_{\cI_1, \cI_1} + \bmp^\infty_{\cI_2} \bmR_{\cI_2, \cI_1},\label{thm2_proof:left_eigen1}\\
\bmp^\infty_{\cI_2} &= \bmp^\infty_{\cI_1} \bmR_{\cI_1, \cI_2} + \bmp^\infty_{\cI_2} \bmR_{\cI_2, \cI_2}.\label{thm2_proof:left_eigen2}
\end{align}
\end{subequations}

Now using elementary algebra we can establish that $\widehat \bmR\bmone_{\cI_1} = \bmone_{\cI_1}$:
\begin{multline*}
 \widehat \bmR\bmone_{\cI_1} = \left(\bmR_{\cI_1, \cI_1} + \bmR_{\cI_1, \cI_2}(\bmI - \bmR_{\cI_2, \cI_2})^{-1} \bmR_{\cI_2, \cI_1} \right)\bmone_{\cI_1} = \\
 \bmR_{\cI_1, \cI_1}\bmone_{\cI_1} + \bmR_{\cI_1, \cI_2}(\bmI - \bmR_{\cI_2, \cI_2})^{-1} \bmR_{\cI_2, \cI_1}\bmone_{\cI_1} =^{\text{due to~\eqref{thm2_proof:right_eigen2}}} \\
 \bmR_{\cI_1, \cI_1}\bmone_{\cI_1} + \bmR_{\cI_1, \cI_2}\bmone_{\cI_2} =^{\text{due to~\eqref{thm2_proof:right_eigen1}}}  \bmone_{\cI_1}.
\end{multline*}
Similarly for $\bmp^\infty_{\cI_1}\widehat \bmR=\bmp^\infty_{\cI_1}$ we have:
\begin{multline*}
 \bmp^\infty_{\cI_1}\widehat \bmR = \bmp^\infty_{\cI_1}\left(\bmR_{\cI_1, \cI_1} + \bmR_{\cI_1, \cI_2}(\bmI - \bmR_{\cI_2, \cI_2})^{-1} \bmR_{\cI_2, \cI_1} \right) = \\
 \bmp^\infty_{\cI_1}\bmR_{\cI_1, \cI_1} + \bmp^\infty_{\cI_1}\bmR_{\cI_1, \cI_2}(\bmI - \bmR_{\cI_2, \cI_2})^{-1} \bmR_{\cI_2, \cI_1} =^{\text{due to~\eqref{thm2_proof:left_eigen2}}} \\
 \bmp^\infty_{\cI_1}\bmR_{\cI_1, \cI_1} + \bmp^\infty_{\cI_2}\bmR_{\cI_1, \cI_2} =^{\text{due to~\eqref{thm2_proof:left_eigen1}}}  \bmp^\infty_{\cI_1}.
\end{multline*}
Since $\bmp^\infty_{\cI_1}\widehat \bmR=\bmp^\infty_{\cI_1}$, the normalized vector $\bmp^\infty_{\cI_1}$ is the stationary distribution of the distilled Markov chain and hence $\widehat \bmp^{\infty}\propto\bmp^{\infty}_{\cI_1}$. This completes the proof.

\subsection{Derivation of dynamic programming principle and proof of Theorem~\ref{thm:policy}}\label{proof_thm:policy}

For completeness we reproduce the theorem formulation below
\begin{thm} \label{si_thm:policy}
a) The belief of $z$ can be computed as $p(z_{t+1} | \bmI_t^C) =\bmb^{z}_{t+1}$, where $(\bmb^{z}_{t+1})_i \propto \bmN_i = \sum_j  p(\bms_{t+1}|\bms_t, \bmtheta_i,\bma_t) \rho_{j i} (\bmb^{z}_t)_j$, where  $(\bmb_{t}^z)_i$ are the entries of $\bmb^{z}_t$;
b) the optimal policy can be computed as $\pi(\bms, \bmb^{z}) = \argmax_{\bma} \bmQ(\bms, \bmb^{z}, \bma)$, where the value function satisfies the dynamic programming principle $\bmQ(\bms_t, \bmb_t^{z}, \bma_t) = \bmr(\bms_t, \bmb_t^{z}, \bma_t) + \gamma \int\sum_i \bmN_i \max_{\bma_{t+1}} \bmQ(\bms_{t+1}, \bmb_{t+1}^{z}, \bma_{t+1}) \ d \bms_{t+1}$.
\end{thm}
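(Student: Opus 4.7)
\textbf{Proof plan for Theorem~\ref{thm:policy}.}
My overall strategy is to treat the model in Equation~\ref{eq:transition} as a POMDP with hidden state $(\bms_t, z_t)$ and observation $\bms_t$, then exploit the fact that $\bms_t$ is directly observed so that only $z_t$ needs to be filtered. This reduces the usual infinite-dimensional POMDP belief to the pair $(\bms_t, \bmb^z_t)$ with $\bmb^z_t\in\Delta^{K-1}$ (the probability simplex). Part (a) is a one-step Bayesian filter derivation, and part (b) is the standard POMDP dynamic programming principle instantiated for this reduced belief state.

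\emph{Part (a).} I would start from the definition $(\bmb^z_{t+1})_i = p(z_{t+1}=i\mid \bmI_{t+1}^C)$ where $\bmI_{t+1}^C = \bmI_t^C\cup\{\bma_t,\bms_{t+1}\}$, then apply Bayes' rule to get
\begin{equation*}
(\bmb^z_{t+1})_i \;\propto\; p(\bms_{t+1}\mid z_{t+1}=i,\bmI_t^C,\bma_t)\; p(z_{t+1}=i\mid \bmI_t^C,\bma_t).
\end{equation*}
The generative model in Equation~\ref{eq:transition} gives the conditional independencies $\bms_{t+1}\perp \bmI_t^C \mid (z_{t+1},\bms_t,\bma_t)$ and $z_{t+1}\perp \bmI_t^C \mid z_t$. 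Marginalising over $z_t$, the second factor becomes $\sum_j \rho_{j i}(\bmb^z_t)_j$, and the first factor collapses to $p(\bms_{t+1}\mid \bms_t,\bmtheta_i,\bma_t)$. Multiplying the two yields exactly $\bmN_i$, with normalisation constant $\sum_i \bmN_i = p(\bms_{t+1}\mid \bmI_t^C,\bma_t)$.

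\emph{Part (b).} I would first argue that $(\bms_t,\bmb^z_t)$ is a sufficient statistic for $\bmI_t^C$: the state is observed, and $\bmb^z_t$ is by construction the posterior of the only remaining hidden variable $z_t$. The computation in part (a) shows that the joint density of $(\bms_{t+1},z_{t+1})$ given the past depends on $\bmI_t^C$ only through $\bmb^z_t$ and $\bms_t$, namely $p(\bms_{t+1},z_{t+1}=i\mid \bms_t,\bmb^z_t,\bma_t) = \bmN_i$, and the subsequent belief $\bmb^z_{t+1}$ is a deterministic function of $(\bms_{t+1},\bmb^z_t,\bma_t)$. Hence the process $(\bms_t,\bmb^z_t)$ is Markov under any policy $\pi(\bms,\bmb^z)$, and the reward $\bmr(\bms_t,\bmb^z_t,\bma_t) := \sum_i(\bmb^z_t)_i\,\cR(\bms_t,\bmtheta_i,\bma_t)$ (the belief-average of the context-dependent reward) is measurable with respect to this state. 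Invoking the classical Bellman optimality theorem for fully observed Markov decision processes on the augmented state (cf.\ \cite{astrom1965optimal}) gives the stated equation after substituting the transition kernel $\bmN_i\,d\bms_{t+1}$; the $\argmax$ representation of the optimal policy follows immediately.

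\emph{Main obstacle.} The non-routine step is the sufficiency/Markovity verification: one must check that plugging in the deterministically updated $\bmb^z_{t+1}$ is consistent, i.e.\ that the belief computed from $(\bms_{t+1},\bmb^z_t,\bma_t)$ via the recursion of part (a) coincides with the true posterior conditioned on the full extended history $\bmI_{t+1}^C$. This is a straightforward induction on $t$ using the Markov property in Equation~\ref{eq:transition}, but it is the hinge of the whole argument; once it is in place, the Bellman equation is just the usual backward-induction identity applied to the Markov chain $(\bms_t,\bmb^z_t)$.
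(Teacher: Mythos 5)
Your proposal is correct and follows essentially the same route as the paper: part (a) is the identical Bayes-filter derivation using the model's conditional independencies, and part (b) is the same reduction to a belief MDP over $(\bms_t,\bmb^z_t)$ followed by dynamic programming, with the paper merely formalizing your "sufficiency/Markovity" hinge via an explicit sufficient-information-state definition (checking $p(\bmx_t\mid\bmI_t^S)=p(\bmx_t\mid\bmI_t^C)$ and $p(\bmo_t\mid\bmI_{t-1}^S,\bma_{t-1})=p(\bmo_t\mid\bmI_{t-1}^C,\bma_{t-1})$) and by invoking the continuous-space belief-POMDP Bellman equation of Porta et al.\ before collapsing the observation integral using $\bmo_t=\bms_t$, rather than citing the fully observed Bellman theorem on the augmented state directly. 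The difference is presentational, not substantive.
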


The following definitions and derivations are in line with previous work by~\cite{hauskrecht2000value, porta2006point}. 
We introduce the complete information state at the time $t$ as follows:
\begin{equation*}
    \bmI_t^C = \{\bmb_0, \bmo_{\leq t}, \bma_{<t}\},
\end{equation*}
where $\bmb_0 = p(z_0)\bmdelta_{\bmo_0}(\bms_0)$. In order to tackle the intractability of the complete information state, one can use any information state that is sufficient in some sense:
\begin{defn} \label{defn:sis}
Consider a partially observable Markov decision process $\{\cX, \cA, \cO, \cP_{\cX}, \cP_{\cO}, \cR, \bmb_0\}$. Let $\cI$ be an information state space and $\xi:\cI \times \cA \times \cO \rightarrow \cI$ be an update function defining an information process $\bmI_t = \xi(\bmI_{t-1}, \bma_{t-1}, \bmo_t)$. We say that $\bmI_t^S$ is a sufficient information process with regard to the optimal control if it is an information process and for any time step $t$, it satisfies 
\begin{equation*}
    \begin{aligned}
        p(\bmx_t | \bmI_t^S ) &= p(\bmx_t | \bmI_t^C), \\
        p(\bmo_t | \bmI_{t-1}^S, \bma_{t-1}) &= p(\bmo_t | \bmI_{t-1}^C, \bma_{t-1}).
    \end{aligned}
\end{equation*}
\end{defn}

As a sufficient information state of the process $\bmI_t^S$, we will use the belief $\bmb_t$ defined as follows: 
\begin{equation*}
\bmb_t = p(\bmx_t|\bmI_t^C) = p(\bmx_t|\bmo_{\leq t}, \bma_{<t}) 
=p(\bms_t|\bmo_{\leq t}, \bma_{<t})p(z_t|\bmo_{<t}, \bma_{<t}) .
\end{equation*}
Effectively, we introduce the belief $\bmb_t^{\bms}=p(\bms_t|\bmo_{\leq t}, \bma_{<t})$ of the state $\bms_t$  and the belief $\bmb_t^{z}=p(z_t|\bmo_{\leq t}, \bma_{<t})$ of the state $z_t$ given an observation $\bmo_t$. However, since $\bmo_t = \bms_t$ we can consider only the belief of $z_t$:
\begin{equation*}
    \begin{aligned}
        \bmb_t^{\bms}&=p(\bms_t | \bmI_t^C) = p(\bms_t|\bmo_{\leq t}, \bma_{<t}) = \delta_{\bms_t}(\bmo_t), \\
        \bmb_t^{z}&=p(z_t | \bmI_t^C) = p(z_t| \bms_{\leq t}, \bma_{<t}).
    \end{aligned}
\end{equation*}
One of the features of our model is that the updates of the beliefs can be analytically computed.
\begin{lem} \label{lem:update}
The belief $\bmb^{z}_{t} = p(z_t | \bmI_t^C)$ is a sufficient state information with the updates:
\begin{equation*}
    \begin{aligned}
        (\bmb^{z}_{t+1})_i &\propto \bmN_i =  \sum_j  p(\bms_{t+1}|\bms_t, z_{t+1}=i,\bma_t,  \bmI_{t}^C)  \bmrho_{j i} (\bmb^{z}_t)_j.
    \end{aligned}
\end{equation*}
\end{lem}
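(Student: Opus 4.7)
The plan is to derive the stated update by applying Bayes' rule to the one-step prediction $p(z_{t+1}\mid \bmI_{t+1}^C)$, exploiting the conditional-independence structure of the graphical model in Figure~\ref{fig:gm}, and then to verify that $\bmb^{z}_t$ (together with the observation $\bms_t$, which equals $\bmo_t$) fulfils both conditions of Definition~\ref{defn:sis}. Since $\bmI_{t+1}^C=\bmI_t^C\cup\{\bma_t,\bms_{t+1}\}$, I would begin by writing
\begin{equation*}
p(z_{t+1}=i\mid \bmI_{t+1}^C)\;\propto\; p(\bms_{t+1}\mid z_{t+1}=i,\bma_t,\bmI_t^C)\,p(z_{t+1}=i\mid \bma_t,\bmI_t^C),
\end{equation*}
and then simplify each factor using the generative model \eqref{si_eq:transition}.

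For the first factor, conditioning on $z_{t+1}$ and $\bms_t$ (which is contained in $\bmI_t^C$) makes $\bms_{t+1}$ independent of everything earlier because of the Markov transition $p(\bms_{t+1}\mid\bms_t,\bma_t,\bmtheta_{z_{t+1}})$, so it collapses to $p(\bms_{t+1}\mid\bms_t,\bmtheta_i,\bma_t)$. For the second factor, the context chain \eqref{si_eq:context_mc} is independent of $\bma_t$ and of the past states given $z_t$, so marginalizing $z_t$ yields
\begin{equation*}
p(z_{t+1}=i\mid \bma_t,\bmI_t^C)=\sum_j p(z_{t+1}=i\mid z_t=j)\,p(z_t=j\mid\bmI_t^C)=\sum_j \rho_{j i}\,(\bmb^z_t)_j.
\end{equation*}
Plugging these two pieces together gives exactly $(\bmb^z_{t+1})_i\propto \bmN_i$ as claimed, and the normalization constant is obtained by summing over $i$.

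To establish sufficiency, I would then check the two conditions of Definition~\ref{defn:sis} with $\bmI_t^S=(\bms_t,\bmb^z_t)$. The first condition requires $p(\bmx_t\mid\bmI_t^S)=p(\bmx_t\mid\bmI_t^C)$; since $\bmx_t=(\bms_t,z_t)$ and $\bms_t\in\bmI_t^S$, this reduces to $p(z_t\mid\bms_t,\bmb^z_t)=p(z_t\mid\bmI_t^C)=\bmb^z_t$, which holds by the very definition of $\bmb^z_t$ (there is no further conditioning on $\bms_t$ because $\bms_t$ already appears in $\bmI_t^C$). The second condition, $p(\bmo_{t+1}\mid\bmI_t^S,\bma_t)=p(\bmo_{t+1}\mid\bmI_t^C,\bma_t)$, follows from the same marginalization as above applied to $\bms_{t+1}$: both sides equal $\sum_{i,j}p(\bms_{t+1}\mid\bms_t,\bmtheta_i,\bma_t)\rho_{j i}(\bmb^z_t)_j$, because conditional on $(\bms_t,z_t,\bma_t)$ the next observation is independent of the prior history.

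The main obstacle is tracking the conditional-independence relations carefully, especially ensuring that $\bma_t$ drops out of the context prior $p(z_{t+1}\mid\bma_t,\bmI_t^C)$ and that the past observations drop out of $p(\bms_{t+1}\mid z_{t+1},\bma_t,\bmI_t^C)$; both facts are immediate from the directed graphical model but must be invoked explicitly. The rest is an essentially mechanical application of Bayes' rule together with the previously derived HDP-C-MDP transition equations.
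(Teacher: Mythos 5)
Your proposal is correct and follows essentially the same route as the paper's proof: verifying the two conditions of Definition~\ref{defn:sis} for the belief state, and then obtaining the update by writing $p(z_{t+1}\mid\bmI_t^C,\bms_{t+1},\bma_t)\propto p(\bms_{t+1}\mid z_{t+1},\bms_t,\bma_t)\sum_{j}\rho_{j i}(\bmb^z_t)_j$ via Bayes' rule and the conditional independences of the generative model. The paper phrases the sufficiency check for the joint belief $\bmb_t=\bmb_t^{\bms}\bmb_t^{z}$ and integrates over $\bmx_{t-1}$, but this is only a cosmetic difference from your verification with $\bmI_t^S=(\bms_t,\bmb_t^z)$.
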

\begin{proof}
First, let us check that the belief $\bmb_t$ satisfies both conditions of~Definition~\ref{defn:sis}. 
By definition of our belief we have $p(\bmx_t | \bmI_{t-1}^C) = \bmb_t = p(\bmx_t|\bmI_{t-1}^S)$, which satisfies the first condition, as for the second condition we have:
\begin{multline*}
    p(\bmo_t|\bmI_{t-1}^C, \bma_{t-1}) = 
    \int\limits_{\bmx_{t-1}\in \cX} p(\bmo_t|\bmx_{t-1}, \bmI_{t-1}^C, \bma_{t-1})p(\bmx_{t-1}|\bmI_{t-1}^C) d\bmx_{t-1} = \\
    \int\limits_{\bmx_{t-1}\in \cX} p(\bmo_t|\bmx_{t-1}, \bma_{t-1}) \bmb_{t-1} d\bmx_{t-1} =  p(\bmo_t|\bmI_{t-1}^S, \bma_{t-1}).
\end{multline*}

Now straightforward derivations yield: 
\begin{multline*}
    \bmb^{z}_{t+1} = p(z_{t+1}|\bmI_t^C, \bms_{t+1}, \bma_t) \propto
    p(z_{t+1}, \bms_{t+1}|\bmI_t^C, \bma_t) = \\ 
    p(\bms_{t+1}|z_{t+1}, \bmI_t^C, \bma_t) \sum_{z_t}  p(z_{t+1}|z_t) p(z_t | \bmI_t^C) = 
    p(\bms_{t+1}|\bms_t, z_{t+1}, \bma_t)  \sum_{z_t}  p(z_{t+1}|z_t) \bmb_t^z,
\end{multline*}
completing the proof.
\end{proof}

As discussed by~\cite{porta2006point}, POMDP in continuous state, action and observation spaces also satisfy Bellman dynamic programming principle, albeit in a different space. Recall that the control problem is typically formulated as 
\begin{equation*}
    J = \E_{\tilde \tau} \sum_{t=0}^T \gamma^t \bmr_t,
\end{equation*}
where $\tilde \tau = \{\bmx_0, \bma_0, \dots, \bmx_{T-1}, \bma_{T-1}, \bmx_T\}$ and the horizon $T$ can be infinite. As we do not have access to the state transitions we need to rewrite the problem in the observation or the belief spaces. We have 
\begin{multline*}
J = \E_{\bmx_0}\left\{ \sum_{t=0}^T \gamma^t \bmr_t(\bmx_t, \bma_t) \bigl| \bmx_{t+1} \sim p(\bmx_{t+1} | \bmx_t, \bma_t)\right\} = \\
    \E_{\bmx_0}\left\{\sum_{t=0}^T \gamma^t \tilde \bmr_t(\bmb_t, \bma_t) \bigl| \bmb_t = \xi(\bmb_{t-1}, \bmo_{t}, \bma_{t-1}) \right\} = \\
    \E_{\tau}\left\{\sum_{t=0}^T \gamma^t \tilde \bmr_t(\bmb_t, \bma_t) \bigl| \bmb_t = \xi(\bmb_{t-1}, \bmo_{t}, \bma_{t-1}) \right\} ,
\end{multline*}
where $\tau = \{\bmb_0, \bmo_0, \bma_0, \dots, \bmo_{T-1}, \bma_{T-1}, \bmo_T\}$ and:
\begin{equation*}
    \tilde \bmr_t(\bmb_t, \bma_t) = 
    \int  \bmr_t(\bmx, \bma_t)  \bmb_t(\bmx) \ d \bmx.
\end{equation*}

Given this reparameterization we can introduce the value functions and derive the Bellman equation similarly to~\cite{porta2006point}, which can be written in terms of the Q-function as follows:
\begin{multline*}
    \bmQ(\bmb_t^{\bms}, \bmb_t^{z}, \bma_t) = \int p_{\bmr}(\bmr | \bms_t, \bma_t) \bmb_t^\bms \ d\bms_t + \\ \gamma \int\int p(\bmo_{t+1}| \bmb_t^\bms, \bmb_t^z, \bma_t) \max_{\bma_{t+1}} \bmQ(\bmb_{t+1}^{\bms}, \bmb_{t+1}^{z}, \bma_{t+1})  d\bmo_{t+1}.
\end{multline*}

\begin{wrapfigure}{r}{0.3\textwidth}
\centering
\includegraphics[height=.27\textwidth]{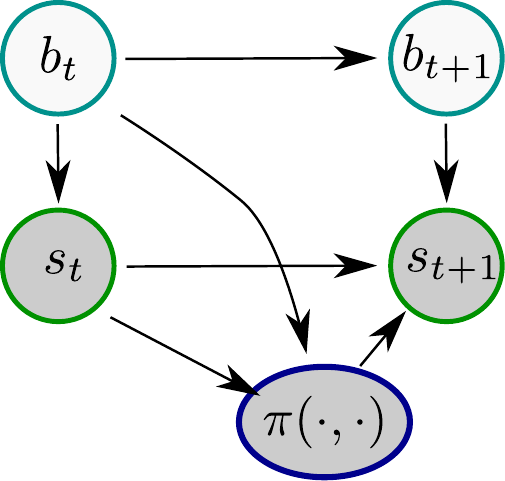}\caption{Graphical model for policy optimization }\label{si_fig:gm_po}
\end{wrapfigure} 
We, however, have an additional structure that allows for simplified value functions. First note that $\tilde \bmr_t(\bmb_t, \bma_t) = \bmr_t(\bms_t, \bmb_t^z, \bma_t)$, i.e., our reward depends directly on the observation and the belief of $z$.

Now we need to estimate $\E \max_{\bma_{t+1}} \bmQ(\bmb^\bms_{t+1}, \bmb^z_{t+1}, \bma_{t+1})$, where the expectation is taken over new observation, the probability distribution of which can be computed as:
\begin{multline*}
p(\bmo_{t+1} | \bmb_t^\bms, \bmb_t^z, \bma_t) =
    p(\bms_{t+1} | \bms_t, \bmb_t^z, \bma_t) = \\
\sum_{j} p(\bms_{t+1}|\bms_{t}, z_{t+1} =i, \bma_t) \bmrho_{j i} (\bmb^z_t)_j =  \bmN_i  .    
\end{multline*}

This allows rewriting the second part of the Bellman equation as follows:
\begin{multline*}
    \int p(\bmo_{t+1}| \bmb_t^\bms, \bmb_t^z, \bma_t) \max_{\bma_{t+1}} \bmQ(\bmb_{t+1}^{\bms}, \bmb_{t+1}^{z}, \bma_{t+1})  d\bmo_{t+1}  \\
    = \int\sum_i \bmN_i \max_{\bma_{t+1}} \bmQ(\bms_{t+1}, \bmb_{t+1}^{z}, \bma_{t+1})\ d\bms_{t+1},
\end{multline*}
where $\bmN_i$, $\bmb^z_{t+1}$ depend on $\bms_{t+1}$.
Finally, we have:
\begin{equation*}
    \bmQ(\bms_t, \bmb_t^{z}, \bma_t) = \bmr(\bms_t, \bmb_t^{z}, \bma_t) + 
    \gamma \int\sum_i \bmN_i \max_{\bma_{t+1}} \bmQ(\bms_{t+1}, \bmb_{t+1}^{z}, \bma_{t+1}) \  d\bms_{t+1},
\end{equation*}
which completes the proof.

\subsection{Performance gain for observable contexts}\label{si_ss:performance_gain}

It is not surprising that observing the ground truth of the contexts should improve the maximum expected return. In particular, even knowing the ground truth context model we can correctly estimate the context $z_{t+1}$ only \emph{a posteriori}, i.e., after observing the next state $\bms_{t+1}$. This means that with every context switch we will mislabel a context with a high probability. This will lead to a sub-optimal action and performance loss, which the following result quantifies using the value functions. 
\begin{thm} \label{si_thm:performance}
Assume we know the true transition model of the contexts and states and consider two settings: we observe the ground truth $z_t$ and we estimate it using $\bmb_{t}^z$. Assume we computed the optimal model-based policy $\pi(\cdot | \bms_t, \bmb_{t}^z)$ with the return $\cR$ and the optimal ground-truth policy  $\pi_{\rm gt}(\cdot | \bms_t, z_{t+1})$ with the corresponding optimal value functions $V_{\rm gt}(\bms, z)$ and $Q_{\rm gt}(\bms,z, \bma)$, then:
\begin{equation*}
  \E_{z_1, \bms_0} V_{\rm gt}(\bms_0, z_1) - \cR \ge    \E_{\tau,\bma^{\rm gt}_{t_m}\sim\pi_{\rm gt}, \bma_{t_m}\sim\pi}  \sum\limits_{m=1}^M \gamma^{t_m} (Q(\bms_{t_m}, z_{t_m+1}, \bma^{\rm gt}_{t_m}) - Q(\bms_{t_m}, z_{t_m+1}, \bma_{t_m})),
\end{equation*}
where $M$ is the number of misidentified context switches in a trajectory $\tau$.
\end{thm}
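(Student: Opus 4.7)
My plan is to apply the classical performance-difference identity of Kakade and Langford (2002) in the ground-truth observable-context MDP and then lower-bound the resulting sum by retaining only the summands attributable to misidentified context switches, exploiting the fact that every discarded summand is nonnegative by optimality of $\pi_{\rm gt}$.

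Concretely, I would first roll out a trajectory $\tau = (\bms_0, \bma_0, \bms_1, \bma_1, \dots)$ in the true environment with the belief-based policy $\pi$, set $\bma^{\rm gt}_t = \pi_{\rm gt}(\bms_t, z_{t+1})$ along this same trajectory, and telescope $V_{\rm gt}(\bms_0, z_1) - V^\pi(\bms_0, z_1)$ by inserting $\pm Q_{\rm gt}(\bms_t, z_{t+1}, \bma^{\rm gt}_t)$ at each step and peeling off the Bellman operator of $\pi_{\rm gt}$. Using $V_{\rm gt}(\bms, z) = Q_{\rm gt}(\bms, z, \pi_{\rm gt}(\bms, z))$ together with $\bma^{\rm gt}_t = \argmax_{\bma} Q_{\rm gt}(\bms_t, z_{t+1}, \bma)$, this yields
\begin{equation*}
V_{\rm gt}(\bms_0, z_1) - V^\pi(\bms_0, z_1) = \E_{\tau\sim\pi}\!\left[\sum_{t=0}^\infty \gamma^t \big(Q_{\rm gt}(\bms_t, z_{t+1}, \bma^{\rm gt}_t) - Q_{\rm gt}(\bms_t, z_{t+1}, \bma_t)\big)\right].
\end{equation*}
Taking expectation over the initial pair $(\bms_0, z_1)$ and recognising $\cR = \E_{\bms_0, z_1, \tau\sim\pi}[V^\pi(\bms_0, z_1)]$ produces the equality version of the claim with the sum ranging over every $t$.

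Next I would note that every summand is pointwise nonnegative because $\bma^{\rm gt}_t$ maximises $Q_{\rm gt}(\bms_t, z_{t+1}, \cdot)$. Restricting the sum to the indices $t_1, \dots, t_M$ of misidentified switches and dropping all other nonnegative contributions can therefore only decrease the right-hand side, which gives exactly the claimed inequality. Intuitively, at non-switch steps the belief keeps tracking the previously identified context through the sticky kernel of Theorem~\ref{thm:policy}(a) and the greedy maximiser of $\bmQ(\bms_t, \bmb_t^z, \cdot)$ tends to agree with $\bma^{\rm gt}_t$, so those gaps are small or vanish; the performance loss is concentrated at the switch times, and the bound retains exactly those contributions.

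The main obstacle I expect is the clean handling of the coupling in the performance-difference step, since the trajectory distribution is generated under $\pi(\cdot|\bms_t, \bmb_t^z)$ while each advantage evaluates the counterfactual action $\bma^{\rm gt}_t = \pi_{\rm gt}(\bms_t, z_{t+1})$ on the same state-context pair visited by the roll-out. The PDL resolves this automatically because the advantages are evaluated pointwise on the $\pi$-trajectory, matching the structure of the expectation on the right-hand side of the theorem statement. A secondary technical point is the validity of the infinite-horizon telescoping, which follows from $\gamma < 1$ together with bounded rewards.
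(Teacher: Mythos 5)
Your proposal is correct, and it reaches the stated bound by a cleaner mechanism than the paper's own argument. The paper does not invoke the performance-difference lemma with an unconditional identity; instead it argues via a ``best case scenario'': it assumes that, because the true model is known, the belief $\bmb_t^z$ identifies the context correctly \emph{a posteriori}, so that the action distributions of $\pi(\cdot|\bms_t,\bmb_t^z)$ and $\pi_{\rm gt}(\cdot|\bms_t,z_{t+1})$ coincide at every step except at the $M$ misidentified switch times; it then decomposes the return of a fixed context realization (first with a single switch at $t_1$, then with $M$ switches), peeling off $Q_{\rm gt}$ exactly at those times, obtains an equality $\cR(\bms_0,z_1)=V_{\rm gt}(\bms_0,z_1)-\sum_m \gamma^{t_m}\E(Q_{\rm gt}(\bms_{t_m},z_{t_m+1},\bma^{\rm gt}_{t_m})-Q_{\rm gt}(\bms_{t_m},z_{t_m+1},\bma_{t_m}))$ under those assumptions, and averages over context realizations, asserting that the general case can only be worse. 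Your route replaces this idealized-agreement assumption with the exact telescoping identity $V_{\rm gt}-V^{\pi}=\E_{\tau\sim\pi}\sum_t\gamma^t\bigl(Q_{\rm gt}(\bms_t,z_{t+1},\bma^{\rm gt}_t)-Q_{\rm gt}(\bms_t,z_{t+1},\bma_t)\bigr)$, valid even though $\pi$ is history-dependent through the belief (the advantages are evaluated pointwise along the $\pi$-trajectory, and the $(\bms,z)$ dynamics do not depend on the policy), and then drops the non-switch summands, which are nonnegative by optimality of $\pi_{\rm gt}$ --- legitimate even though the retained index set $\{t_1,\dots,t_M\}$ is trajectory-dependent, since the truncation is pointwise per realization. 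What your approach buys is rigor and directness: the inequality is proved with the actions actually taken by $\pi$, exactly as the statement requires, without the paper's informal step of declaring the best-case computation a lower bound for the general case; what the paper's approach buys is the interpretation that the bound is tight precisely when the only policy mismatches occur at misidentified switches, which is the intuition you relegate to a side remark. Your closing heuristic that off-switch gaps ``tend to vanish'' is not needed for the bound and should not be leaned on; the nonnegativity argument alone suffices.
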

\begin{proof}
Let us consider the best case scenario. As we assume that we have access to the ground truth model for computing the policy $\pi(\cdot | \bms_t, \bmb_{t}^z)$, we can assume that the ground truth value of $z_t$ has the highest probability mass in the vector $\bmb_t^z$. That is, we can assume that we can identify the correct context {\it a posteriori}. In effect, we can use {\it a priori} estimate of $z_{t+1}$ by transitioning to the next time step, i.e., using the vector $\bmb_t^z \bmR$. We can also assume that the action distributions of the policy $\pi$ and the ground truth policy $\pi_{\rm gt}$ are identical provided our {\it a priori} estimate of the context and the ground truth context are the same. This, however, is almost surely not true when the context switch occurs, as we need at least one sample from the transition model in the new context. Now if we can estimate the effect of this mismatch on the performance, this will provide us with a lower bound on the performance gain. 

Let $V_{\rm gt}(s)$, $Q_{\rm gt}(s, a)$ be the optimal value functions for the ground truth policy $\pi_{\rm gt}$ satisfying the Bellman equation:
\begin{equation*}
    \begin{aligned}
    Q_{\rm gt}(\bms, \bma, z) &= \E_{\bms'\sim p(\cdot|\bms, \bma, z), z' \sim \Cat(\bmrho_{z}) } \left(\bmr(\bms, \bma, \bms') + V_{\rm gt}(\bms', z')\right),\\
    V_{\rm gt}(\bms, z) &= \E_{\bma \sim \pi_{\rm gt}(\cdot| \bms, z)} Q_{\rm gt}(\bms, z, \bma).
    \end{aligned}
\end{equation*}

Consider a particular realization of the stochastic context variable $z_t$ (which is independent of $\bms_t$, $\bma_t$) and assume the context switched only once at $t_1$. Then we have 

\begin{multline*}
    \cR(\bms_0, z_1) = \E_{\bma_t \sim \pi(\cdot)} \sum\limits_{t=0}^T \gamma^t \bmr(\bms_{t}, \bma_t, \bms_{t+1}) = 
    \E_{\bma_t, \bma_{t_1} \sim \pi}\left(\sum\limits_{t=0}^{t_1-1} \gamma^t \bmr(\bms_{t}, \bma_t, \bms_{t+1})
    +\right. \\ \left.+\gamma^{t_1}\bmr(\bms_{t_1}, \bma_{t_1}, \bms_{t_1+1})+\gamma^{t_1+1}\sum\limits_{t=t_1+1}^T \gamma^{t-t_1-1} \bmr(\bms_{t}, \bma_t, \bms_{t+1}) \right)= \\
     V_{\rm gt}(\bms_0,z_1) - \gamma^{t_1} \E_{\bma \sim \pi_{\rm gt}, \bms_{t_1+1}, z_{t_1+2}} \left(\bmr(\bms_{t_1}, \bma, \bms_{t_1+1}) + \gamma V_{\rm gt}(\bms_{t_1+1}, z_{t_1+2})\right) + \\ 
    +\gamma^{t_1}\E_{\bma \sim \pi, \bms_{t_1+1}, z_{t_1 + 2}} \left(\bmr(\bms_{t_1}, \bma, \bms_{t_1+1}) + \gamma V_{\rm gt}(\bms_{t_1+1},z_{t_1+2})\right) = \\
    V_{\rm gt}(\bms_0, z_1) - \gamma^{t_1} \E_{\bma^{\rm gt}\sim\pi_{\rm gt}, \bma\sim\pi} (Q_{\rm gt}(\bms_{t_1}, z_{t_1+1}, \bma^{\rm gt}) - Q_{\rm gt}(\bms_{t_1}, z_{t_1+1}, \bma)).
\end{multline*}

In effect, we are using the $Q_{\rm gt}$ function to estimate the performance loss of one mistake. The same procedure can be repeated for context realizations with $M$ misidentified switches, where the number $M$ depends on the realization of the context variable
\begin{multline*}
    \cR(\bms_0, z_1) = \\
    V(\bms_0, z_1) - \sum\limits_{m=1}^M \gamma^{t_m} \E_{\bma^{\rm gt}_{t_m}\sim\pi_{\rm gt}, \bma_{t_m}\sim\pi} (Q(\bms_{t_m}, z_{t_m+1}, \bma^{\rm gt}_{t_m}) - Q(\bms_{t_m}, z_{t_m+1}, \bma_{t_m})).    
\end{multline*}
Now averaging over the context realizations proves the result. 
\end{proof}

\section{Algorithm Details} \label{appendix:algo_details}
There are three main components in our algorithm: the generative model derivation (HDP-C-MDP), the model learning algorithm with probabilistic inference and the control algorithms. We firstly briefly comment on each on these components to give an overview of the results and then explain our main contributions to each. 

In order to learn the model of the context transitions, we choose the Bayesian approach and we employ Hierarchical Dirichlet Processes (HDP) as priors for context transitions inspired by time-series modeling and analysis tools reported by~\cite{fox2008nonparametric, fox2008hdp}. We improve the model by proposing a context spuriosity measure allowing for reconstruction of ground truth contexts. We then derive a model learning algorithm using probabilistic inference. Having a model, we can take off-the-shelf frameworks such as~\citep{Pineda2021MBRL}, which can include a Model Predictive Control (MPC) approach using Cross-Entropy Minimization (CEM) (cf.~\cite{chua2018deep} and Appendix~\ref{appendix:cem}), or a policy-gradient approach Soft-actor critic (cf.~\cite{haarnoja2018soft} and Appendix~\ref{appendix:sac}), which are both well-suited for model-based reinforcement learning. While MPC can be directly applied to our model, for policy-based control we needed to derive the representation of the optimal policy and prove the dynamic programming principle for our C-MDP (see Theorem~\ref{thm:policy} in the main text and its proof in Appendix~\ref{proof_thm:distillation}). We summarize our model-based approach in Algorithm~\ref{si_algo:meta}. 

\begin{algorithm}[ht]
\SetAlgoLined
\textbf{Input:} $\varepsilon_{\rm distill}$ - distillation threshold, $N_{\rm warm}$ - number of trajectories for warm start, $N_{\rm traj}$ - number of newly collected trajectories per epoch, $N_{\rm epochs}$ - number of training epochs, {\sc agent} - policy gradient or MPC agent\\
Initialize {\sc agent} with {\sc random agent}, $\cD = \emptyset$; \\
\For{$i = 1,\dots, N_{\rm epochs}$}{
    Sample $N_{\rm traj}$ ($N_{\rm warm}$ if $i = 1$) trajectories from the environment with {\sc agent};\\
    Set $\cD_{\rm new} = \{(\bms^i,\bma^i)\}_{i=1}^{N_{\rm traj}}$, where $\bms^i=\{\bms^i_t\}_{t=-1}^T$ and $\bma^i=\{\bma^i_t\}_{t=-1}^T$ are the state and action sequences in the $i$-th trajectory. Set $\cD = \cD \cup \cD_{\rm new}$; \\
    Update generative model parameters by gradient ascent on ELBO in Equation~\ref{eq:elbo}; \\
    Perform context distillation with $\varepsilon_{\rm distill}$; \\
    \If{{\sc agent} is {\sc policy}}{
        Sample trajectories for policy update from $\cD$;\\
        Recompute the beliefs using the model for these trajectories;\\
        Update policy parameters
    }
}
\Return {\sc agent}
\caption{Learning to Control HDP-C-MDP}\label{si_algo:meta}
\end{algorithm}

\subsection{Slightly more details on the Hierarchical Dirichlet Processes}\label{app:hdp}
\textit{A Dirichlet process (DP),} denoted as $\DP(\gamma, H)$, is characterized by a concentration parameter $\gamma$ and a base distribution $H(\lambda)$ defined over the parameter space $\Theta$. A sample $G$ from $\DP(\gamma, H)$ is a probability distribution satisfying $(G(A_1), ..., G(A_r)) \sim \Dir(\gamma H(A_1),...,\gamma H(A_r))$ for every finite measurable partition $A_1,...,A_r$ of $\Theta$, where $\Dir$ denotes the Dirichlet distribution. Sampling $G$ is often performed using the stick-breaking process~\citep{sethuraman1994constructive} and constructed by randomly mixing atoms independent and identically distributed samples $\bmtheta_k$ from~$H$:
\begin{equation}
\label{si_eq:high_dp}
    \nu_k \sim \Beta(1, \gamma), \quad \beta_k=\nu_k\prod_{i=1}^{k-1} (1 - \nu_i), \quad G = \sum_{k=1}^{\infty} \beta_k \delta_{\bmtheta_k}, 
\end{equation}
where $\delta_{\bmtheta_k}$ is the Dirac distribution at $\bmtheta_k$, and the resulting distribution of $\bm{\beta} = (\beta_1,...\beta_\infty)$ is called $\GEM(\gamma)$ for Griffiths-Engen-McCloskey~\citep{teh2006hierarchical}.
The discrete nature of $G$ motivates the application of DP as a non-parametric prior for mixture models with an infinite number of atoms $\bmtheta_k$. 
We note that the stick-breaking procedure assigns progressively smaller values to $\beta_k$ for large $k$, thus encouraging a smaller number of meaningful atoms.

\textit{The Hierarchical Dirichlet Process (HDP)} is a group of DPs sharing a base distribution, which itself is a sample from a DP: $G \sim \DP(\gamma, H)$, $ G_j \sim \DP(\alpha, G)$  for all $j=0, 1, 2,\dots$~\citep{teh2006hierarchical}. The distribution $G$ guarantees that all $G_j$ inherit the same set of atoms, i.e., atoms of $G$, while keeping the benefits of DPs in the distributions $G_j$. HDPs have received a significant attention in the literature~\citep{teh2006hierarchical, fox2008hdp, fox2008nonparametric} with various applications including Markov chain modeling. 

In our case, the atoms $\{\bmtheta_k\}$ forming the context set $\widetilde \cC$ are sampled from $H(\lambda)$. It can be shown that a random draw $G_j$ from $\DP(\alpha, G)$ can be done using $\widetilde{\bm\rho}_j \sim \GEM(\alpha)$ and $\widetilde{\bmtheta}_k \sim G$. However, since  $\widetilde{\bmtheta}_k$ is sampled from $\widetilde\cC$, $G_j$ is also a distribution over $\widetilde \cC$ and 
\begin{equation*}
    G_j = \sum_{k=0}^\infty \widetilde{\rho}_{j k} \delta_{\widetilde{\bm\theta}_k} = \sum_{k=0}^\infty \rho_{j k} \delta_{\bm\theta_k},
\end{equation*}
for some $\bmrho_{j k}$, which can be sampled using another stick-break construction~\citep{teh2006hierarchical}. We consider its modified version introduced by~\cite{fox2011bayesian}:
\begin{equation}
    \mu_{j k} \ | \ \alpha, \kappa, \beta \sim \Beta\left( \alpha \beta_{k}  + \kappa \tilde{\delta}_{j k}, \ \alpha + \kappa  - \left(\sum_{i =1}^{k}  \alpha \beta_i +  \kappa \tilde{\delta}_{j i}\right) \right), \,\,
    \rho_{j k} = \mu_{j k}\prod_{i=1}^{k-1} (1 - \mu_{j i}),  
    \label{si_eq:low_dp}
\end{equation}
where $k \ge 1$, $j \ge0$, $\tilde{\delta}_{j k}$ is the Kronecker delta, the parameter $\kappa \ge 0$, called the sticky factor, modifies the transition matrix priors encouraging self-transitions. The sticky factor serves as another measure of regularization reducing the average number of transitions. 
Thus $\DP(\alpha, G)$ can serve as the prior for the initial context distribution $\bm{\rho}_0$ and each row $\bm{\rho}_j$ in the transition matrix $\bmR$.

\begin{figure}[ht]
    \centering
    \includegraphics[width=0.7\textwidth]{figures/graphical_models/generative_model.pdf}
    \caption{A probabilistic model for C-MDP with Markovian context}\label{si_fig:gm}
\end{figure}

In summary, our probabilistic model is constructed in Equations~\ref{si_eq:transition},\ref{si_eq:high_dp},\ref{si_eq:low_dp} and illustrated in Figure~\ref{si_fig:gm} as a graphical model. We stress that the HDP in its stick-breaking construction assumes that $|\widetilde \cC|$ is infinite and countable. In practice, however, we make an approximation and set $|\widetilde \cC| = K$ with a  large enough~$K$.

\subsection{Variational Inference for Probabilistic Modeling}
\label{appendix:vi}

Recall that our context MDP is represented as follows
\begin{equation*}
    \begin{aligned}
        &z_{t+1} \ | \ z_t, \{\bm\rho_j\}_{j=1}^{\infty} \sim \Mul(\bm\rho_{z_t}), \quad z_0 \ | \ \rho_0 \sim \Mul(\bm\rho_0), \\ 
        &\bm\theta_k \ | \ \lambda \sim H(\lambda), \\
        &\bms_t \ | \ \bms_{t-1}, \bma_{t-1}, z_{t}, \{\bm\theta_k\}_{k=1}^\infty \sim p(\bms_t|\bms_{t-1}, \bma_{t-1}, \bm\theta_{z_t}),
    \end{aligned}
\end{equation*}
and we depict our generative model as a graphical one in Figure~\ref{si_fig:gm}. Also recall that the distributions $\bmrho_j$ have the following priors:
\begin{equation}
\label{si_eq:gm2}
    \begin{split}
    &\rho_{j k} = \mu_{j k}\prod_{i=1}^{k-1} (1 - \mu_{j i}),\quad \mu_{j k} \ | \ \alpha, \kappa, \beta \sim \Beta\left( \alpha \beta_{k} + \kappa \tilde{\delta}_{j k}, \alpha + \kappa   - \left(\sum_{i =1}^{k}  \alpha \beta_i+  \kappa \tilde{\delta}_{j i}\right) \right), \\
    &\nu_k \ | \ \gamma \sim \Beta(1, \gamma), \qquad \beta_k=\nu_k\prod_{i=1}^{k-1} (1 - \nu_i) ,
    \end{split}
\end{equation}
where $k \in \mathbb{N}^{\geq 1}, j \in \mathbb{N}^{\geq 0}$, $t \in \mathbb{N}^{\geq 0}$ and $\tilde{\delta}_{j k}$ is the Kronecker delta function. 

We aim to find a variational distribution $q(\bm\nu, \bm\mu, \bm\theta)$ to approximate the true posterior $p(\bm\nu, \bm\mu, \bm\theta | \cD)$, where $\cD = \{(\bms^i,\bma^i)\}_{i=1}^N$ is a data set, $\bms^i=\{\bms^i_t\}_{t=-1}^T$ and $\bma^i=\{\bma^i_t\}_{t=-1}^T$ are the state and action sequences in the $i$-th trajectory. This is achieved by minimizing $\KL{q(\bm\nu, \bm\mu, \bm\theta)}{p(\bm\nu, \bm\mu, \bm\theta | \cD)}$, or equivalently, maximizing the evidence lower bound (ELBO):
\begin{equation*}
\ELBO = \E_{q(\bm\mu, \bm\theta)} \left[\sum_{i=1}^N \log p(\bms^i | \bma^i, \bm\mu, \bm\theta)\right] - \KL{q(\bm\nu, \bm\mu, \bm\theta)}{p(\bm\nu, \bm\mu, \bm\theta)}.
\end{equation*}
The variational distribution above involves infinite-dimensional random variables $\bm\nu, \bm\mu, \bm\theta$. To reach a tractable solution, we set $|\tilde\cC| = K$ and exploit a mean-field truncated variational distribution~\citep{blei2006variational, hughes2015reliable, bryant2012truly}. We construct the following variational distributions:
\begin{equation}
\label{si_eq:tvi}
    \begin{split}
        &q(\bm\nu, \bm\mu, \bm\theta) = q(\bm\nu)q(\bm\mu)q(\bm\theta), \ 
        q(\bm\theta|\hat{\bm\theta}) = \prod_{k=1}^{K} \delta(\bm\theta_k|\hat{\bm\theta}_k), \ q(\bm\nu|\hat{\bm\nu}) = \prod_{k=1}^{K-1} \delta(\nu_k|\hat{\nu}_{k}), \  q(\nu_K=1) = 1 ,
        \\
        &q(\bm\mu|\hat{\bm\mu}) = \prod_{j=0}^{K} \prod_{k=1}^{K-1} \Beta\left(\mu_{j k} \bigg| \hat{\mu}_{j k}, \hat{\mu}_{j} - \sum_{i=1}^k \hat{\mu}_{j i}\right), \quad q(\mu_{j K}=1) = 1, 
    \end{split}
\end{equation}
where the hatted symbols represent free parameters. Random variables not shown in~\eqref{si_eq:tvi} are conditionally independent of the data, and thus can be discarded from the problem. 
  
We maximize ELBO using stochastic gradient ascent. In particular, given a sub-sampled batch $\cB = \{(\bms^i, \bma^i)\}_{i=1}^B$, the gradient of ELBO is estimated as:
\begin{equation*}
    \begin{split}
        \nabla_{\hat{\bm\nu}, \hat{\bm\mu}, \hat{\bm\theta}} \ELBO =  &\frac{N}{B}\sum_{i=1}^B \nabla_{\hat{\bm\mu}, \hat{\bm\theta}} \ \E_{q(\bm\mu)} \left[ \log p(\bms^i | \bma^i, \bm\mu, \bm{\hat\theta})\right] 
        - \nabla_{\hat{\bm\nu}, \hat{\bm\mu}} \ \E_{q(\bm\nu)}[\KL{q(\bm\mu)}{p(\bm\mu|\bm\nu)}]
        \\ 
        &+ \nabla_{\hat{\bm\nu}} \log p(\hat{\bm\nu})
         + \nabla_{\hat{\bm\theta}} \log p(\hat{\bm\theta}),
    \end{split}
\end{equation*}
where we apply implicit reparameterization method~\citep{figurnov2018implicit, jankowiakO18pathwise} for gradients with respect to the expectations over Beta distributions. For computing the gradient with respect to the likelihood term $\log p(\bms^i | \bma^i, \bm\mu, \bm{\hat\theta})$, we exploit a message passing algorithm to integrate out the context indexes $z_{1:T}^i$. We present the details of the gradient computations in what follows.

\textbf{Gradient of $\log p(\bms^i | \bma^i, \bm\mu, \bm{\hat\theta})$.}  We drop the dependency on $\bm\mu, \bm{\hat\theta}$ in the following derivations. We have:
\begin{equation*}
\begin{split}
    \nabla \log p(\bms^i|\bma^i) &= \E_{p(\bmz^i|\bms^i, \bma^i)} \left[\nabla \log p(\bms^i|\bma^i)\right] = \E_{p(\bmz^i|\bms^i, \bma^i)} \left[\nabla \log \frac{p(\bms^i, \bmz^i|\bma^i)}{p(\bmz^i|\bms^i, \bma^i)} \right] \\
    &=  \E_{p(\bmz^i|\bms^i, \bma^i)} \left[\nabla \log p(\bms^i, \bmz^i|\bma^i) \right] - \E_{p(\bmz^i|\bms^i, \bma^i)} \left[\nabla \log p(\bmz^i|\bms^i, \bma^i) \right],
\end{split}
\end{equation*}
where $\bmz^i = \{z_t^i\}_{t=1}^T$ is the context index sequence. Since the second term equals zero, we have:
\begin{equation*}
    \begin{split}
        &\nabla \log p(\bms^i|\bma^i) = \E_{p(\bmz^i|\bms^i, \bma^i)}[\nabla \log p(\bms^i, \bmz^i|\bma^i)] \\
        = & \E_{p(z_1^i|\bms^i, \bma^i)}[\nabla \log p(\bms_1^i|\bms_{0}^i, \bma_{0}^i, z_1^i)p(z_1^i)] +\\
        +&\sum_{t=2}^T \E_{p(z_{t-1}^i, z_t^i|\bms^i, \bma^i)}[\nabla \log p(\bms_t^i|\bms_{t-1}^i, \bma_{t-1}^i, z_t^i)p(z_t^i|z_{t-1}^i)].
    \end{split}
\end{equation*}

Context index posteriors $p(z_0^i|\bms^i, \bma^i)$ and $p(z_{t-1}^i, z_t^i|\bms^i, \bma^i)$ required to compute the above expectation can be obtained by the message passing algorithm. The forward pass can be written as:
\begin{equation*}
    \begin{split}
        m_f(z_1^i) &= p(z_1^i, \bms_1^i|\bms_{0}^i, \bma_{0}^i) = p(\bms_1^i|\bms_{0}^i, \bma_{0}^i, z_1^i)p(z_1^i) \\
        m_f(z_t^i) &= p(z_t^i, \bms_{1:t}^i|\bms_{0}^i, \bma_{0:t-1}^i) =
        \sum_{z_{t-1}^i} p(z_t^i, z_{t-1}^i, \bms_{1:t-1}^i, \bms_t^i|\bms_{0}^i, a_{0:t-1}^i) \\ 
        &= p(\bms_t^i|\bms_{t-1}^i, \bma_{t-1}^i, z_t^i) \sum_{z_{t-1}^i} p(z_t^i|z_{t-1}^i) m_f(z_{t-1}^i). 
    \end{split}
\end{equation*}
The backward pass can be written as:
\begin{equation*}
    \begin{split}
        &m_b(z_T^i) = 1 \\
        &m_b(z_{t-1}^i) = p(\bms_{t:T}|\bms_{t-1}^i, \bma_{t-1:T}^i, z_{t-1}^i) = \sum_{z_t^i} p(\bms_t^i|\bms_{t-1}^i, \bma_{t-1}^i, z_t^i)p(z_t^i|z_{t-1}^i)m_b(z_t^i).
    \end{split}
\end{equation*}
Combining the forward and backward messages, we have:
\begin{equation*}
    \begin{split}
        &p(z_1^i|\bms^i, \bma^i) \propto p(z_1^i, \bms^i_{1:T}|\bms_{0}^i, \bma^i) = m_f(z_1^i)m_b(z_1^i)\\
        &p(z_{t-1}^i, z_t^i|\bms^i, \bma^i) \propto p(z_{t-1}^i, z_t^i, \bms_{1:t-1}^i, \bms_t^i, \bms_{t+1}^i |\bms_{0}^i, \bma^i) 
        = \\
        &=m_f(z_{t-1}^i)p(z_t^i|z_{t-1}^i)p(\bms_t^i|\bms_{t-1}^i, \bma_{t-1}^i, z_t^i)m_b(z_t^i).
    \end{split}
\end{equation*}
The forward pass estimates the posterior context distribution at time $t$ give the past observations and actions (i.e., for $k \le t$), which is similar to a filtering process. The backward pass estimates the context distribution at time $t$ give the future observations and actions (i.e., for $k \ge t$). Combining both passes allows to compute the context distribution at time $t$ given the whole trajectory.

\textbf{Gradient of ELBO}
\begin{equation}
\label{si_eq:elbo_grads}
    \begin{split}
    \nabla_{\hat{\bm\nu}} \ \ELBO &=  
    - \textcolor{blue}{\nabla_{\hat{\bm\nu}} \ \E_{q(\bm\nu)}[\KL{q(\bm\mu)}{p(\bm\mu|\bm\nu)}]}
    - \nabla_{\hat{\bm\nu}} \ \KL{q(\bm\nu)}{p(\bm\nu)},\\
        \nabla_{\hat{\bm\mu}} \ \ELBO =  &\frac{N}{B}\sum_{i=1}^B \textcolor{blue}{\nabla_{\hat{\bm\mu}} \ \E_{q(\bm\mu)} \left[ \log p(\bms^i | \bma^i, \bm\mu, \bm{\hat\theta})\right]}
        -  \ \E_{q(\bm\nu)} [\nabla_{\hat{\bm\mu}} \KL{q(\bm\mu)}{p(\bm\mu|\bm\nu)}],\\
    \nabla_{\hat{\bm\theta}} \ \ELBO &=  \frac{N}{B}\sum_{i=1}^B  \ \E_{q(\bm\mu)} \left[\nabla_{\hat{\bm\theta}} \log p(\bms^i | \bma^i, \bm\mu, \bm{\hat\theta})\right] + 
    \nabla_{\hat{\bm\theta}} \log p(\bm{\hat\theta}),
        \end{split}
\end{equation}
where the terms in blue involve differentiating an expectation over Beta distributions, which we compute by adopting the implicit reparameterization~\citep{figurnov2018implicit,jankowiakO18pathwise}. Considering a general case where $x \sim p_{\phi}(x)$ and the cumulative distribution function (CDF) of $p_{\phi}(x)$ is $F_\phi(x)$, it has been shown that:
\begin{equation*}
    \nabla_\phi \ \E_{p_\phi (x)} [f_\phi (x)] = \E_{p_\phi (x)} [\nabla_\phi f_\phi (x) + \nabla_x f_\phi (x)  \nabla_\phi x], \quad  \nabla_\phi x = -\frac{\nabla_\phi F_\phi(x)}{p_\phi (x)}.
\end{equation*}

\subsection{Justification for the variational distributions}\label{appendix:vi_just}
Here, we provide both intuitive and empirical justifications for the choice of variational distributions in~\eqref{si_eq:tvi}. 

The mean-field approximation is mainly based on the tractability consideration where a reparameterizable variational distribution is required for gradient estimation~\citep{blei2017variational}. Besides, the truncation level is set to $K$, which reduces an infinite dimensional problem to a finite one.

The intuition of choosing the point estimation for $q(\bm\nu)$ is the following: The $q(\bm\mu)$ in~\eqref{si_eq:tvi} induces a variational distribution $q(\bm\rho)$, following $\rho_{j k} = \mu_{j k}\prod_{i=1}^{k-1} (1 - \mu_{j i})$ in ~\eqref{si_eq:gm2}. When observing reasonable amount of trajectories, the optimal $q^\ast(\bm\rho)$ should center around the ground-truth initial context distribution and the context transition. The HDP prior in our generative model specifies that $\bm\rho_j | \alpha, \bm\beta \sim \DP(\alpha, \bm\beta)$ \citep{teh2006hierarchical}, which means $\bm\beta$ serves as the expectation of the initial context distribution and each row in the context transition. Intuitively, the optimal $q^\ast(\bm\beta)$, which is induced from $q^\ast(\bm\nu)$, should center around the stationary distribution of the context chain. Therefore, each factor $q^\ast(\nu_{k})$ in the optimal $q^\ast(\bm\nu)$ is supposed to be uni-modal, and thus a point estimation could be a reasonable simplification. This intuition was supported by our computing $\bm\beta$ and stationary distributions during training resulting in similar distilled Markov chains. 

\begin{figure}
     \centering
\begin{subfigure}[b]{0.45\textwidth}
\centering \includegraphics[width=0.99\textwidth]{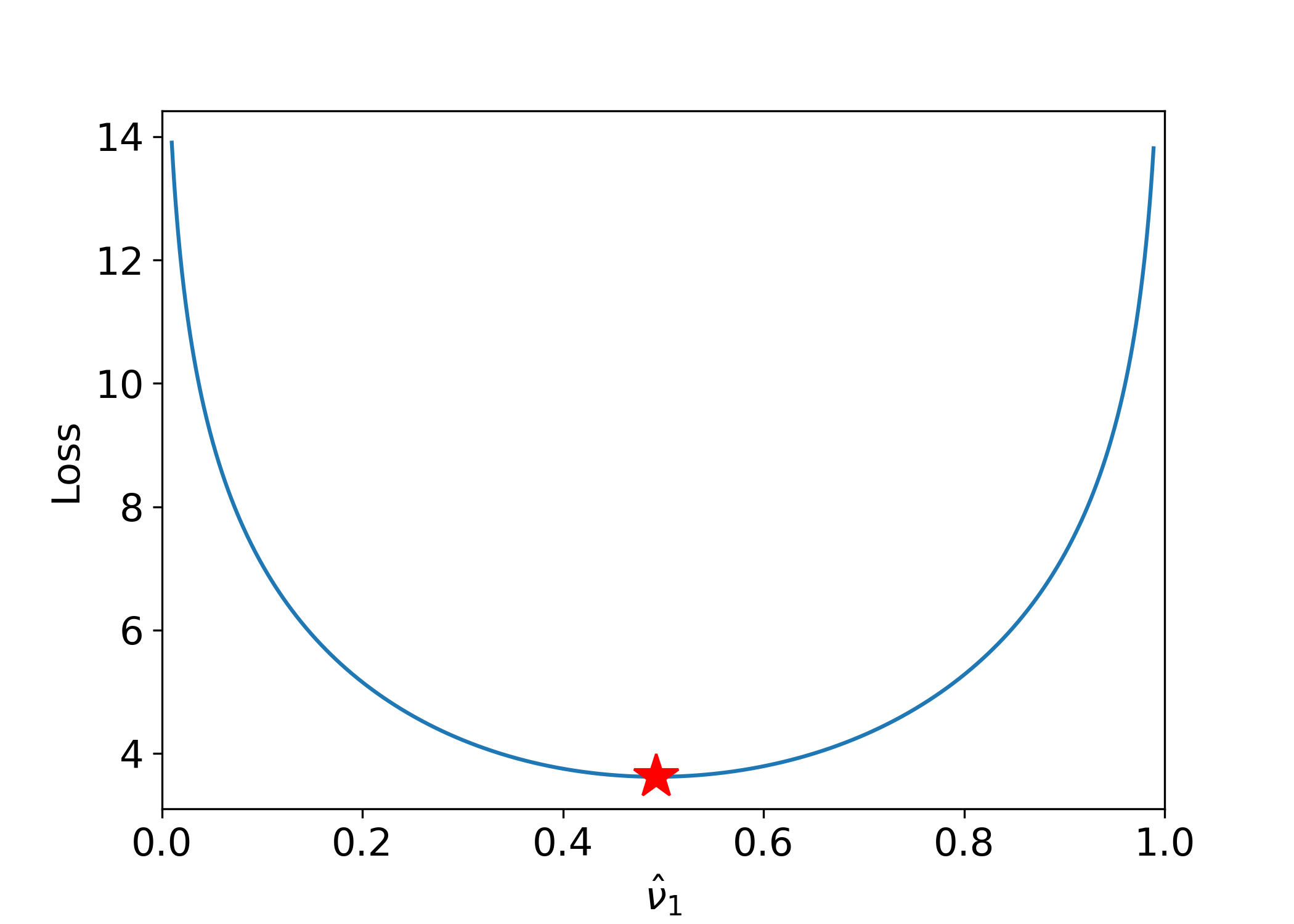}\caption{Point estimation loss}\label{si_fig:point_est_loss}
\end{subfigure}
\begin{subfigure}[b]{0.45\textwidth}
\centering \includegraphics[width=0.99\textwidth]{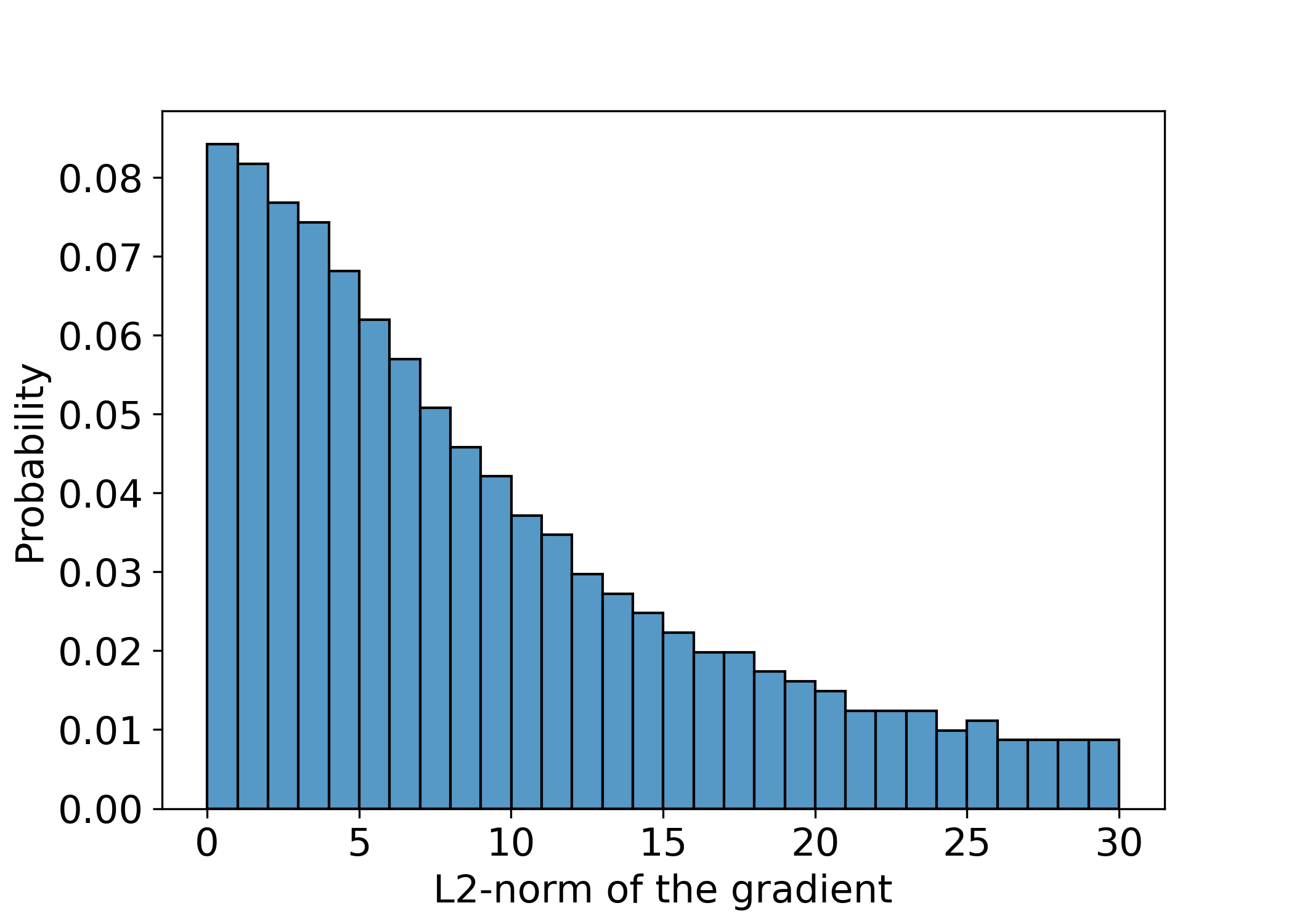}\caption{Point estimation gradient}\label{si_fig:point_est_gradient}
\end{subfigure}
\begin{subfigure}[b]{0.45\textwidth}
\centering\includegraphics[width=0.99\textwidth]{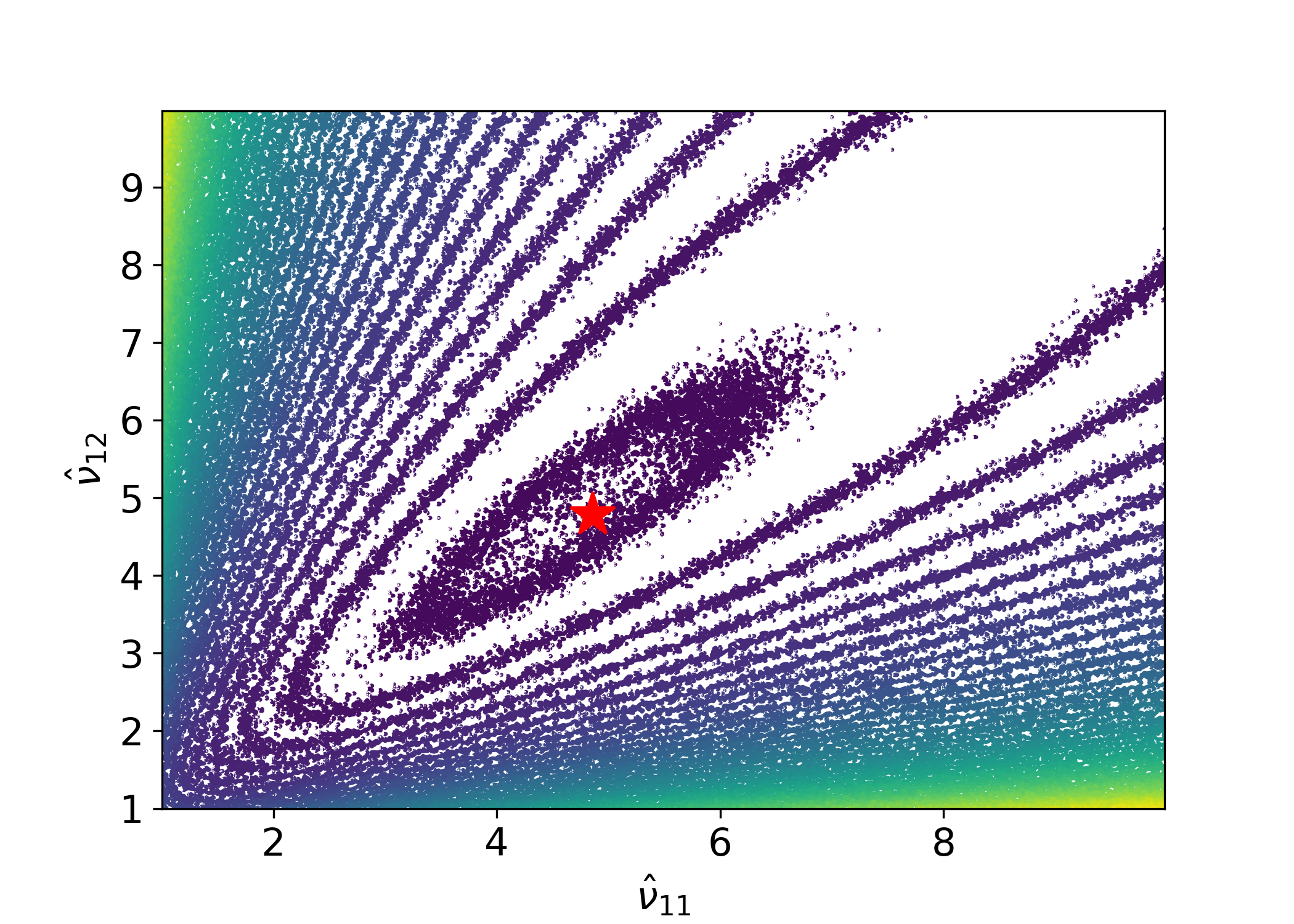}\caption{Beta variational distribution loss}\label{si_fig:full_beta_loss}
\end{subfigure}
\begin{subfigure}[b]{0.45\textwidth}
\centering\includegraphics[width=0.99\textwidth]{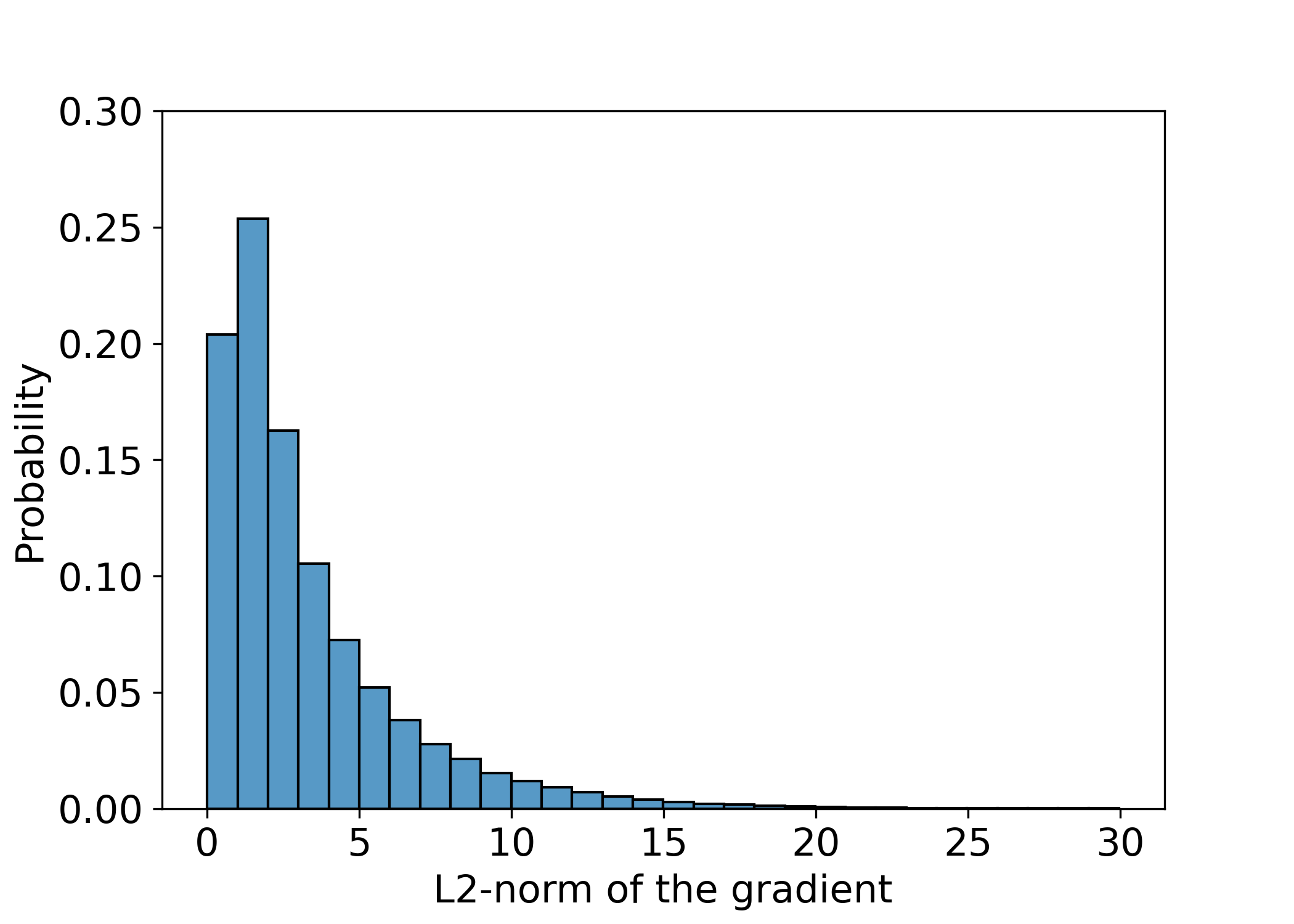}\caption{Beta variational distribution gradient}\label{si_fig:full_beta_gradient}
\end{subfigure}
    \caption{Loss function ~\eqref{si_eq:nu_loss} and the empirical distribution of its gradient under two configurations of $q(\bm\nu)$}
    \label{si_fig:nu_loss}
\end{figure}

We then conduct a simple numerical study to verify the intuition above, which  reveals another problem when using a full Beta variational distribution for $\bm\nu$. Consider the following context Markov chain:
\begin{equation*}
    \bm\rho_0 = \begin{pmatrix}
    0.6 \\
    0.4
    \end{pmatrix}, \qquad
    \bmR = \begin{pmatrix} \bm\rho_1^T \\ \bm\rho_2^T \end{pmatrix} = \begin{pmatrix}
    0.7 \quad  0.3 \\
    0.2 \quad 0.8
    \end{pmatrix}.
\end{equation*}
We set hyper-parameters in~\eqref{si_eq:gm2} as $\gamma=\alpha=1, \kappa=0, K=2$ and assume that, with sampled trajectories $\mathcal{D}$, the learned optimal  $q^\ast(\bm\mu)$ is given by:
\begin{equation*}
    q^\ast(\mu_{01})q^\ast(\mu_{11})q^\ast(\mu_{21}) = \Beta(\mu_{01}|13.8, 9.2) \ \Beta(\mu_{11}|14.0, 6.0) \ \Beta(\mu_{21}|3.0, 12.0).
\end{equation*}
Recall that $\rho_{j1}=\mu_{j1}$ in~\eqref{si_eq:gm2}. The assumed $q^\ast(\mu_{jk})$ has the mean at its ground-truth value and the variance $0.01$. According to~\eqref{si_eq:elbo_grads}, maximizing ELBO w.r.t $q(\bm\nu)$ is equivalent to:
\begin{equation}
\label{si_eq:nu_loss}
    \min_{q(\bm\nu)} \  \E_{q(\bm\nu)}[\KL{q^\ast(\bm\mu)}{p(\bm\mu|\bm\nu)}] + \nabla_{\hat{\bm\nu}} \ \KL{q(\bm\nu)}{p(\bm\nu)}.
\end{equation}
where $q(\bm\mu)$ is fixed to the assumed optima $q^\ast(\bm\mu)$. 

We investigate two configurations of $q(\bm\nu)$: (1) a point estimation where $q(\bm\nu|\hat{\bm\nu}) =  \delta(\nu_1|\hat{\nu}_{1})$; (2) a full Beta variational distribution where $q(\bm\nu|\hat{\bm\nu}) =  \Beta(\nu_1|\hat{\nu}_{11}, \hat{\nu}_{12})$. The results are shown in Figure~\ref{si_fig:nu_loss}. The optimal point estimation is $q^\ast(\bm\nu)=\delta(\nu_1|0.493)$ (labeled by the red star in Figure~\ref{si_fig:point_est_loss}), while the optimal Beta distribution in Figure~\ref{si_fig:full_beta_loss} is $q^\ast(\bm\nu)=\Beta(\nu_1|4.86, 4.78)$. The Beta optimum is uni-modal and its mode 0.505 is very close to the point estimation $0.493$, which is consistent with our intuition. Comparing Figure~\ref{si_fig:point_est_gradient} with~\ref{si_fig:full_beta_gradient}, we observe that the point estimation can provide gradients better suited for optimization while using a Beta variational distribution potentially suffers from vanishing gradients. We observed this phenomenon in our model learning experiments as well. 

Since the transition function $p(\bms_t|\bms_{t-1}, \bma_{t-1}, \bm\theta_{z_t})$ is modeled by neural networks, it is generally hard to predict any property of the true posterior of $\bm\theta$ and choose an appropriate variational distribution. Bayesian neural network literature attempt to tackle this problem~\citep{welling2011bayesian, blundell2015weight, kingma2015variational, gal2016dropout,ritter2018scalable}. However, most methods have considerably high computational complexity and it is not trivial to evaluate the quality of generated probabilistic prediction. In this work, we explicitly assume the distribution of $\bms_t$ whose parameters are fitted by neural networks. The transition model is still capable of generating probabilistic prediction with a point estimation of $\bm\theta$. This assumption/simplification is followed by many model-based RL works.

\subsection{Context Distillation}
\label{appendix:context_distill}
\begin{algorithm}[ht]
\caption{Context distillation}
\label{alg:context-distillation}
\SetAlgoLined
\textbf{Inputs:} $\varepsilon_{\rm distil}$ - distillation threshold; $\Bar\bmR$ - expected context transition matrix; $\hat\bmbeta$ - weights of HDP's base distribution \\
Determine the distillation vector $v$ using one of the following choices: \\
~~~~~~~(a) stationary distribution of the chain, $v$ such that $v= v\Bar\bmR$;\\ 
~~~~~~~(b) weights of HDP's base distribution, $v = \hat\bmbeta$.\\
Determine distilled context indexes $\cI_1$ and spurious context indexes $\cI_2$ as follows: $\cI_1 = \{i | v_i \ge \varepsilon_{\rm distil} \}$, $\cI_2 = \{i | v_i < \varepsilon_{\rm distil}\}$; \\
Compute $\hat \bmR$ as follows: \\
\If{{\sc agent} is {\sc MPC}}{
    $\hat \bmR= \Bar\bmR_{\cI_1, \cI_1} + \Bar\bmR_{\cI_1, \cI_2}(\bmI - \Bar\bmR_{\cI_2, \cI_2})^{-1} \Bar\bmR_{\cI_2, \cI_1}$.
}
\ElseIf{{\sc agent} is {\sc policy}}{
    $\hat \bmR= \begin{pmatrix}
    \Bar\bmR_{\cI_1, \cI_1} + \Bar\bmR_{\cI_1, \cI_2}(\bmI - \Bar\bmR_{\cI_2, \cI_2})^{-1} \Bar\bmR_{\cI_2, \cI_1} &  \bm{0} \\
    (\bmI - \Bar\bmR_{\cI_2, \cI_2})^{-1} \Bar\bmR_{\cI_2, \cI_1} & \bm{0}
    \end{pmatrix}$.
}
\Return $\hat \bmR$ - distilled probability transition matrix.
\end{algorithm}

At every iteration of model learning, we can extract MAP parameter estimates $\{\bm\theta_k\}_{k=1}^K$ and approximated posteriors of $\bm\rho_0$ and $\bmR = [\bm\rho_1,...,\bm\rho_K]$, which are induced from $q(\bm\mu)$. Let us also define the expected context initial distribution $\Bar{\bm\rho}_0 = \E_{q(\bm\mu)}[\bm\rho_0]$ and the expected context transition matrix $\Bar{\bmR} = \E_{q(\bm\mu)}[\bm\bmR]$. These MAP estimates are used during training as well as testing for sampling the values $z$. Hence distilling $\Bar{\bmR}$ has an effect on training as well as testing. 

Our distillation criterion is based on the values of the stationary distribution of the context Markov chain. Recall that one can compute the stationary distribution $\bmrho^\infty$ by solving $\bmrho^\infty = \bmrho^\infty \Bar{\bmR}$. Now the meaningful context indexes $\cI_1 = \{i | \bmrho^\infty_i \ge \varepsilon_{\rm distil} \}$ and spurious context indexes $\cI_2 = \{i | \bmrho^\infty_i < \varepsilon_{\rm distil}\}$ can be chosen using a distillation threshold $\varepsilon_{\rm distil}$. Then, we distill the learned contexts by simply discarding $\hat{\bmtheta}_{\cI_2}$. Meanwhile, the context Markov chain also needs to be reduced. For $\Bar{\bm\rho}_0$, we gather those dimensions indexed by $\cI_1$ into a new vector $\hat{\bm\rho}_0$ and re-normalize $\hat{\bm\rho}_0$. In addition, $\Bar{\bmR}$ can be reduced to $\hat{\bmR}$ following the Theorem \ref{thm:distillation} in the main text.

Perhaps, a less rigorous, but definitely a simpler approach is choosing the index sets $\cI_1$ and $\cI_2$ using $\hat\bmbeta$ --- a MAP estimation of $\bm\beta$ computed using $\hat{\bm\nu}$. Since optimizing the $\KL{q(\bm\mu)}{p(\bm\mu|\hat{\bm\nu})}]$ term in ELBO is essentially driving the posterior of $\bm\rho_{0:K}$ toward $\hat{\bm\beta}$. Therefore, $\hat{\bm\beta}$ can be seen as a `summary' distribution over contexts and we can consider the $k$-th context as a redundancy if $\hat{\bm\beta}_k$ is small. It is not clear if $\hat{\bmbeta}$ has a direct relation to the stationary distribution of the Markov chain with the transition probability $\hat{\bmR}$. However, we have observed that the magnitudes of the entries of $\hat\bmbeta$ and $\bmp^\infty$ are correlated. Hence, in order to avoid computing an eigenvalue decomposition at every context estimation one can employ distillation using $\hat\bmbeta$. 

Both approaches are summarized in Algorithm~\ref{alg:context-distillation}. For policy optimization, we actually need to keep the number of contexts constant as dealing with changing state-belief space can be challenging during training. Therefore, the transition matrix $\hat \bmR$ has the same dimensions as $\Bar\bmR$, where the transition probabilities between spurious contexts and from meaningful to spurious context are set to zero. We can still remove the spurious contexts after training both from the model and the policy.

\newpage 
\subsection{MPC using Cross-Entropy Method}
\label{appendix:cem}
\begin{wrapfigure}{r}{0.4\textwidth}
\centering 
\includegraphics[height=.24\textwidth]{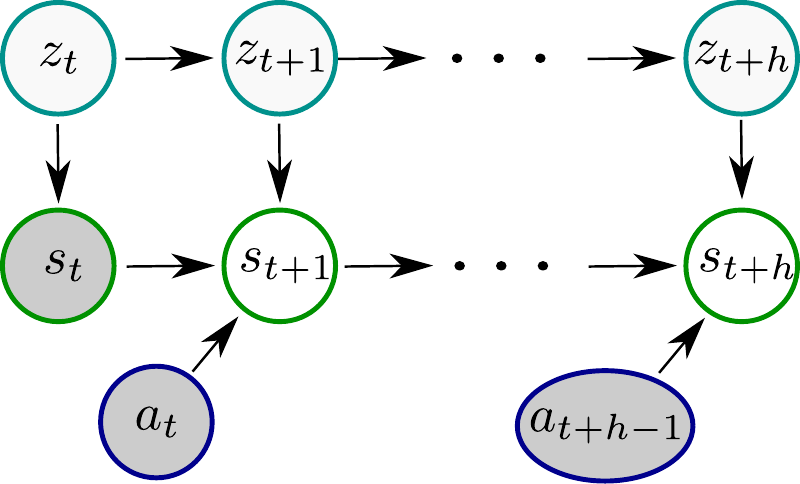}\captionof{figure}{Graphical model for the MPC problem }\label{si_fig:gm_mpc}
\end{wrapfigure}
This procedure is gradient-free, which benefits lower-dimensional settings, but could be less efficient in higher dimensional environments. It is also known to be more efficient than random shooting methods~\citep{chua2018deep}. The idea of this approach is quite simple and sketched in Algorithm~\ref{si_alg:cem}. At the state $\bms_t$ with an action plan $\{\bma_{k}\}_{k = t}^{t+H}$, we sample plan updates $\{\bmdelta_{k}^i \}_{k=t}^{t+H}$. We then roll-out trajectories and compute the average returns for each plan $\{\bma_{k}+\bmdelta_{k}^i\}_{k = t}^{t+H}$. We pick $N_{\rm elite}$ best performing action plans $\{\{\bma_{k}+\bmdelta_{k}^{i_j}\}_{k =t}^{t+H}\}_{j=1}^{N_{\rm elite}}$, compute the empirical mean $\tilde \bmmu_k$ and variance $\tilde \bmSigma_k$ of the elite plan updates $\bmdelta_k^{i_j}$,
and then compute the updates on the action plan distribution as follows: 
\begin{equation}
    \begin{aligned}
        \bmmu_k &:= (1- l_r) \bmmu_k + l_r  \tilde \bmmu_k,  \\  
        \bmSigma_k &:= (1 - l_r) \bmSigma_k + l_r \tilde \bmSigma_k    ,
    \end{aligned} \label{si_eq:cem_update}
\end{equation}
where $l_r$ is the learning rate. 

\begin{algorithm}[ht]
\caption{MPC based on CEM}
\label{si_alg:cem}
\SetAlgoLined
\textbf{Inputs:} $\{\bma_{k}\}_{k = t}^{t+H}$,
$\{\bmmu_{k}\}_{k = t}^{t+H}$,
$\{\bmSigma_{k}\}_{k = t}^{t+H}$,
$N_{\rm epochs}$, $N_{\rm elite}$, $N_{\rm traces}$, $N_{\rm pops}$, $l_r$, $\bms$, $H$ \\
\For{$j=0, \dots, N_{\rm epochs}$}{
    Sample action plan updates   $\{\{\bmdelta_k^i\}_{k=t}^{t+H}\}_{i=1}^{N_{\rm pops}}$, where $\bmdelta_k^i \sim \cN(\bmmu_{k}, \bmSigma_{k})$; \\
    Roll-out  $N_{\rm traces}$ for each update plan; \\
    Compute the returns $1/N_{\rm traces} \sum_{p=1}^{N_{\rm traces}} \sum_{k=t}^{t+H} r(\bms_k^p, \bma_k + \bmdelta_k^i)$ with $\bms_t^p = \bms$ for all $p$; \\
    Pick $N_{\rm elite}$ best performing action plans; \\
    Update the sampling distributions $\{\bmmu_{k}\}_{k = t}^{t+H}$,         $\{\bmSigma_{k}\}_{k = t}^{t+H}$ as in~\eqref{si_eq:cem_update}.
}
\end{algorithm}

\subsection{Soft-Actor Critic}
\label{appendix:sac}
We reproduce the summary of the soft-actor critic algorithm by~\cite{openai_spinning_up}, which we found very accessible. The soft-actor critic algorithm aims at solving a modified RL problem with an entropy-regularized objective:
\begin{equation*}
    \pi = \argmax_{\pi} \E_{\tau \sim \pi} \left[\sum\limits_{t=0}^T \gamma^t \bmr(\bms_t, \bma_t) + \alpha H(\pi(\cdot| \bms_t)) \right],
\end{equation*}
where $H(P) = -\E_{x\sim P}\left[\log(P(x)\right]$ and $\alpha$ is called the temperature parameter. The entropy regularization modifies the Bellman equation for this problem as follows:

\begin{equation*}
    Q^{\pi}(\bms, \bma) = \E_{\bma' \sim \pi, \bms' \sim p(\cdot| \bms, \bma)} \left[
    \bmr(\bms, \bma) + \gamma(Q^{\pi}(\bms', \bma') - \alpha \log \pi(\bma' | \bms'))
    \right].  
\end{equation*}

The algorithm largely follows the standard actor-critic framework for updating value functions and policy, with a few notable changes. First, two Q functions are used in order to avoid overestimation of the value functions . In particular, the loss for value learning is as follows:
\begin{align}
    L_{\rm value, i}(\bmphi_i, \cD) &= \E_{(\bms, \bma, \bmr, \bms', \bmd) \sim \cD} \left[ \left(Q_{\bmphi_i}(\bms, \bma) - y(\bmr, \bms',\bmd)\right)^2\right],\label{sac:value_loss} \\
    y(\bmr, \bms',\bmd) &= \bmr + \gamma (1 - \bmd) \left(\min\limits_{j=1,2} Q_{\bmphi_{\rm targ, j}}(\bms',\bma') -\alpha \Log\pi_{\bmpsi}(\bma'|\bms') \right).\label{sac:q_targets}
\end{align}
For policy updates the reparameterization trick is used allowing for differentiation of the policy. Namely, the policy loss function is as follows:
\begin{align}
L_{\rm policy}(\bmpsi, \cD) &= -\E_{\bms\sim D, \xi\sim\cN(\bm{0}, \bmI)} \min\limits_{j=1,2} Q_{\phi_{j}}(\bms',\tilde \bma_\bmpsi) -\alpha \Log\pi_{\bmpsi}(\tilde \bma_\bmpsi | \bms'),\label{sac:policy_loss} \\
\tilde \bma_\bmpsi &= \tanh{\left(\bmmu_\bmpsi + \bmsigma_\bmpsi \odot \bmxi\right)}, \quad \bmxi \sim \cN(\bm{0}, \bmI) . \label{sac:reparam_trick}
\end{align}

\begin{algorithm}[ht]
\caption{Soft-actor critic (basic version)}
\label{alg:sac}
\SetAlgoLined
\textbf{Inputs:} $N_{\rm epochs}$ - number of epochs, $N_{\rm upd}$ - number of gradient updates per epochs, $N_{\rm target-freq}$ - target value function update frequency, $N_{\rm samples}$ - number of steps per epoch, $l_r$, $w$ - learning rates\\
$N_{\rm total-upd} = 0$. \\
Initialize parameters $\bmpsi$, $\bmphi_i$,
$\bmphi_{\rm target, i} = \bmphi_i$;\\
\For{$j=0, \dots, N_{\rm epochs}$}{
    Sample $N_{\rm samples}$ steps from the environment with $\bma \sim \pi_\bmpsi(\cdot | \bms)$ resulting in the buffer update $\cD_{\rm new} = \{(\bms_i, \bma_i, \bms_i', \bmr_i, \bmd_i)\}_{i=1}^{N_{\rm samples}}$; \\
    Set $\cD = \cD_{\rm new} \cup \cD$;\\
    Sample a batch $\cB$ from the buffer $\cD$; \\
    \For{$k=0, \dots, N_{\rm upd}$}{
        $N_{\rm total-upd} \leftarrow N_{\rm total-upd} + 1$; \\
        Update parameters of the value functions $\bmphi_i \leftarrow \bmphi_i - l_r \nabla_{\bmphi_i} L_{\rm value,i}(\bmphi_i, \cB)$; \\
        Update parameters of the policy $\bmpsi \leftarrow \bmpsi - l_r \nabla_\bmpsi L_{\rm policy}(\bmpsi, \cB)$; \\
        \If{$\mod(N_{\rm total-upd}, N_{\rm target-freq}) = 0$ }{
            Update parameters of the target value function
            $\bmphi_{\rm target, i} \leftarrow w \bmphi_{\rm target, i} + (1 - w) \phi_i$;
        }
    }
}
\Return $\pi_\bmpsi$
\end{algorithm}

\section{Experiment Details}
\label{appendix:exp-details}
\subsection{Learning algorithms} \label{si_sec:algo_details}

\paragraph{Model learning}  We implemented the model learning using the package Pyro~\citep{bingham2018pyro}, which is designed for efficient probabilistic programming. Pyro allows for automatic differentiation, i.e., we do not need to explicitly implement message passing and reparametrized gradients for the ELBO gradient computation. We still need, however, a forward message pass to compute the belief estimate, e.g., to perform filtering on the variable $z_t$ when needed. 

\paragraph{PPO with an RNN model} We modified an implementation of PPO by~\cite{ppo-pytroch} to account for our belief model. In our implementation, the RNN with a hidden state $\bmh$ at time $t$ is taking the inputs $\bmh_{t-1}, \bms_{t-1}, \bma_{t-1}$, while producing the output $\bmh_t$. What is left is to project the hidden state onto the belief space using a decoder, which we have chosen as $\widehat \bmb_t = {\rm softmax} (\bmW \bmh_t)$, where the length of the vector $\widehat \bmb_t$ is equal to the number of contexts. The architecture is depicted in Figure~\ref{si_fig:rnn_policy}. Note that one can see the RNN and the decoder architecture as a model for the sufficient information state for the POMDP. We have experimented with different architectures, e.g., projecting to a larger belief space to account for spurious contexts, removing the decoder altogether etc. These architectures, however, did not yield reasonable results.

\begin{figure}[ht]
     \centering
     \begin{subfigure}[b]{0.9\textwidth}
    \centering
     \includegraphics[width=0.7\textwidth]{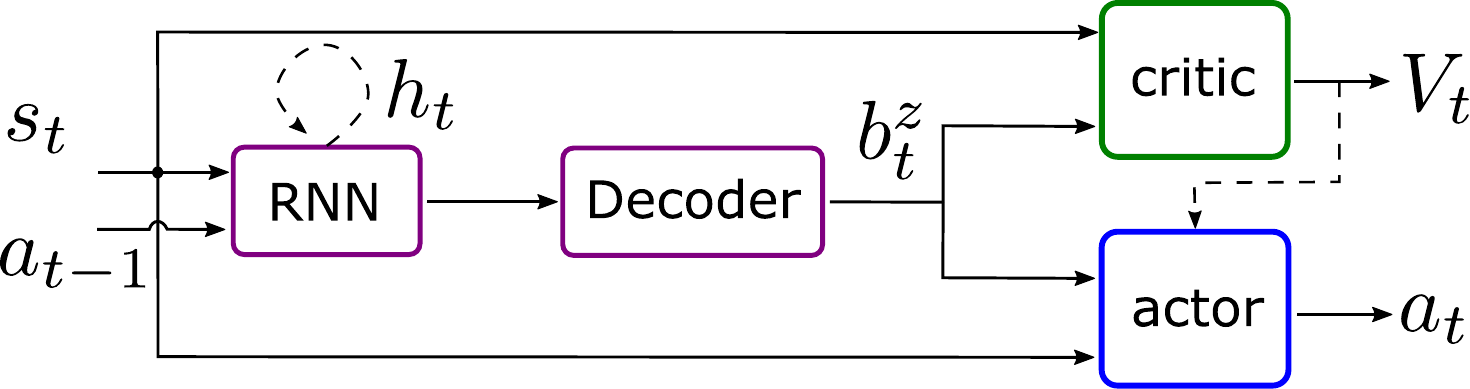}
     \caption{RNN policy}\label{si_fig:rnn_policy}
     \end{subfigure}
      \begin{subfigure}[b]{0.45\textwidth}
     \includegraphics[width=0.9\textwidth]{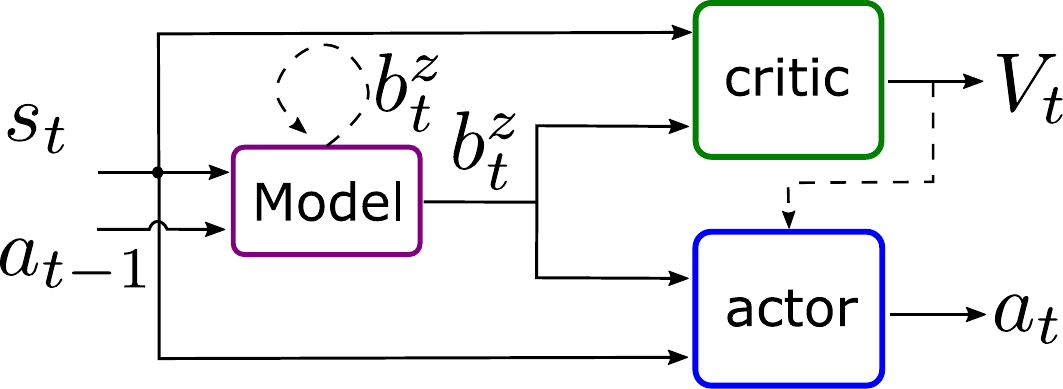}
     \caption{Model-based policy}
     \label{si_fig:mb_policy}
     \end{subfigure}
     \begin{subfigure}[b]{0.45\textwidth}
    \centering
     \includegraphics[width=0.8\textwidth]{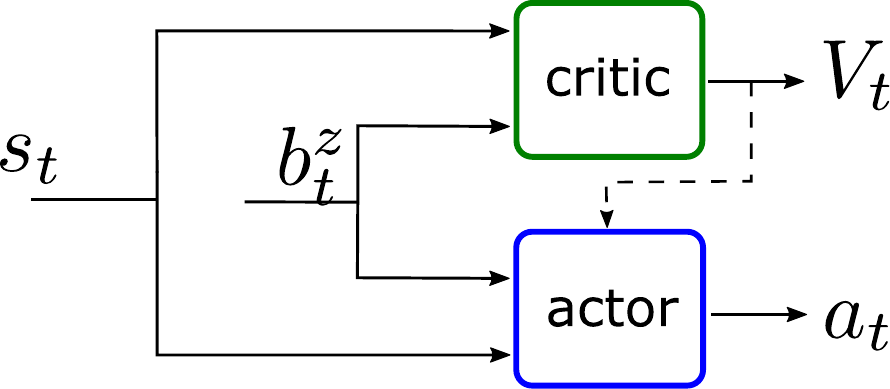}
     \caption{FI policy}\label{si_fig:fi_policy}
     \end{subfigure}
     \caption{Policy architectures}

\end{figure}

\paragraph{GPMM.} We took the implementation by~\cite{xu2020task}, which is able to swing-up the pole attached to the cart and adapt to environments with different parameters. 

\paragraph{SAC.} We based our implementation largely on~\citep{pytorch_sac_pranz24} with some inspiration from~\citep{pytorch_sac}. We use two architectures: full information policy (see Figure~\ref{si_fig:fi_policy}) and model-based policy  (see Figure~\ref{si_fig:mb_policy}). The full information policy is using one hot encoded true context and is, therefore, used as a reference for the best case performance only. 

\paragraph{CEM-MPC.} We implemented the algorithm from scratch in PyTorch. 

\subsection{Environments and their Models} \label{si_sec:modeling_detals}
\begin{figure}[ht]
     \centering
     \begin{minipage}{0.48\textwidth}
     \begin{subfigure}[b]{0.97\textwidth}
\centering\includegraphics[width=0.97\textwidth]{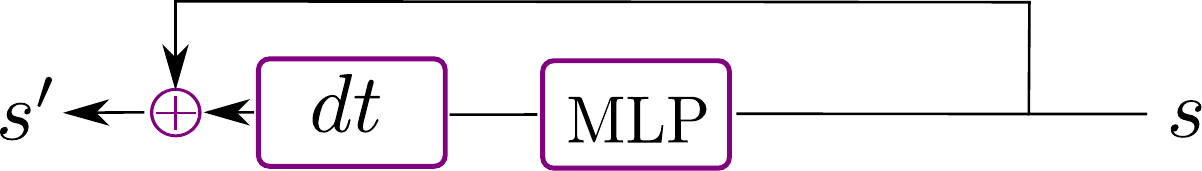}\caption{Cart-Pole}\label{si_fig:generic_model}
\end{subfigure}
     \begin{subfigure}[b]{0.97\textwidth}
\centering \includegraphics[width=0.97\textwidth]{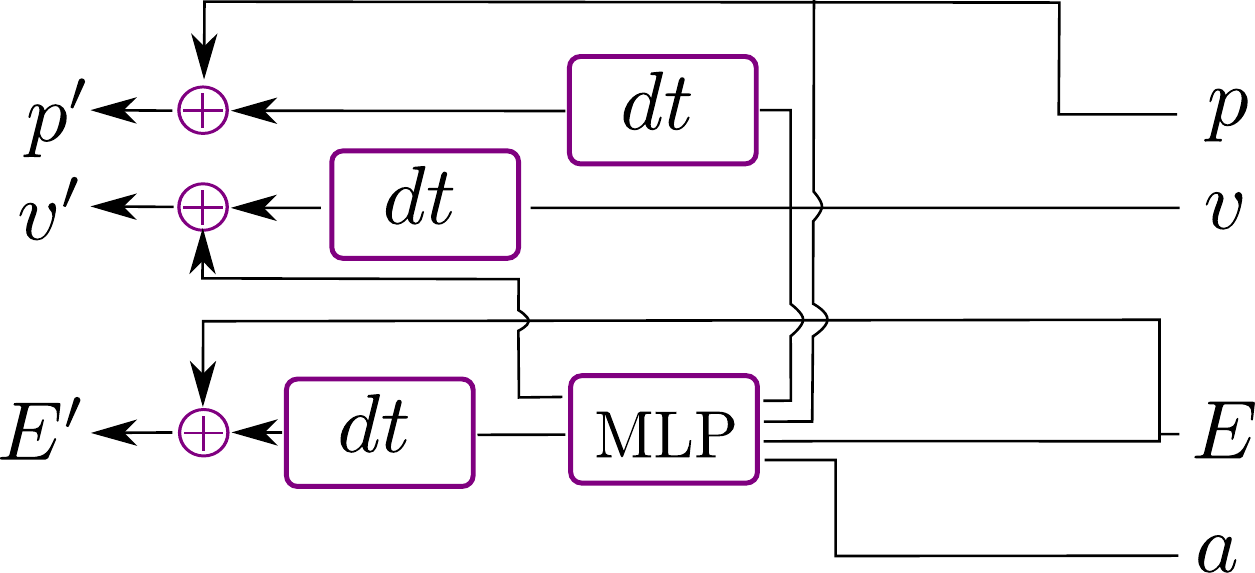}\caption{Vehicle}\label{si_fig:vehicle}
\end{subfigure}
     \end{minipage}
 \begin{minipage}{0.48\textwidth}
 \begin{subfigure}[b]{0.97\textwidth}
\centering \includegraphics[width=0.97\textwidth]{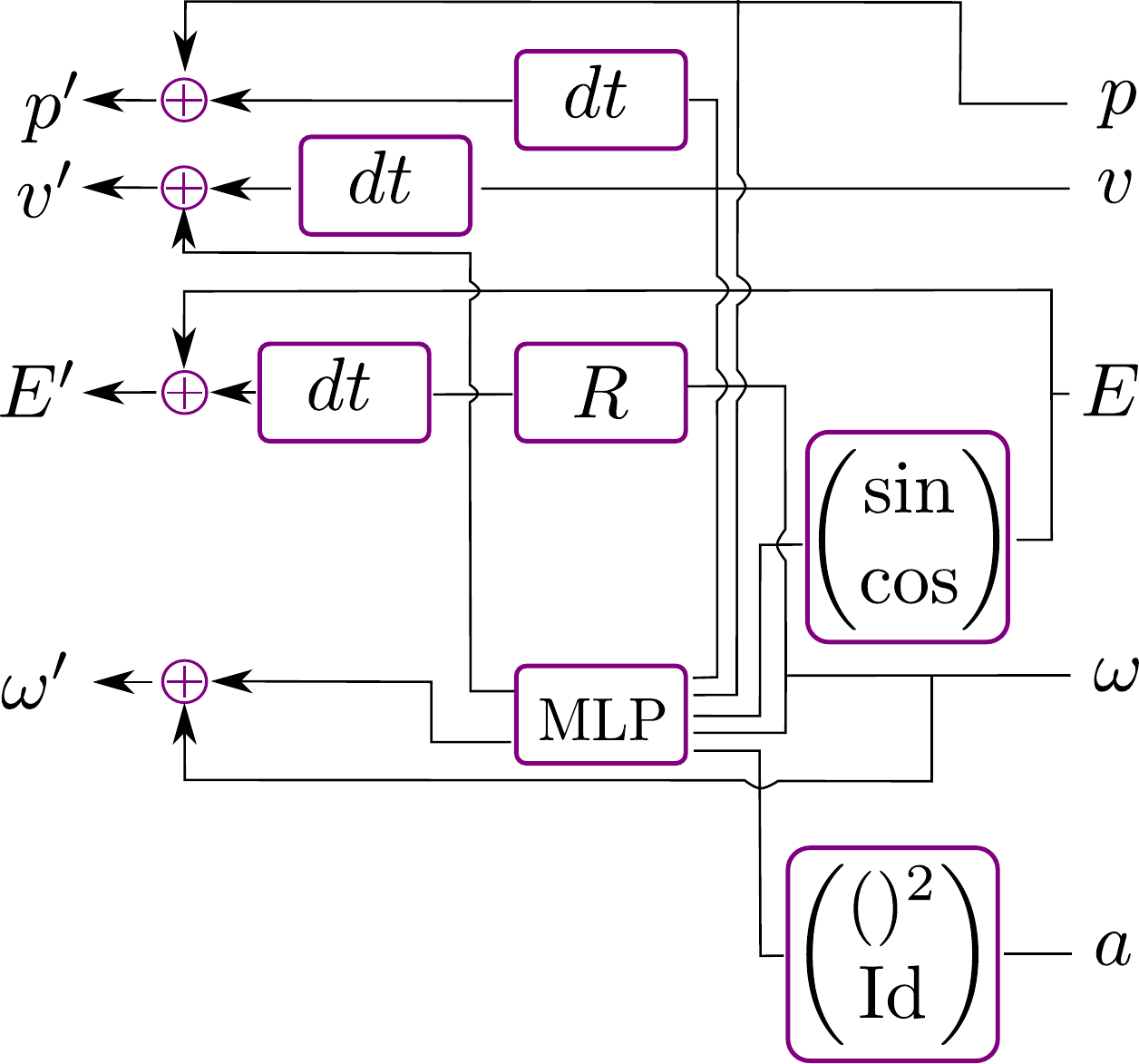}\caption{Drone}\label{si_fig:drone}
\end{subfigure}
  \end{minipage}
    \caption{Structure of the neural networks predicting the mean of the transition probability. The blocks $R$ and $dt$ stand for multiplication with the rotation matrix $R$ and discretization time $dt$. $\MLP$ denotes a multi-layer perceptron. $E$ stands for Eurler angles (pitch, roll, yaw), $p$, $v$, and $w$ stands for position, velocity and angular velocity in the world frame. Operator $\cdot '$ stands for the next time step}
    \label{si_fig:model_architectures}
\end{figure}

\paragraph{Cart-Pole Swing Up} We largely followed the setting introduced by~\cite{xu2020task}, that is 
we set the maximum force magnitude to $20$, time interval to $0.04$, and time horizon to $100$. We took the implementation by~\cite{gym-cartpole-swingup} and modified it to fit our context MDP setting. The states of the environment are the position of the mass ($p$), velocity of the mass ($v$), deviation angle of the pole from the top position ($\theta$) and angular velocity ($\dot \theta$). For GPMM we replaced $\theta$ with $\sin(\theta)$ and $\cos(\theta)$ as was done by~\cite{xu2020task}. We set the reward function to $\cos(\theta)$, where $\theta$ is the deviation from the top position. Our transition model predicts the mean change between the next and the current states and its variance as it is common in model-based RL. Therefore, the structure of our neural network model for mean prediction is $s' = s + \MLP (s, a)$, where $s'$ is the successor state and $\MLP$ is a multi-layer perceptron predicting $s' - s$. The variance in the transition model is a trained parameter. The structure of the neural network predicting the mean of the transition model is depicted in Figure~\ref{si_fig:generic_model}.

\paragraph{Drone Take-off} The drone environment~\citep{panerati2021learning} has 12 states: position in the world frame ($p_x$, $p_y$ and $p_z$), yaw, pitch, roll angles ($\psi$, $\theta$ and $\phi$), velocities in the world frame ($v_x$, $v_y$ and $v_z$) and angular velocities in the world frame ($\omega_x$, $\omega_y$ and $\omega_z$). The prediction of the transition model is similar to the cart-pole model with one notable exception: the neural network for the mean prediction has additional structure. Note that we can estimate spacial positions, roll, pitch and yaw angles given position and angular velocities using crude but effective formulae:
\begin{equation*}
    \begin{pmatrix}
    \Delta p_x \\
    \Delta p_y \\
    \Delta p_z 
    \end{pmatrix} \approx dt \cdot \begin{pmatrix}
    v_x \\
    v_y \\
    v_z 
    \end{pmatrix}, 
    \begin{pmatrix}
    \Delta \phi \\
    \Delta \theta \\
    \Delta \psi 
    \end{pmatrix} \approx dt \cdot \underbrace{\begin{pmatrix}
    1      & \sin(\psi)\tan(\theta)          &  \cos(\phi)\tan(\theta) \\
    0       & \cos(\psi) & -\sin(\psi) \\ 
    0 &  \dfrac{\sin(\psi)}{\cos(\theta)} & \dfrac{\cos(\psi)}{\cos(\theta)}
    \end{pmatrix}}_R\begin{pmatrix}
    \omega_x \\
    \omega_y \\
    \omega_z 
    \end{pmatrix},
\end{equation*}
where the formula for angular velocities can be found, for example, in~\cite{hover2009system}. We will refer to the matrix $R$ as the rotation matrix with a slight abuse of notation. We also choose special features for the MLP: angular velocities, velocities, sines and cosines of the Euler angles ($E$), actions and actions squared. Using these expression we impose the structure on the neural network depicted in Figure~\ref{si_fig:drone}. 

\begin{table}[ht]
    \caption{
    	Hyper-parameters for model learning. 
    } \label{si_tab:model}
    \resizebox{.9\columnwidth}{!}{
    \begin{tabular}{@{}llccc@{}}
    \toprule
    \multicolumn{2}{l}{} & \textbf{Cart-Pole Swing-Up} & \textbf{Intersection} & \textbf{Drone Take-Off}        \\ 
    \midrule
    \multicolumn{1}{c}{\multirow{8}{*}{\rotatebox{90}{Model Prior}}}
        & $K$          & $[4,5,6, 8,10,20]$             & $10$                           & $10$                       \\
        & $\gamma$                & $2$                        & $2$                           & $1$                        \\
        & $\alpha$                & $1\cdot 10^3$                        & $1\cdot 10^3$                           & $5 \cdot 10^3$                        \\
        & $\kappa$          & $3 \cdot K / 5$                        & $6$                           & $3$                       \\
        & ${\rm std}_{\theta}$ & $0.1$ & $0.1$ & $0.1$ \\
        & transition cool-off   & $5$                        & $5$                           & $5$                       \\
        & Network dimensions & $\{6, 128, 4\}$          & $\{6, 64, 4\}$                   & $\{20, 128, 6\}$                \\
        & Activations   & ReLU            & ReLU                & ReLU             \\ 
    \midrule
        & Optimizer                 & Clipped Adam            & Clipped Adam                & Clipped Adam             \\
        & Learning rates $\{\theta, \rho, \nu\}$        & $\{5 \cdot 10^{-3}, 10^{-2},  10^{-2} \}$                       & $\{5 \cdot 10^{-3},  10^{-2}, 10^{-2} \}$                           & $\{5 \cdot 10^{-3}, 10^{-2}, 10^{-2} \}$                    \\
    \bottomrule
    \end{tabular}
    }
\end{table}

\begin{table}[ht]
    \caption{
	    	Hyper-parameters for SAC experiments. 
    }\label{si_tab:sac}
    \resizebox{.9\columnwidth}{!}{
    \begin{tabular}{@{}llccc@{}}
    \toprule
    \multicolumn{2}{l}{} & \textbf{Cart-Pole Swing-Up} & \textbf{Intersection} & \textbf{Drone Take-Off}     
    \\    
    \midrule
    \multicolumn{1}{c}{\multirow{10}{*}{\rotatebox{90}{Runner}}} 
        & $\#$ roll-outs at warm-start  & $100$  & $200$ & $200$ \\
        & $\#$ roll-outs per iteration  & $1$ & $1$ & $1$ \\
        & $\#$ model iterations at warm-start & $500$ & $500$ & $500$ \\
        & $\#$ model iterations per epoch  & $500$ & $200$ & $200$ \\
        & $\#$ agent updates at warm start & $1000$ & $1000$ & $100$ \\
        & $\#$ agent updates per epoch & $200$ & $100$ & $150$ \\
        & $\#$ epochs & $500$ & $500$ & $500$ \\
        & Model frequency update & $100$ & $100$   & $80$ \\
        & Model batch size       & $20$  & $50$   & $50$ \\
        & Agent batch size       & $256$ & $256$   & $256$ \\
        \midrule 
\multicolumn{1}{c}{\multirow{2}{*}{\rotatebox{90}{\small{Prior}}}}   & Training distillation threshold &$0.1$ & $0.05$ & $0.02$\\
        & Testing distillation threshold & $0.1$ & $0.05$ & $0.02$\\
        \midrule
\multicolumn{1}{c}{\multirow{7}{*}{\rotatebox{90}{SAC}}} & Policy network dimensions  & $\{4, 256, 2\}$ & $\{12, 256, 256, 4\}$ & $\{12, 256, 4\}$\\
        & Policy networks activations  & ReLU & ReLU& ReLU \\
        & Value network layer dims  & $\{4, 256, 1\}$ & $\{12, 256, 256, 1\}$ & $\{12, 256, 1\}$ \\
        & Value networks activations  & ReLU & ReLU & ReLU \\
        & Target entropy   & $-0.05$ & $-0.01$ & $-0.1$ \\
        & Initial temperature & $0.8$ & $0.2$ & $0.6$\\
        & Discount factor & $0.99$ & $0.999$ & $0.999$ \\
        & Target value fn update freq & $4$ & $4$ & $4$ \\
        \midrule 
        \multicolumn{1}{c}{\multirow{6}{*}{\rotatebox{90}{Optimization }}}    
        & Optimizer                 & Adam            & Adam                & Adam             \\
        & Policy learning rate  & $3\cdot 10^{-4}$ or $7\cdot 10^{-4}$ & $5\cdot 10^{-4}$ & $3\cdot 10^{-4}$ \\
        & Value function learning rate    & $3\cdot 10^{-4}$ or $7\cdot 10^{-4}$ & $5\cdot 10^{-4}$ & $3\cdot 10^{-4}$\\
        & Temperature learning rate & $5\cdot 10^{-5}$ or $7\cdot 10^{-5}$ & $1\cdot 10^{-4}$ & $1\cdot 10^{-4}$ \\
        & Linear Learning Decay & True & True & True \\
        & Weight Decay & $10^{-8}$ & $10^{-6}$ & $10^{-6}$ \\
    \bottomrule
    \end{tabular}
    }
\end{table}

\begin{table}[ht]
    \caption{
    	Hyper-parameters for MPC experiments. 
    }\label{si_tab:mpc}
    \resizebox{.9\columnwidth}{!}{
    \begin{tabular}{@{}llccc@{}}
    \toprule
    \multicolumn{2}{l}{} & \textbf{Cart-Pole Swing-Up} & \textbf{Intersection} & \textbf{Drone Take-Off}     
    \\    
    \midrule
    \multicolumn{1}{c}{\multirow{6}{*}{\rotatebox{90}{Runner}}} 
        & $\#$ roll-outs at warm-start  & $100$  & $200$ & $200$ \\
        & $\#$ roll-outs per iteration  & $20$ & $20$ & $20$ \\
        & $\#$ model iterations at warm-start & $500$ & $500$ & $500$ \\
        & $\#$ model iterations per epoch  & $500$ & $50$ & $60$ \\
        & $\#$ epochs & $10$ & $3$ & $3$ \\
        & Model batch size       & $100$ & $100$   & $50$ \\
        \midrule 
\multicolumn{1}{c}{\multirow{2}{*}{\rotatebox{90}{\small{Prior}}}}    
        & Training distillation threshold &$0.1$ & $0$ & $0$\\
        & Testing distillation threshold & $0.02$ & $0$ & $0.02$\\
    \bottomrule
    \end{tabular}
    }
\end{table}
\paragraph{Left turn on the Intersection in Highway Environment} We take the environment by~\cite{highway-env}, but use the modifications made by~\cite{xu2020task} including the overall reward function structure. We, however, do not penalize the collisions. and we increase the episode time from $40$ to $100$ time steps. We again predicted the difference between current and next steps for the mean, and used the simplified model for the position, i.e., $\Delta p_x \approx dt v_x$, $\Delta p_y \approx dt v_y$ for both social and ego vehicles.

\subsection{Hyper-parameters}
 All the hyper-parameters are presented in Tables~\ref{si_tab:model},~\ref{si_tab:sac} and~\ref{si_tab:mpc}. For model learning experiments we used $500$ trajectory roll-outs and $500$ epochs for optimization. In the cart-pole environment we used the higher learning rate for hard failure experiments when $\chi < 0$ and used the lower learning rate for the soft failure experiments $\chi > 0$. We use the weight decay to avoid gradient explosion in the value functions and the policies. Similarly, Clipped Adam optimizer (available in Pyro) was used to avoid gradient explosion in model learning.

\section{Additional Experiments}
\label{appendix:additional-exp}
\subsection{HDP is an effective prior for learning an accurate and interpretable model} \label{app:model_learning}
We plot the time courses of the context evolution and the ground truth context evolution in Figure~\ref{si_fig:z_seq}. As the results in Figure~\ref{si_fig:rho} (reproduction of Figure~\ref{fig:rho} in the main text) suggested the MLE method did not provide an accurate context model, while both DP and HDP priors provided models for reconstructing the true context cardinality after distillation. The difference was the choice of the distillation threshold, which had to be significantly higher for the DP prior. This experiment indicates that DP prior can be a good tool for modeling context transitions, but HDP provides sharper model fit and a more interpretable model.
\begin{figure}
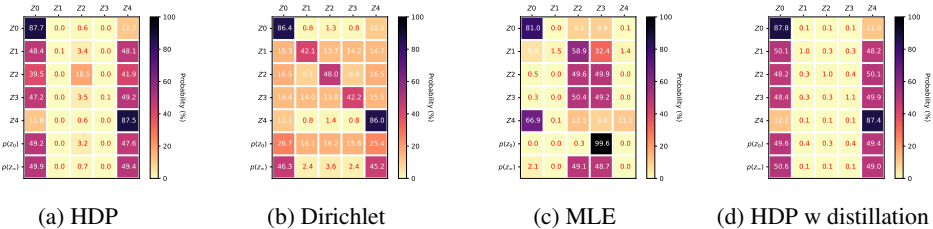

\begin{subfigure}[b]{0.23\textwidth}
\centering\includegraphics[width=0.99\textwidth]{figures/prior_comparison/hdp_distill_0.pdf} \caption{HDP}\label{si_fig:hdp_rho}
\end{subfigure}
\begin{subfigure}[b]{0.23\textwidth}
\centering\includegraphics[width=0.99\textwidth]{figures/prior_comparison/dirichlet_distill_0}\caption{Dirichlet}\label{si_fig:dir_rho}
\end{subfigure}
\begin{subfigure}[b]{0.23\textwidth}
\centering\includegraphics[width=0.99\textwidth]{figures/prior_comparison/mle_distill_0}\caption{MLE}\label{si_fig:mle_rho}
\end{subfigure}
\begin{subfigure}[b]{0.23\textwidth}
\centering\includegraphics[width=0.99\textwidth]{figures/prior_comparison/hdp_distill_01.pdf}\caption{HDP w distillation}\label{si_fig:hdp_rho_distilled}
\end{subfigure}
        \caption{Transition matrices, initial $p(z_0)$ and stationary $p(z_\infty)$ distributions of the learned context models in the Cart-Pole Swing-Up Experiment for Result A. $Z0$ -- $Z4$ stand for the learned contexts. Reproduction of Figure~\ref{fig:rho} from the main text.}
        \label{si_fig:rho}
\end{figure}
\begin{figure}
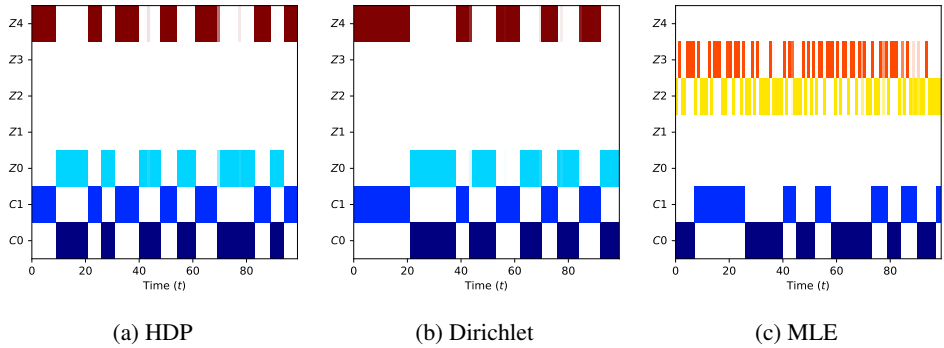

     \centering
\begin{subfigure}[b]{0.3\textwidth}
\centering\includegraphics[width=0.99\textwidth]{figures/si_time_seq_priors/hdp_seq_distill_0.pdf}\caption{HDP}\label{si_fig:hdp_z_seq}
\end{subfigure}
\begin{subfigure}[b]{0.3\textwidth}
\centering\includegraphics[width=0.99\textwidth]{figures/si_time_seq_priors/dirichlet_seq_distill_0.pdf}\caption{Dirichlet}\label{si_fig:dir_z_seq}
\end{subfigure}
\begin{subfigure}[b]{0.3\textwidth}
\centering\includegraphics[width=0.99\textwidth]{figures/si_time_seq_priors/mle_seq_distill_0.pdf}\caption{MLE}\label{si_fig:mle_z_seq}
\end{subfigure}
    \caption{Time courses the learned context models in Cart-Pole Swing-Up Experiment. ``Unlucky' random seed for MLE was used. $C0$ and $C1$ stand for the ground true contexts, while $Z0$ -- $Z4$ are the learned contexts. Reproduction of Figure~\ref{fig:z_seq} from the main text. }
    \label{si_fig:z_seq}
\end{figure}
\begin{figure}
     \centering
\begin{subfigure}[b]{0.3\textwidth}
\centering \includegraphics[width=0.99\textwidth]{figures/prior_comparison/hdp_distill_01.pdf}\caption{HDP}\label{si_fig:hdp_rho_distill_01}
\end{subfigure}
\begin{subfigure}[b]{0.3\textwidth}
\centering\includegraphics[width=0.99\textwidth]{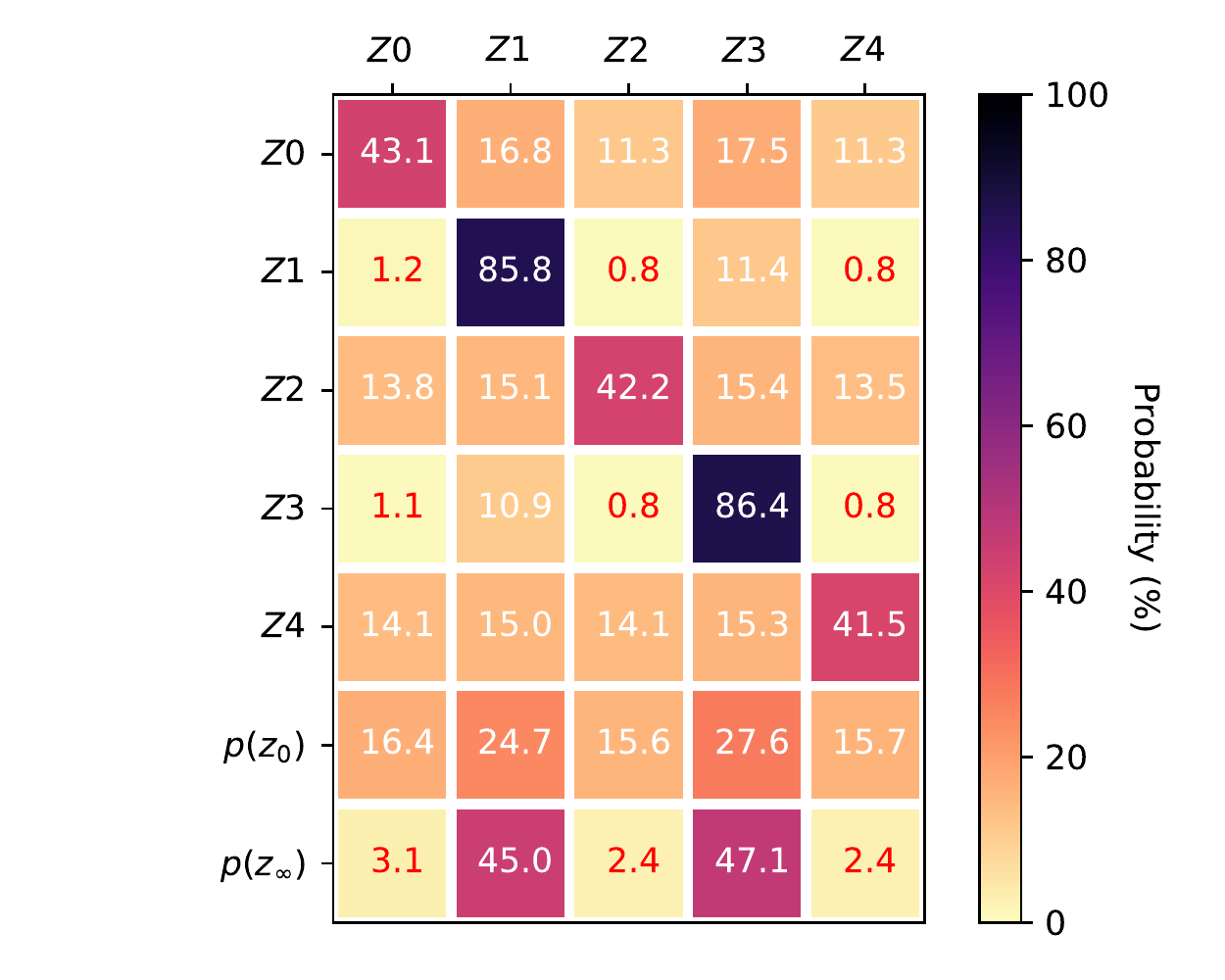}\caption{Dirichlet}\label{si_fig:dir_rho_distill_01}
\end{subfigure}
\begin{subfigure}[b]{0.3\textwidth}
\centering\includegraphics[width=0.99\textwidth]{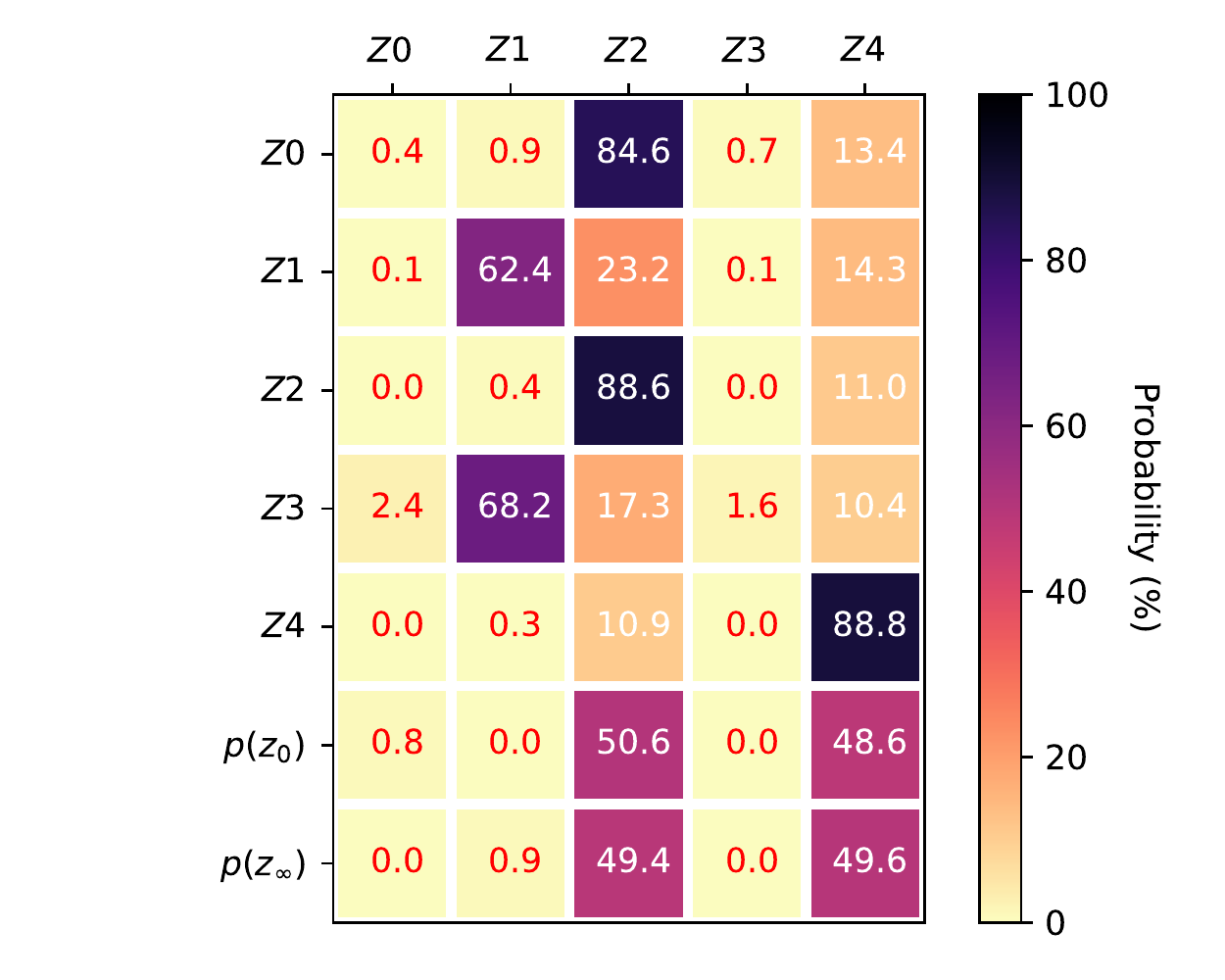}\caption{MLE}\label{si_fig:mle_rho_distill_01}
\end{subfigure}
    \caption{Transition matrices, initial $p(z_0)$ and stationary $p(z_\infty)$ distributions of the learned context models in Cart-Pole Swing-Up Experiment. Distillation  during training and a ``lucky'' seed were used. $Z0$ -- $Z4$ are the learned contexts. }
        \label{si_fig:rho_distill_01}
\end{figure}
\begin{figure}
     \centering
\begin{subfigure}[b]{0.3\textwidth}
\centering\includegraphics[width=0.99\textwidth]{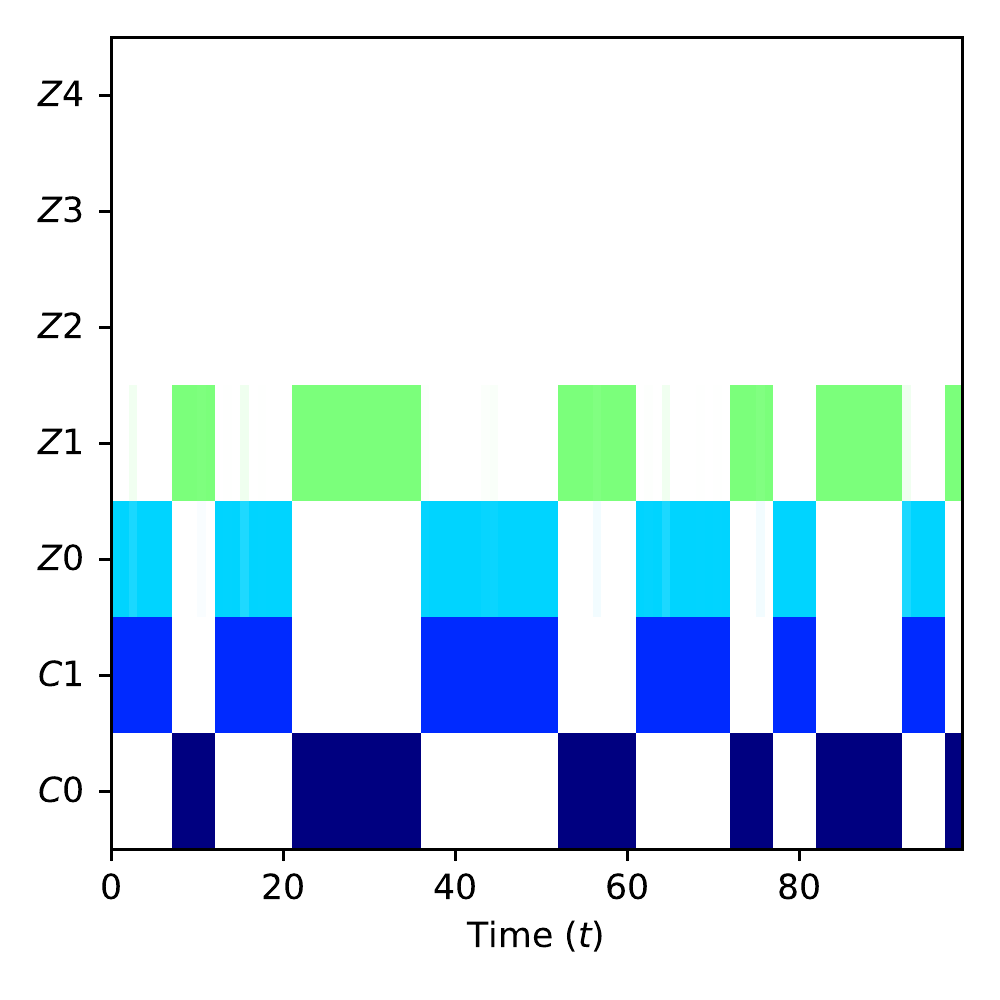}\caption{HDP}\label{si_fig:hdp_z_seq_distill_01}
\end{subfigure}
\begin{subfigure}[b]{0.3\textwidth}
\centering\includegraphics[width=0.99\textwidth]{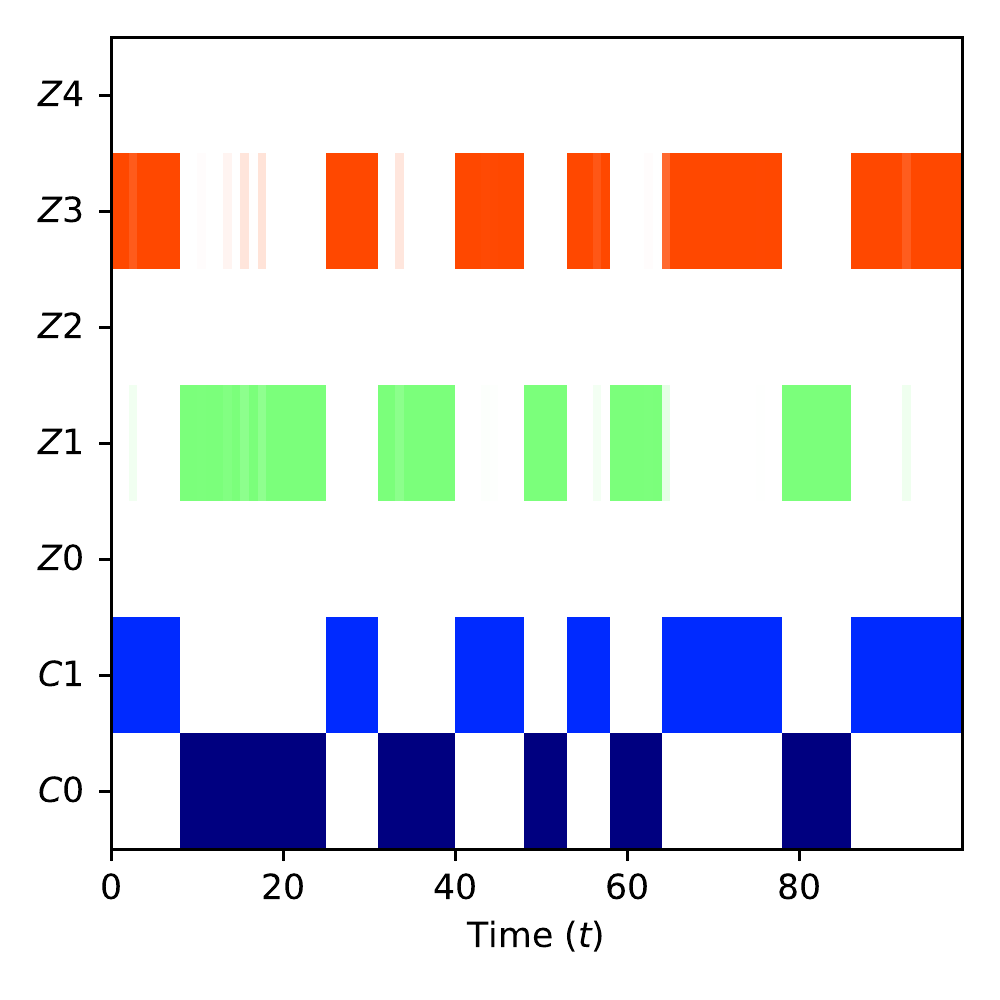}\caption{Dirichlet}\label{si_fig:dir_z_seq_distill_01}
\end{subfigure}
\begin{subfigure}[b]{0.3\textwidth}
\centering \includegraphics[width=0.99\textwidth]{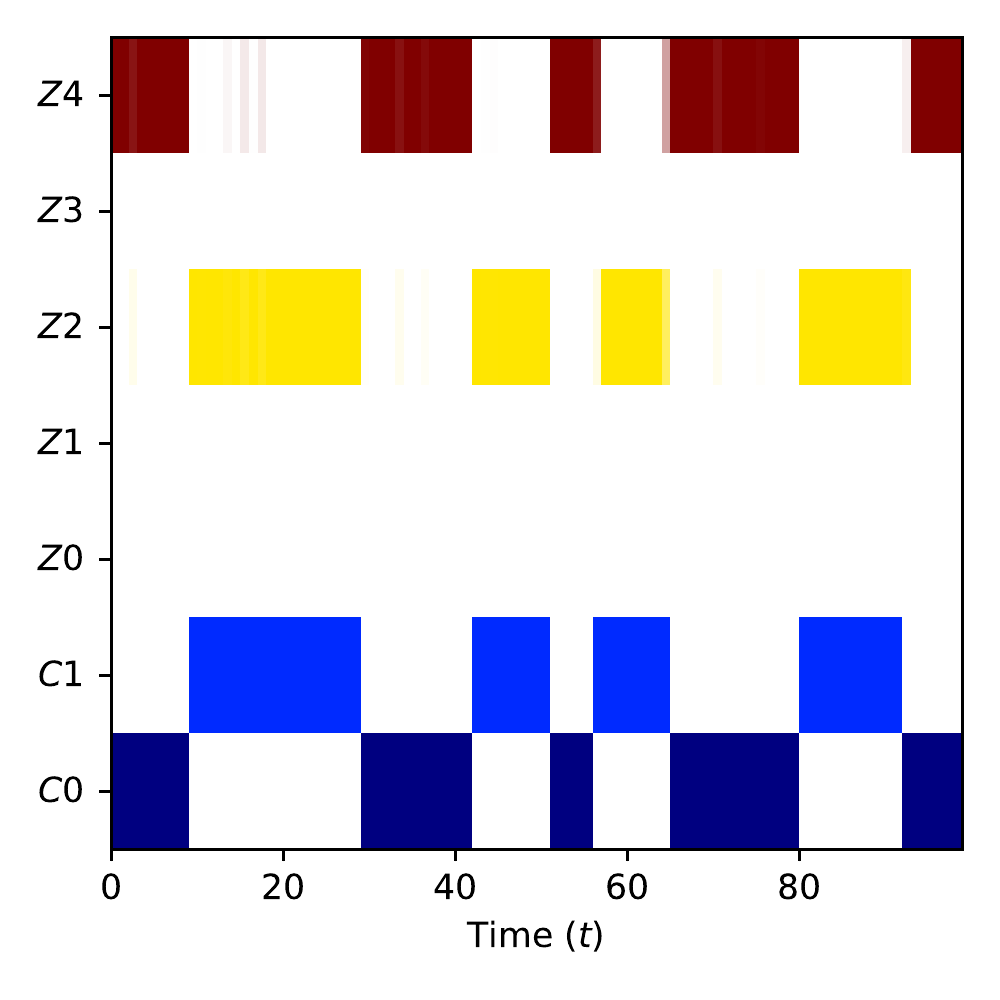}\caption{MLE}\label{si_fig:mle_z_seq_distill_01}
\end{subfigure}
    \caption{Time courses the learned context models in Cart-Pole Swing-Up Experiment. Distillation  during training and a ``lucky'' seed were used. $C0$ and $C1$ stand for the ground true contexts, while $Z0$ -- $Z4$ are the learned contexts.}
        \label{si_fig:z_seq_distill_01}
\end{figure}

For completeness, we performed the same experiment, but with a different seed and setting $\varepsilon_{\rm distil}=0.1$ during training. We plot the results in Figures~\ref{si_fig:rho_distill_01} and~\ref{si_fig:z_seq_distill_01}. In this case, all models (including the MLE method) coupled with distillation provided an accurate estimate of the context evolution. This suggests that the optimization profile for the MLE method has many local minima (peaks and troughs in ELBO), which we can be trapped in given an unlucky seed. 

\subsection{Distillation acts as a regularizer}\label{si_ss:distillation_regularizer}
\begin{figure}[ht]
     \centering
     \begin{subfigure}[b]{0.3\textwidth}
\centering\includegraphics[width=0.97\textwidth]{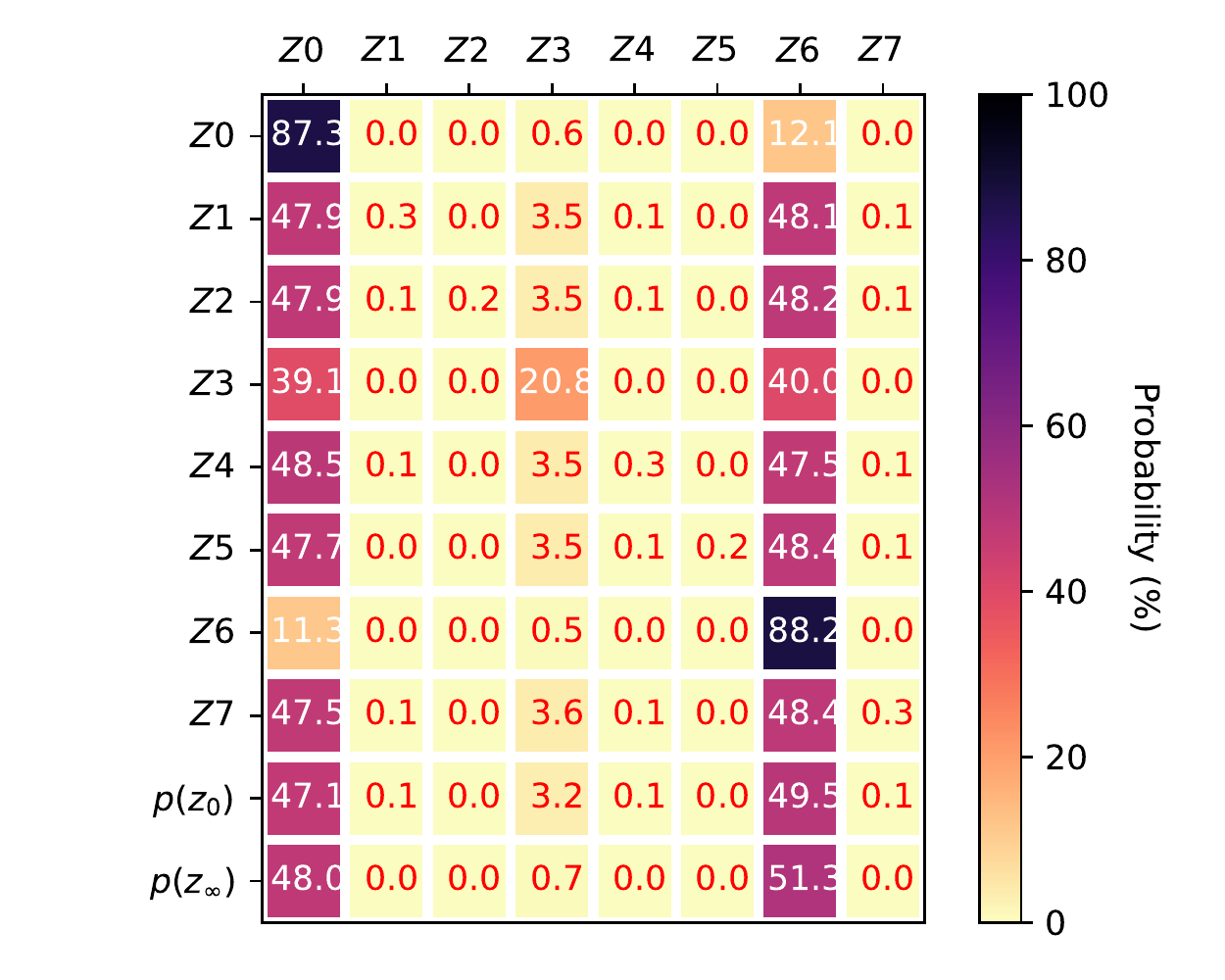}\caption{No distillation}\label{si_fig:k8_rho_distill_0}
\end{subfigure}
\begin{subfigure}[b]{0.3\textwidth}
\centering\includegraphics[width=0.97\textwidth]{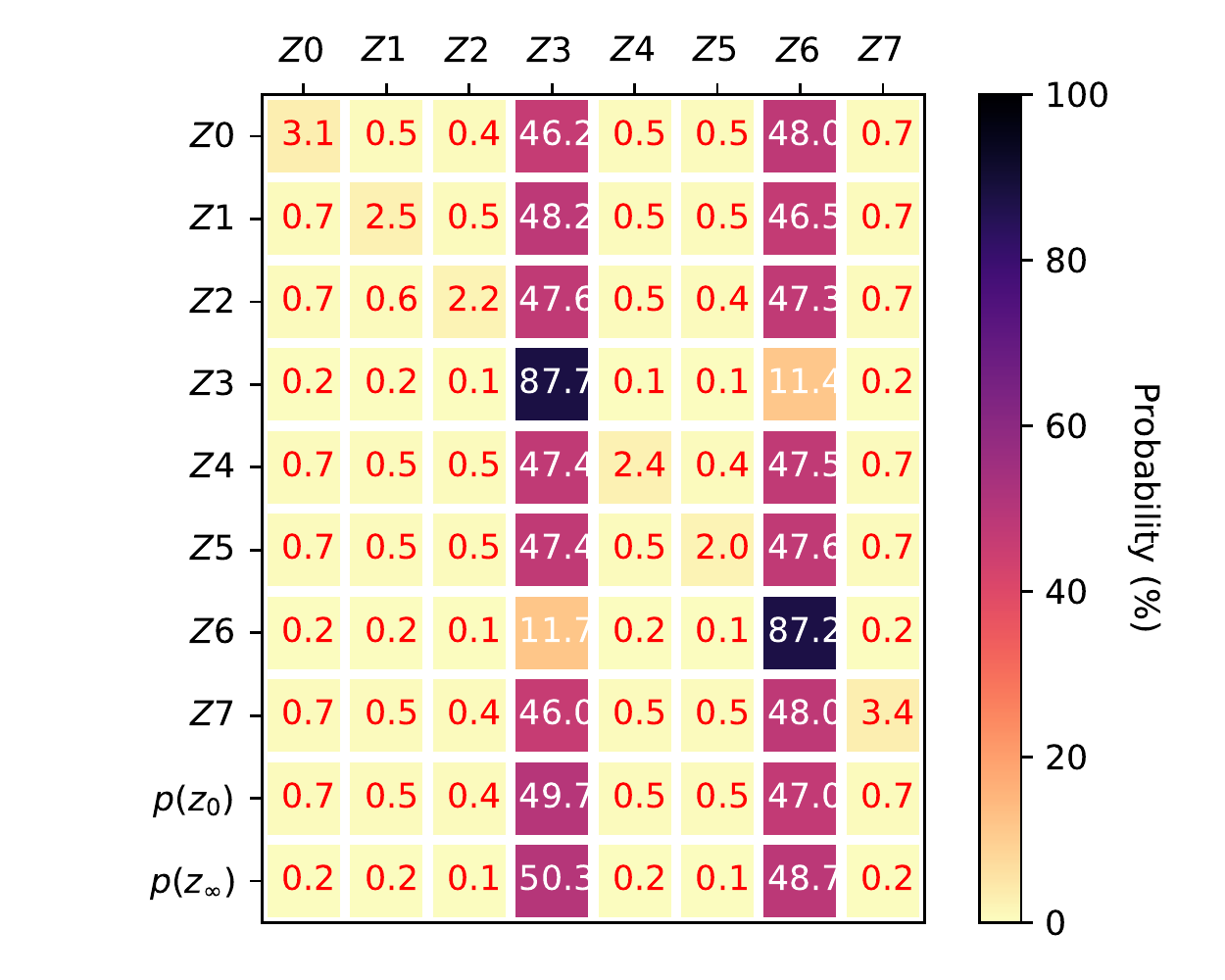}\caption{Distillation threshold $0.01$}\label{si_fig:k8_rho_distill_001}
\end{subfigure}
\begin{subfigure}[b]{0.3\textwidth}
\centering\includegraphics[width=0.97\textwidth]{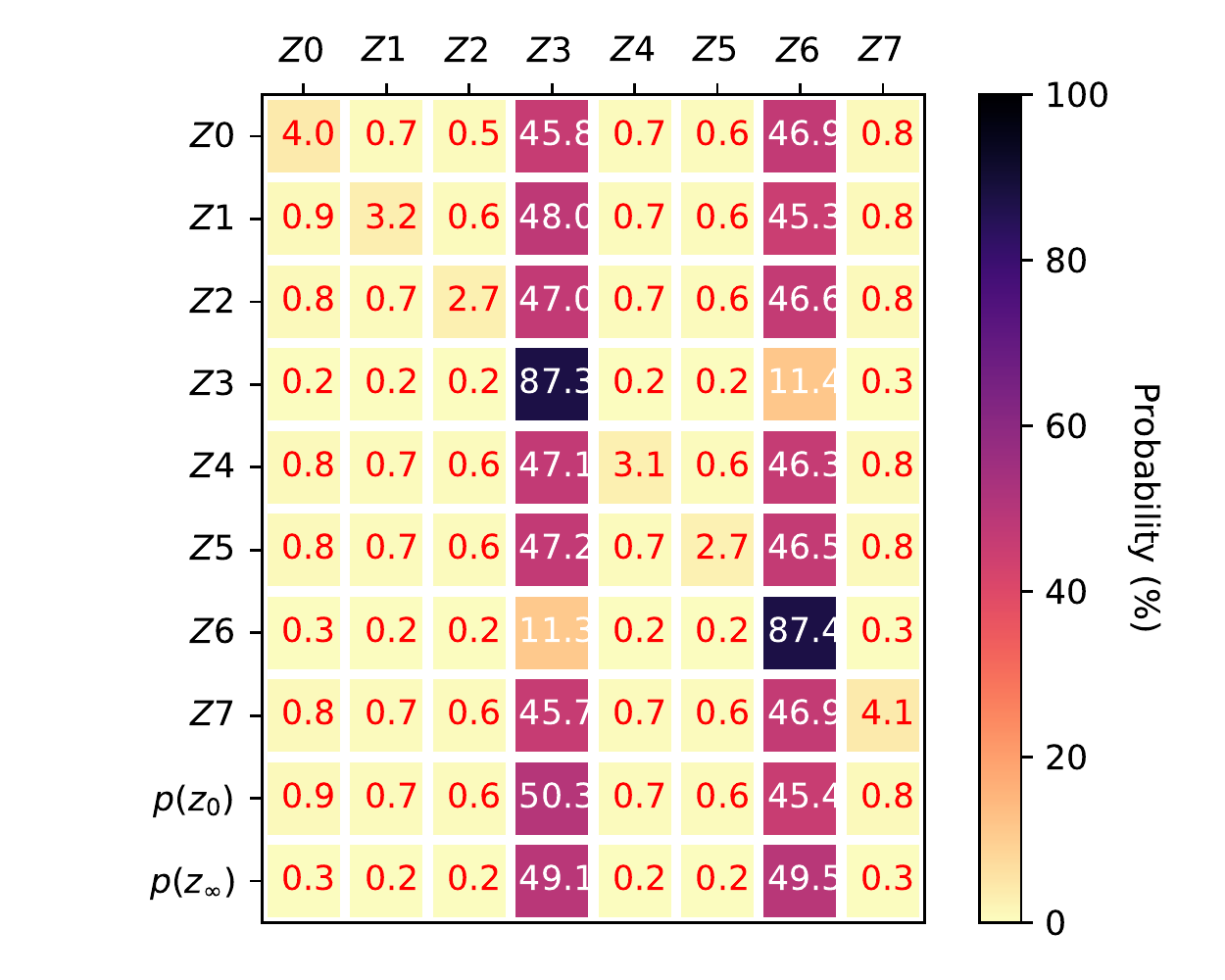}\caption{Distillation threshold $0.1$}\label{si_fig:k8_rho_distill_01}
\end{subfigure}
    \caption{Influence of distillation during training in Cart-Pole Swing-Up Experiment with $|\widetilde \cC| = 8$.}
        \label{si_fig:k8_rho_distill}
\end{figure}

\begin{table}[ht]
    \centering
\caption{Comparing the probability mass of the third most probable state in the stationary distribution. We vary the cardinality of the estimated context set $\widetilde \cC$ and 
the distillation threshold $\varepsilon_{\rm distil}$. Red indicates underestimation of the distillation threshold. Reproduction of Table~\ref{table:distilled_third_context} from the main text.} 
\label{si_table:distilled_third_context}
\begin{tabular}{l|ccccccc}
\toprule
\diagbox{
$\varepsilon_{\rm distil}$ $\downarrow$}{$|\widetilde \cC|$ $\rightarrow$}
 &     \textbf{4} &     \textbf{5} &     \textbf{6} &     \textbf{8} &     \textbf{10} &     \textbf{20} & \textbf{100}\\
\hline
\hline
\textbf{0   } & \color{red}{8.58e-03} & \color{red}{7.06e-03} & \color{red}{3.71e-03} & \color{red}{6.85e-03} & \color{red}{2.20e-03} & \color{red}{2.25e-02} &\color{red}{1.37e-01}\\
\textbf{0.01} & 1.06e-03 & 1.24e-03 & 1.37e-03 & 2.19e-03 & 2.56e-03 & \color{red}{1.60e-02} & 8.84e-03\\
\textbf{0.1 } & 1.21e-03 & 1.54e-03 & 1.70e-03 & 2.80e-03 & 3.54e-03 & 9.86e-03 & 5.03e-02 \\
\bottomrule
\end{tabular}
\end{table}

\begin{table}[ht]
    \centering
\caption{Comparing the estimated transition matrices using the metric $\delta_{|\widetilde \cC|} = \frac{\|\hat \bmR_{|\widetilde \cC|}-\hat \bmR_5\|_1}{\|\hat \bmR_5\|_1}$. We vary the cardinality $\widetilde \cC$ and 
the distillation threshold $\varepsilon_{\rm distil}$.}
\label{si_table:distilled_transition_matrices}
\begin{tabular}{l|ccccccc}
\toprule 
\diagbox{
$\varepsilon_{\rm distil}$ $\downarrow$}{$|\widetilde \cC|$ $\rightarrow$}
 &     \textbf{4} &     \textbf{5} &     \textbf{6} &     \textbf{8} &     \textbf{10} &     \textbf{20} & \textbf{100}\\
\hline
\hline
\textbf{0   } & 8.26e-03 & 0 & 7.69e-03 & 2.79e-03 & 1.29e-02 & 2.89e-02 & 1.42e-01\\
\textbf{0.01} & 6.08e-03 & 0 & 6.41e-03 & 8.88e-03 & 3.61e-03 & 2.14e-02 & 1.77e-02\\
\textbf{0.1 } & 2.09e-03 & 0 & 1.66e-03 & 4.87e-03 & 1.38e-02 & 2.19e-02& 2.60e-02 \\
\bottomrule
\end{tabular}
\end{table}

After the first experiment, we noticed that the context $Z2$ has a low probability mass in stationarity, but a high probability of self-transition 
(see Figure~\ref{si_fig:rho} - reproduction of Figure~\ref{fig:rho} in the main text). 
This suggest that spurious transitions can happen, while highly unlikely. We speculate that the learning algorithm tries to fit the uncertainty in the model (e.g., due to unseen data) to one context. This can lead to over-fitting and unwanted side-effects. Results in
Figure~\ref{si_fig:rho} (reproduction of Figure~\ref{fig:rho} in the main text) suggest that distillation during training can act as a regularizer when we used a high enough threshold $\varepsilon_{\rm distil}=0.1$. We further validate our findings by changing the context number upper bound $|\widetilde \cC|$ between $4$ and $20$. In particular, we proceed by presenting the results for varying $|\widetilde \cC|$ (taking values $4$, $5$, $6$, $8$, $10$, $20$ and $100$) and the distillation threshold $\varepsilon_{\rm distil}$ (taking values $0$, $0.01$, and $0.1$). Note that we distill during training and we refer to the transition matrix for the distilled Markov chain as the distilled transition matrix. First, consider the results in Figure~\ref{si_fig:k8_rho_distill}, where we plot the learned MAP estimates of the transition matrices with  $|\widetilde \cC| = 8$. Note that using distillation with both thresholds prevents overfitting to one context. One could argue that instead of context distillation during training one could simply use distillation after training. This approach, however, can lead to emergence of a spurious context with a large probability of a self-transition raising the possibility of inaccurate model predictions. Furthermore, the large number of spurious contexts can lead to a large probability mass concentrated in one of them in stationarity. Indeed, consider Table~\ref{si_table:distilled_third_context}, where we plot the  context with third largest probability mass. In particular, for $|\widetilde \cC| = 20$ the probability mass values for this context are larger than $0.01$. This indicates a small but not insignificant possibility of a transition to this context, if the distillation does not remove it. 

We verify that the distilled transition matrices for various cardinalities $|\widetilde \cC|$ are close to each other. Let $\hat \bmR_{K}$ denote the estimated transition matrix for $|\widetilde \cC| = K$ with only two contexts chosen a posteriori. We use the following metric for comparison
\begin{equation*}
\delta_K = \frac{\|\hat \bmR_K-\hat \bmR_5\|_1}{\|\hat \bmR_5\|_1}.
\end{equation*}
That is, we compare all the transition matrices to the case of $|\widetilde \cC| = 5$. In Table~\ref{si_table:distilled_transition_matrices} we present the results, which indicate that the estimated transition matrices are quite close to each other. Furthermore, the distillation during training helps to recover the true context regardless of the upper bound $|\widetilde \cC|$. 

To summarize, our experiments suggest that \textbf{the context set cardinality $|\widetilde\cC|$ can be confidently overestimated} and the context set can be reconstructed using our distillation procedure.

\subsection{Context cardinality vs model complexity}\label{si_sec:multiple_failure}

\begin{figure}[ht]
     \centering
\begin{subfigure}[b]{0.3\textwidth}
\centering\includegraphics[width=0.97\textwidth]{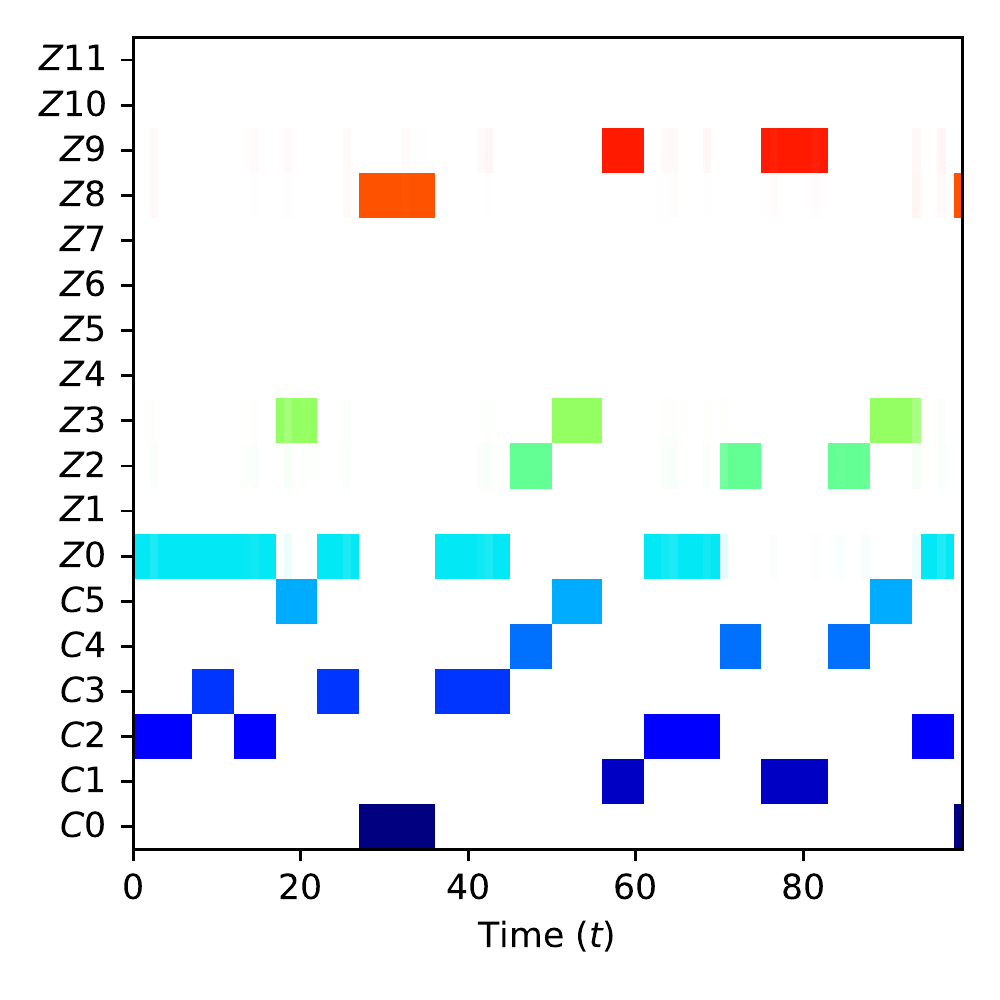}\caption{$\epsilon_{\rm distill}=0$}\label{si_fig:z_seq_multiple_failure_0}
\end{subfigure}
\begin{subfigure}[b]{0.3\textwidth}
\centering \includegraphics[width=0.97\textwidth]{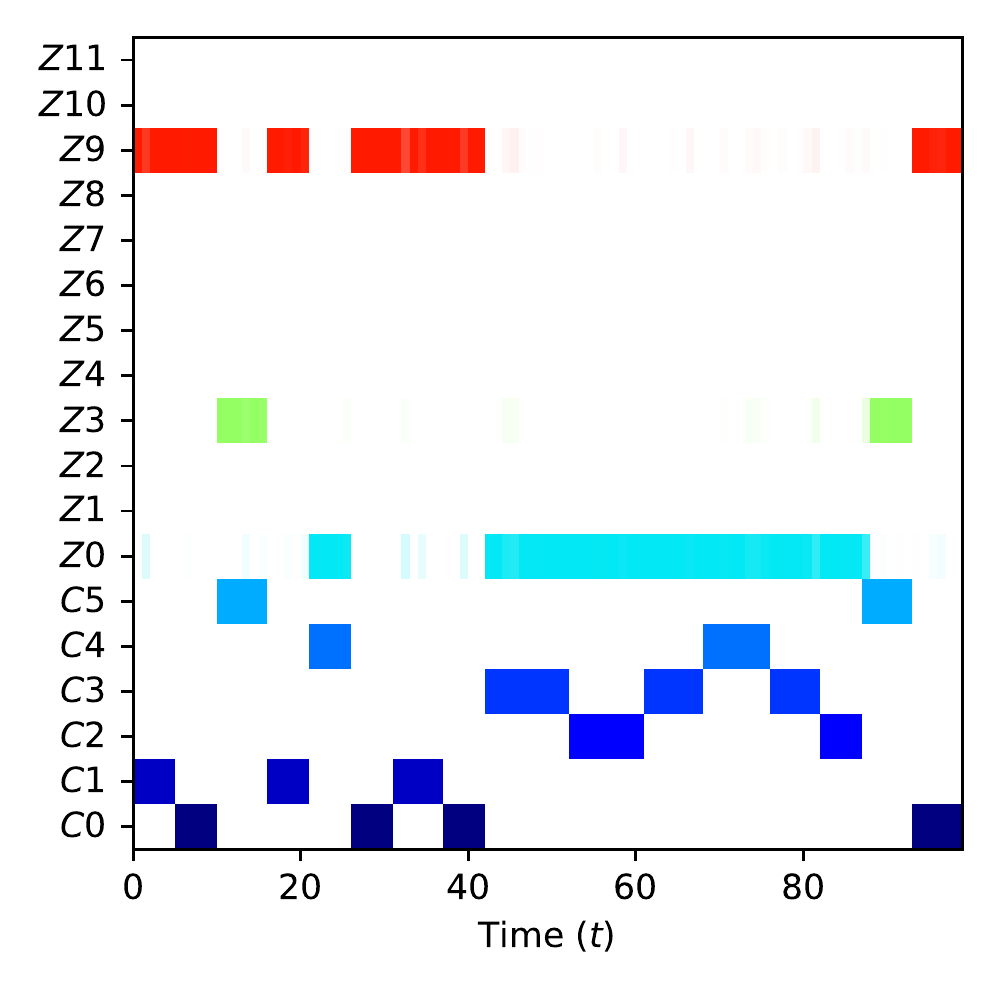}\caption{$\epsilon_{\rm distill}=0.05$}\label{si_fig:z_seq_multiple_failure_005}
\end{subfigure}
\begin{subfigure}[b]{0.3\textwidth}
\centering \includegraphics[width=0.97\textwidth]{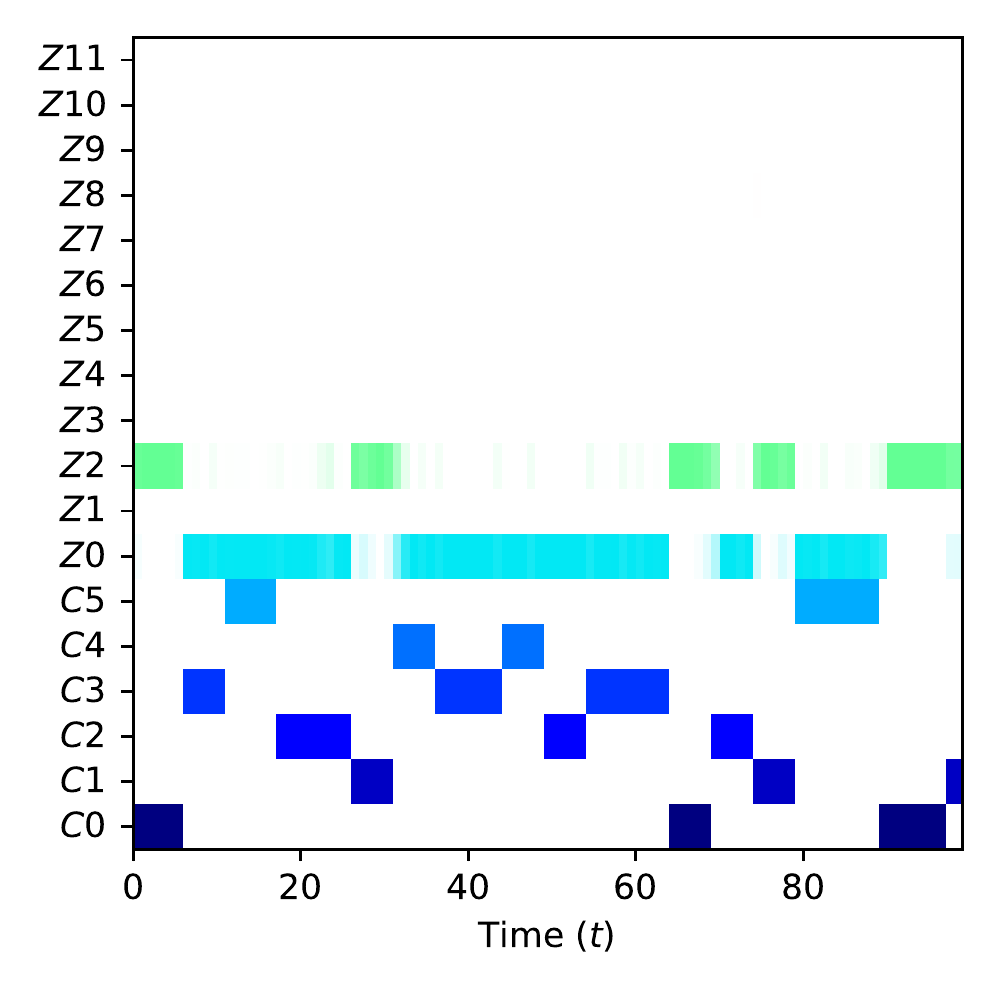}\caption{$\epsilon_{\rm distill}=0.1$}\label{si_fig:z_seq_multiple_failure_01}
\end{subfigure}
    \caption{Time courses of the learned context models for various distillation thresholds where  $\cC= \{\{-1\}, \{-0.5\},\{0.05\},\{0.1\},\{0.5\},\{1\} \}$.
    We deduce that the learned context sets: $\widehat \cC_{0}= \{\{-1\}, \{-0.5\},\{0.05, 0.1\},\{0.5\},\{1\} \}$,
    $\widehat \cC_{ 0.05}= \{\{-1, -0.5\}, \{0.05, 0.1, 0.5\},\{1\} \}$, 
    $\widehat \cC_{0.1}= \{\{-1, -0.5\}, \{0.05, 0.1, 0.5, 1\} \}$. 
    }
    \label{si_fig:z_seq_multiple_failure}
\end{figure}

 Next we demonstrate that our algorithm can merge similar context automatically using an appropriate distillation threshold during training. We increased the force magnitude by two-fold in order to create a large number of distinct contexts and chosen the context set as $\cC= \{\{-1\}, \{-0.5\}, \{0.5\}, \{0.05\}, \{0.1\}, \{0.5\}, \{1\}\}$ and vary the distillation threshold $\epsilon_{\rm distill}$ setting it to $0$, $0.05$ and $0.1$ during training. In order to get the estimated context sets we distilled one more time after training with $\epsilon_{\rm distill}=0.02$. We set a sufficiently high context cardinality estimate $K=12$. Results in Figure~\ref{si_fig:z_seq_multiple_failure} suggest that the learned contexts sets for various distillation threshold are as follows: 
\begin{align*}
    \widehat \cC_{0} &= \{\{-1\}, \{-0.5\},\{0.05, 0.1\},\{0.5\},\{1\} \}, \\
    \widehat \cC_{ 0.05} &= \{\{-1, -0.5\}, \{0.05, 0.1, 0.5\},\{1\} \}, \\
    \widehat \cC_{0.1} &= \{\{-1, -0.5\}, \{0.05, 0.1, 0.5, 1\} \}. 
\end{align*}
This indicates the ability to change the model structure using the distillation threshold. Furthermore, the trade-off between continuous and discrete dynamics is decided automatically using the distillation threshold.

\subsection{Breaking the assumptions in Hidden Markov Models}
One of the major assumptions in our framework is the Markovian nature of the switches, which can limit its applicability. We conduct two simple experiments to assess if these assumptions can be broken without detrimental effects. First, we model the context transitions as a process, where the next context depends on the previous context rather than the current context. This process is clearly non-Markovian. However, results in Figure~\ref{si_fig:non-markovian} indicate that our model successfully predicts the correct contexts in the context realization. Second, we make the context state dependent. In particular we have:
\begin{equation*}
\begin{aligned}
    z &= 0, &&\text{ if } \rm{pos} \in [0,~0.1), &&[-0.1,~-0.2), &&[0.2,~0.3), &&[-0.3,~-0.4), 
    &&\cdots  \\
    z &= 1, &&\text{ if } \rm{pos} \in [0,~-0.1), &&[0.1,~0.2), &&[-0.2,~-0.3), &&[0.3,~0.4), 
    &&\cdots ,
\end{aligned}
\end{equation*}
where $\rm{pos}$ is the position of the cart. Again our model handles this case as the results in Figure~\ref{si_fig:state-dependent} suggest. 

So how our Markovian model handles predictions for non-Markovian models? In both cases, the key is comparing context estimations/predictions rather than context models, which are almost surely incorrect. However, the predictive power of our model is still preserved since it relies heavily on the observed state history. This allowed us to correctly estimate the contexts in these experiments.

\begin{figure}[ht]
     \centering
     \begin{subfigure}[b]{.47\textwidth}
     \centering \includegraphics[width=0.97\textwidth]{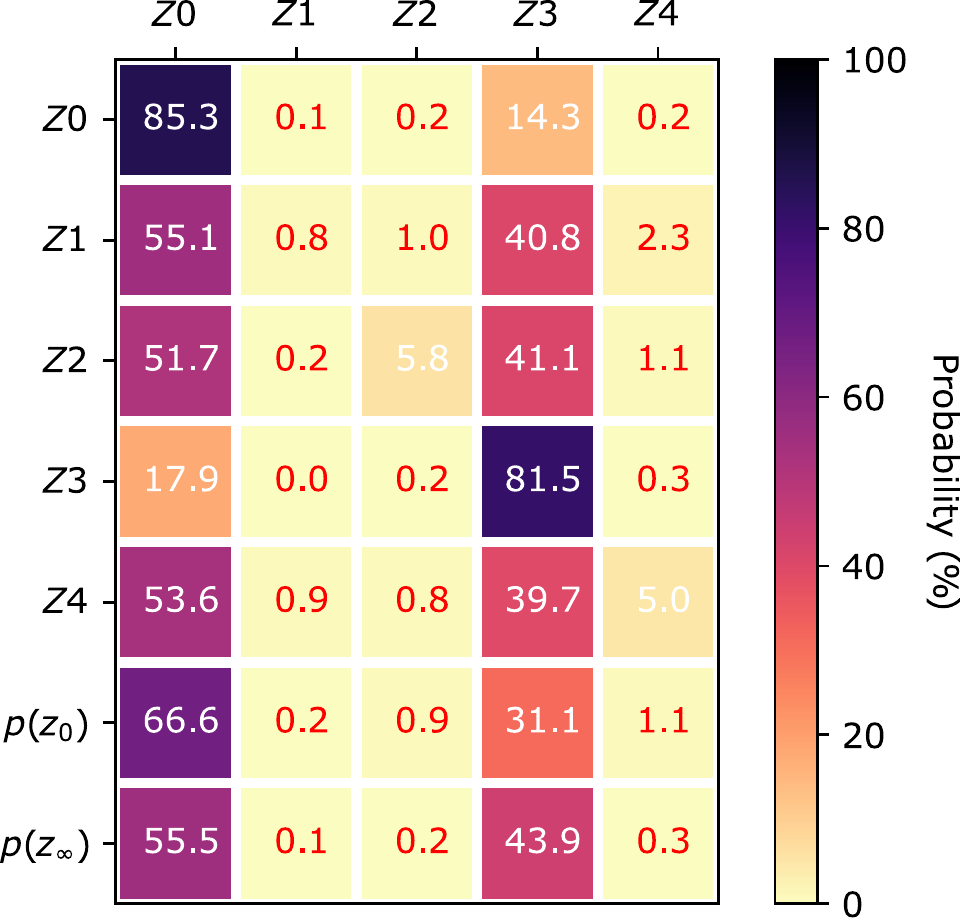}\caption{Transition matrix}
         \label{si_fig:non-markov-rho}
     \end{subfigure}
      \begin{subfigure}[b]{.47\textwidth}
     \centering \includegraphics[width=0.97\textwidth]{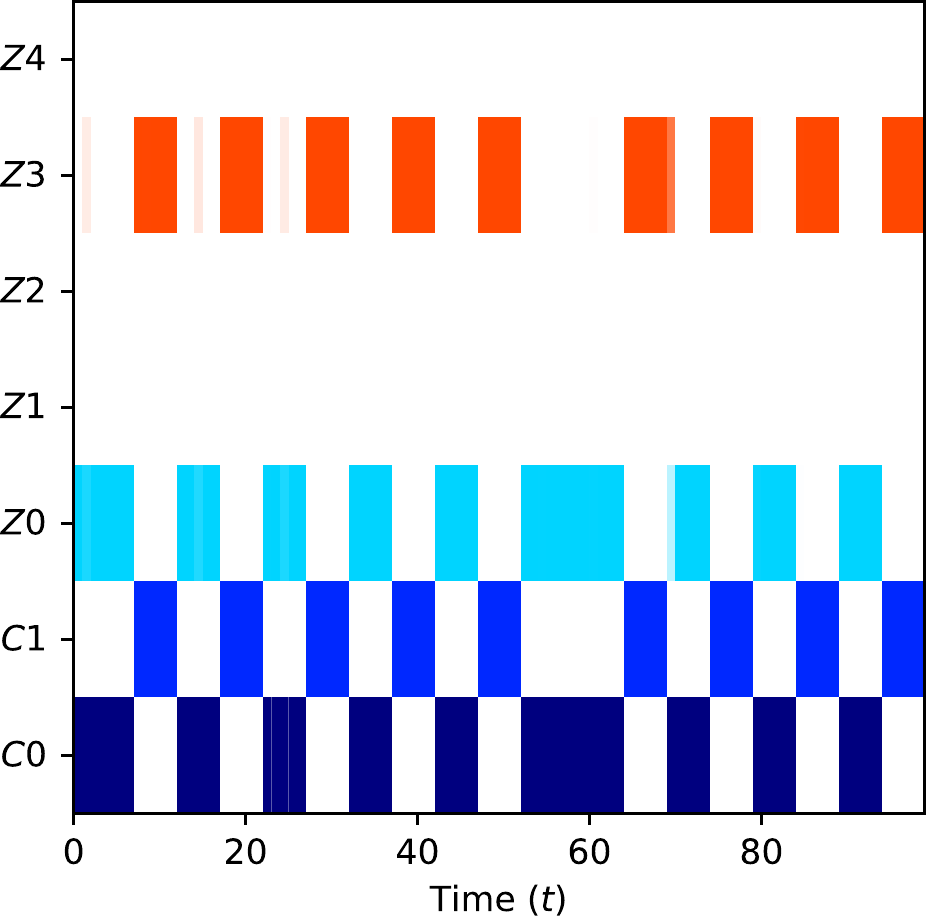}\caption{A realization of the process}
         \label{si_fig:non-markov-z}
     \end{subfigure}
    \caption{Learning non-Markovian transitions.}\label{si_fig:non-markovian}
\end{figure}
\begin{figure}[ht]
     \centering
     \begin{subfigure}[b]{.47\textwidth}
     \centering \includegraphics[width=0.97\textwidth]{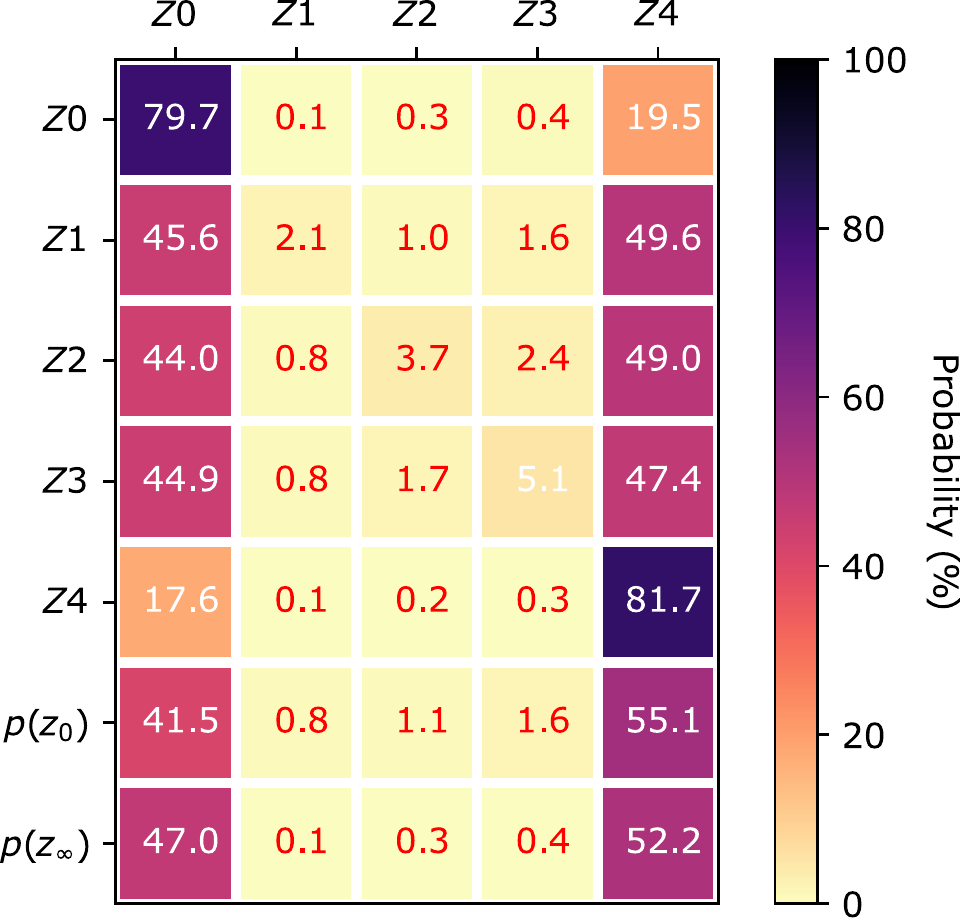}\caption{Transition matrix}
         \label{si_fig:state-dependent-rho}
     \end{subfigure}
      \begin{subfigure}[b]{.47\textwidth}
     \centering \includegraphics[width=0.97\textwidth]{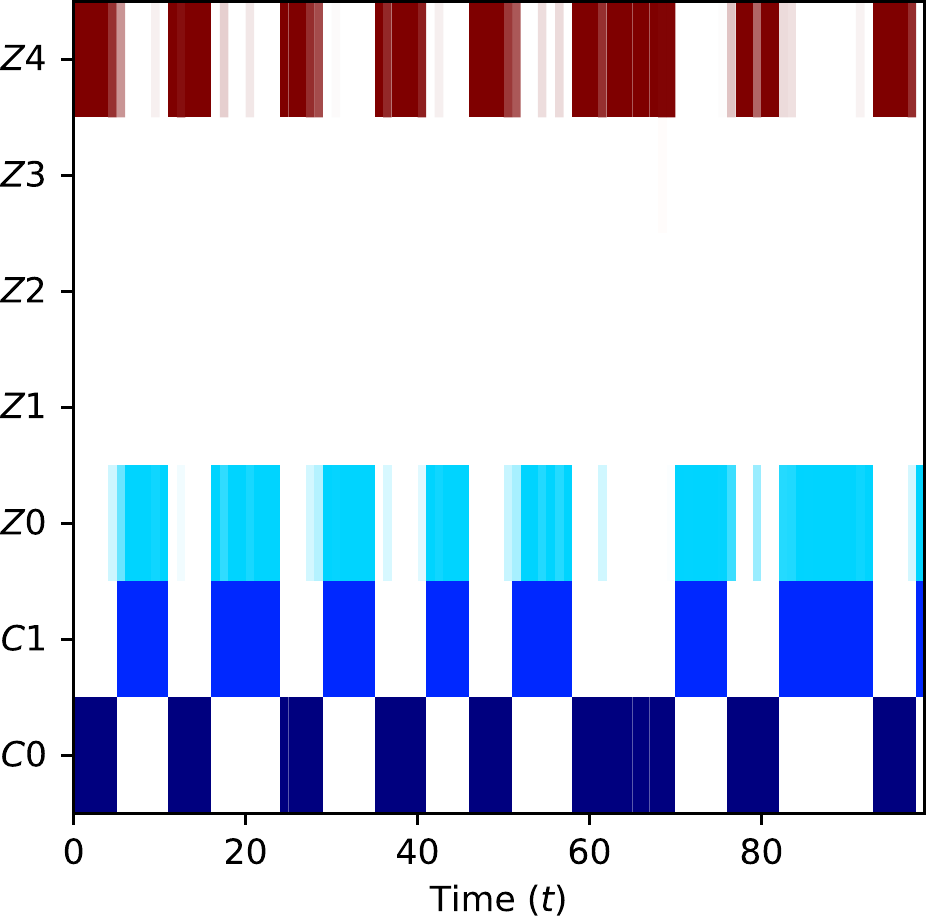}\caption{A realization of the process}
         \label{si_fig:state-dependent-z}
     \end{subfigure}     \caption{Learning state-dependent transitions. }
     \label{si_fig:state-dependent}
\end{figure}

\subsection{Learning new contexts}\label{si_ss:new_contexts}
As we mentioned above we can remove the new contexts following the proposed distillation procedure. However, our model is flexible enough to add new (unseen) contexts without learning the state transition model from scratch. We design the following experimental procedure to demonstrate this ability: (1) train the model on dataset $\cD_1$ which contains two contexts ($C0$ and $C1$); (2) reset free parameters in the variational distribution $q(\bm\nu|\hat{\bm\nu})$ and $q(\bm\mu|\hat{\bm\mu})$ while preserving  $q(\bm\theta|\hat{\bm\theta})$; (3) re-train the model on dataset $\cD_2$ which contains the two original contexts and two new contexts ($C2$ and $C3$). We present the training results on the sets $\cD_1$ and $\cD_2$ in Figure~\ref{si_fig:cl_d1} and~\ref{si_fig:cl_d2}, respectively. These results suggest that our model is able to learn new contexts ($C2$ and $C3$ in Figure~\ref{si_fig:cl_d2}), while preserving the original contexts ($C0$ and $C1$ in both Figures~\ref{si_fig:cl_d1} and~\ref{si_fig:cl_d2}).
\begin{figure}[ht]
     \centering
     \begin{subfigure}[b]{.49\textwidth}
     \centering \includegraphics[trim=0 -22 0 0,clip,height=5cm]{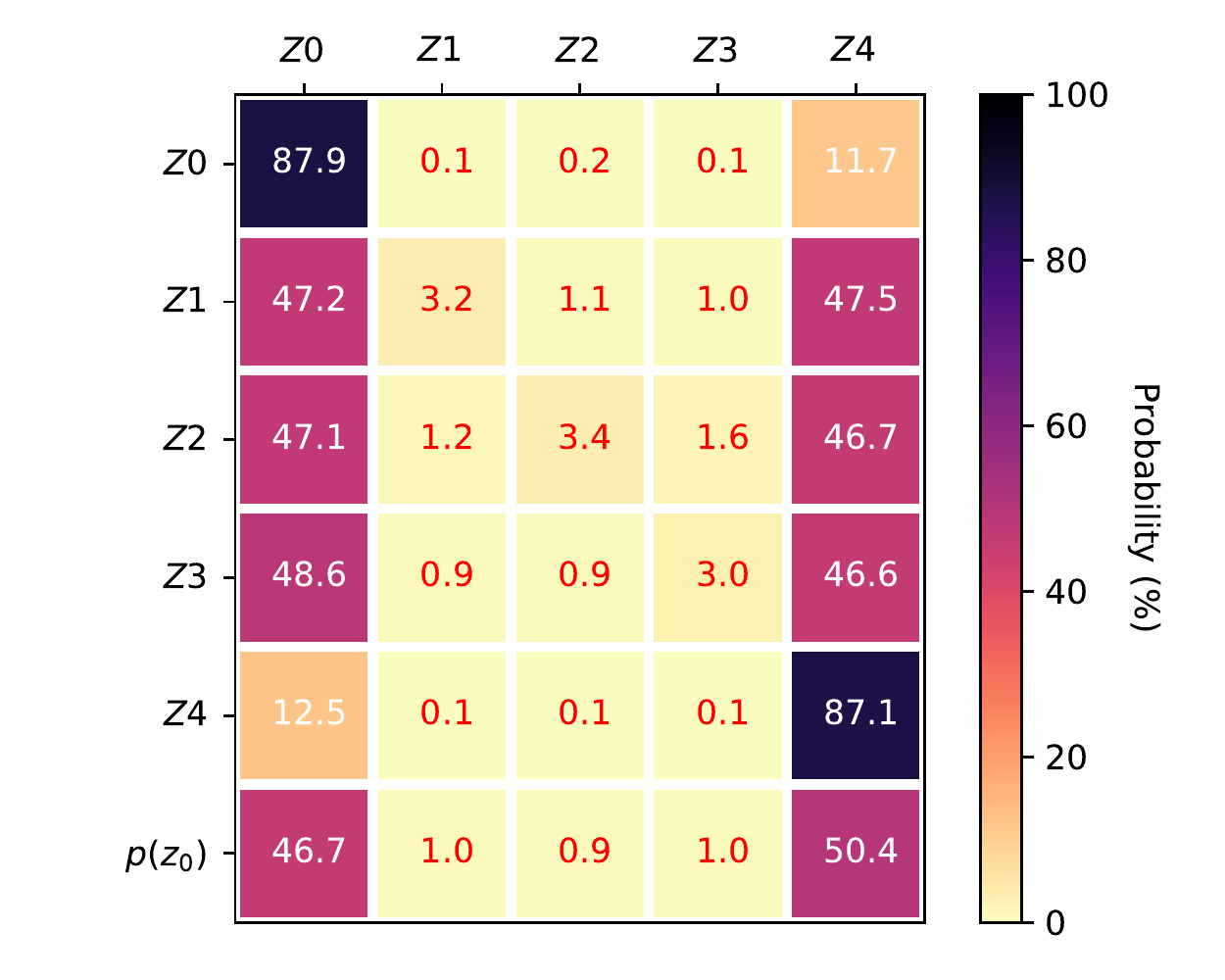}\caption{Learned context model on $\cD_1$}
         \label{si_fig:cl_rho_d1}
     \end{subfigure}
     \begin{subfigure}[b]{.49\textwidth}
     \centering  \includegraphics[height=5cm]{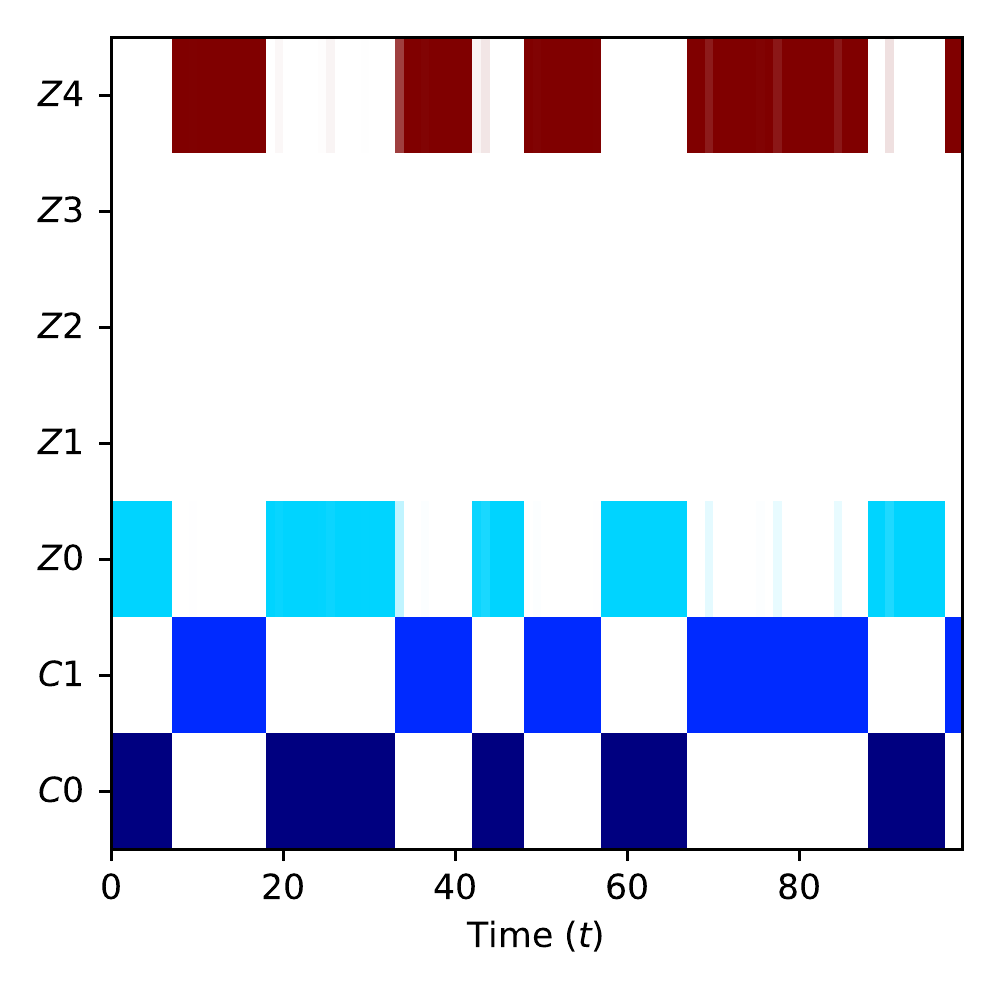} \caption{Learned context transitions on $\cD_1$}
         \label{si_fig:cl_z_d1}
     \end{subfigure}
     \caption{Learning the context on the set $\cD_1$. $C0$ and $C1$ stand for the ground true contexts, while $Z0$-$Z4$ are the learned contexts.}\label{si_fig:cl_d1}
\end{figure}

\begin{figure}[ht]
     \centering
     \begin{subfigure}[t]{.48\textwidth}
     \centering \includegraphics[trim=100 -22 0 0,clip,width=.9\textwidth]{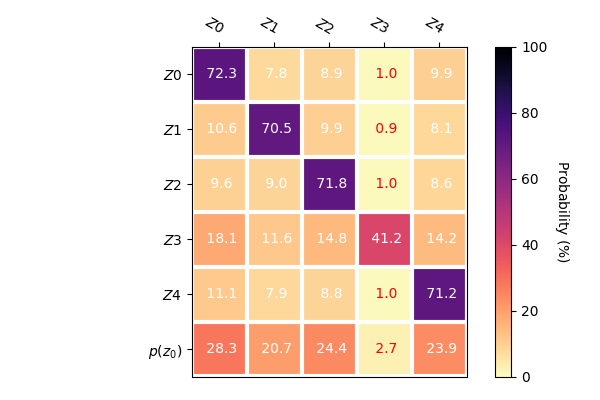}
     \caption{Context model}  
     \label{si_fig:cl_rho_d2}
     \end{subfigure}
     \begin{subfigure}[t]{.48\textwidth}
     \centering \includegraphics[width=0.8\textwidth]{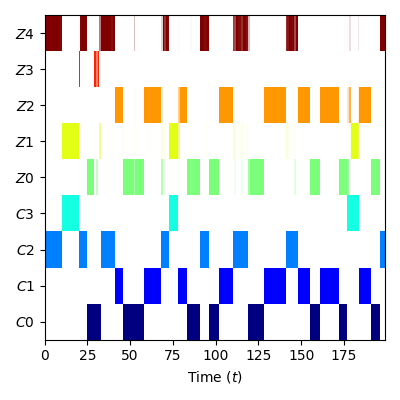} \caption{Context transition} \label{si_fig:cl_z_d2}
     \end{subfigure}
    \caption{Expanding the model by learning new contexts on the set $\cD_2$. $C0$-$C3$ stand for the ground true contexts, while $Z0$-$Z4$ are the learned contexts.}
    \label{si_fig:cl_d2}
\end{figure}

\subsection{Comparing to POMDP and C-MDP methods} \label{si_sec:comparing_to_baselines}

\begin{table}[ht]
    \centering
\begin{tabular}{l|cccc}
\toprule
\diagbox{algo $\downarrow$}{failure $\rightarrow$} &               hard & soft $\alpha=0.1$ & soft $\alpha=0.3$ & soft $\alpha=0.5$ \\
\hline
\hline
FI-SAC  &   $84.50 \pm 1.79$ &   $\mathbf{76.63 \pm 8.54}$ &  $\mathbf{84.75 \pm 3.07}$ &    $\mathbf{86.92 \pm 1.03}$ \\
C-SAC   &   $85.38 \pm 1.64$ &   $\mathbf{76.80 \pm 8.91}$&   $\mathbf{86.76 \pm 2.88}$&   $\mathbf{88.35 \pm 1.30}$ \\
C-CEM   &  $\mathbf{87.63 \pm 0.14}$ &  $60.15 \pm 25.91$ &   $83.15 \pm 7.72$ &  $\mathbf{89.08 \pm 1.90}$ \\
DNNMM &   $-7.73\pm 17.62$ &    $2.87 \pm 15.18$ &  $28.24 \pm 22.87$ &  $66.35 \pm 19.67$ \\
ANPMM &   $-3.35\pm 15.64$ &    $8.07 \pm 16.48$ &  $32.08 \pm 19.54$ &  $57.22 \pm 25.55$ \\
GPMM    &   $3.50 \pm 18.59$ &    $3.55 \pm 7.83$ &  $10.64 \pm 16.10$ &  $49.61 \pm 19.13$ \\
RNN-PPO &  $-0.17 \pm 18.06$ &  $64.10 \pm 21.37$ &  $74.58 \pm 20.66$ &   $67.01 \pm 8.52$ \\
\bottomrule
\end{tabular}
\caption{Mean $\pm$ standard deviation for: our algorithms (C-SAC, C-CEM), a continual learning algo (GPMM), a POMDP algo (RNN-PPO), and SAC with a known context (FI-SAC). For soft failure experiments, we have increased the maximum applicable force by the factor of two. Reproduction of Table~\ref{tab:returns_soft_and_hard_failures} from the main text.} 
\label{si_tab:returns_soft_and_hard_failures}
\end{table}

We chosen the context set as $\cC= \{-1, \chi\}$ for $\chi \in \{-1, 0.1, 0.3, 0.5\}$. However, for $\chi>0$, we again increased the force magnitude by two-fold. For SAC-based algorithms we present the statistics for 50 episodes and 3 seeds. To avoid unfair comparison to POMDP and continual RL methods, we pick their best performance and explain why these methods are not well-suited for our problem. For GPMM we run ten separate experiments with 15 episode each and picked three best runs and the best episode performance. For RNN-PPO we run three experiments, but picked the best learned policy over time and over the runs. Furthermore, we use the RNN in a more favorable setting as we assume that the number of contexts is known and can be hard coded in the RNN architecture. We present our experimental results in Table~\ref{si_tab:returns_soft_and_hard_failures}. While it seems that RNN-PPO learns to swing-up correctly for soft failures and GPMM almost learns to swing-up for $\chi=0.5$, the learned belief models indicate that this not so. In fact, plotting the evolution of the belief for RNN-PPO and the context evolution for GPMM illustrates that the algorithms do not learn the context model (see Figure~\ref{fig:rnn_gpmm} in the main text). In order to illustrate that it is not Gaussian Processes that cause failure in GPMM, we replace the Gaussian Process mixture with Deep Neural Network and Attentive Neural Process~\citep{qin2019recurrent, kim2019attentive} mixtures (DNNMM and ANPMM, respectively) using the code from~\cite{xu2020task}. The results in this case are similar to the GPMM case, i.e., we manage to get reasonable rewards for $\alpha =0.5$, but fail for other cases.

\subsection{Experiments with a larger number of contexts} 
We further test our approach by introducing a larger number of contexts. We have the following parameter sets 
\begin{gather*}
    c_{\rm friction} = [0, 0.1],~c_{\rm gravity} = [9.82, 50],~c_{\rm cart~mass} = [0.5, 5],~c_{\rm max~force} = [20, 40],
\end{gather*}
with $16$ parameters in total. The contexts for our experiment are as follows:
\begin{align*}
  & C_0: && c_{\rm friction} = 0.1,~&&c_{\rm gravity} = 9.82,~&&c_{\rm cart~mass} = 0.5,~&c_{\rm max~force} = 40,\\
  & C_1: && c_{\rm friction} = 0.1,~&&c_{\rm gravity} = 9.82,~&&c_{\rm cart~mass} = 0.5,~&c_{\rm max~force} = 20,\\
  & C_2: && c_{\rm friction} = 0.1,~&&c_{\rm gravity} = 9.82,~&&c_{\rm cart~mass} = 5,~&c_{\rm max~force} = 40,\\
  & C_3: && c_{\rm friction} = 0.1,~&&c_{\rm gravity} = 9.82,~&&c_{\rm cart~mass} = 5,~&c_{\rm max~force} = 20,  \\ 
  & C_4: && c_{\rm friction} = 0.1,~&&c_{\rm gravity} = 50,~&&c_{\rm cart~mass} = 0.5,~&c_{\rm max~force} = 40,\\
  & C_5: && c_{\rm friction} = 0.1,~&&c_{\rm gravity} = 50,~&&c_{\rm cart~mass} = 0.5,~&c_{\rm max~force} = 20,\\
  & C_6: && c_{\rm friction} = 0.1,~&&c_{\rm gravity} = 50,~&&c_{\rm cart~mass} = 5,~&c_{\rm max~force} = 40,\\
  & C_7: && c_{\rm friction} = 0.1,~&&c_{\rm gravity} = 50,~&&c_{\rm cart~mass} = 5,~&c_{\rm max~force} = 20,\\
  & C_8:    && c_{\rm friction} = 0,~&&c_{\rm gravity} = 9.82,~&&c_{\rm cart~mass} = 0.5,~&c_{\rm max~force} = 40,\\
  & C_9:    && c_{\rm friction} = 0,~&&c_{\rm gravity} = 9.82,~&&c_{\rm cart~mass} = 0.5,~&c_{\rm max~force} = 20,\\
  & C_{10}: && c_{\rm friction} = 0,~&&c_{\rm gravity} = 9.82,~&&c_{\rm cart~mass} = 5,~&c_{\rm max~force} = 40,\\
  & C_{11}: && c_{\rm friction} = 0,~&&c_{\rm gravity} = 9.82,~&&c_{\rm cart~mass} = 5,~&c_{\rm max~force} = 20,  \\ 
  & C_{12}: && c_{\rm friction} = 0,~&&c_{\rm gravity} = 50,~&&c_{\rm cart~mass} = 0.5,~&c_{\rm max~force} = 40,\\
  & C_{13}: && c_{\rm friction} = 0,~&&c_{\rm gravity} = 50,~&&c_{\rm cart~mass} = 0.5,~&c_{\rm max~force} = 20,\\
  & C_{14}: && c_{\rm friction} = 0,~&&c_{\rm gravity} = 50,~&&c_{\rm cart~mass} = 5,~&c_{\rm max~force} = 40,\\
  & C_{15}: && c_{\rm friction} = 0,~&&c_{\rm gravity} = 50,~&&c_{\rm cart~mass} = 5,~&c_{\rm max~force} = 20.
\end{align*}

We learn a model with $K=20$. We further add a structure on the contexts transition matrix: form every contexts we can switch only to and from two contexts, i.e, from the context $i$ we can switch to/from the context $i-1$ and the context $i+1$, where operations on $i$ should be understood as modulo $K$ (i.e., $-1 \triangleq K-1$, $K \triangleq 0$), i.e.:
\begin{equation*}
    p(z_{i} | z_j) = \begin{cases} 0.6 & i = j, \\
            0.2 &  j = \text{mod}(i + 1, K) \text{ or } j =\text{mod}(i - 1, K), \\
            0 & \text{ otherwise }
        \end{cases}
\end{equation*}
Similarly to our previous settings we have a transition cool-off period of $5$ time steps. 

As the results in Figures~\ref{si_fig:large_seq_3} and~\ref{si_fig:large_seq_4} suggest, we identify the following meaningful contexts: $Z_{1} = \{C_1, C_9 \}$, $Z_4 = \{C_6, C_7,  C_{14}, C_{15} \}$, $Z_{10} = \{ C_4, C_5, C_{12}, C_{13}\}$, $Z_{14} = \{ C_2, C_3, C_{10},C_{11} \}$,  $Z_{18} = \{C_0, C_4, C_8 \}$. Our algorithm does not distinguish the difference in the values of the friction parameter. The difference in maximum force is not identified for larger masses and stronger gravity, but it can be identified for cart mass $0.5$ and gravity $9.82$. The other contexts are characterized by different gravity values and cart masses. The only exception is the ground truth context $C_4$, which is present in both $Z_{10}$ and $Z_{18}$. This, however, is bound to happen if the ground truth contexts are hard to separate. 

\begin{figure}[ht]
     \centering
  \begin{subfigure}[t]{.48\textwidth}
    \centering \includegraphics[width=0.8\textwidth]{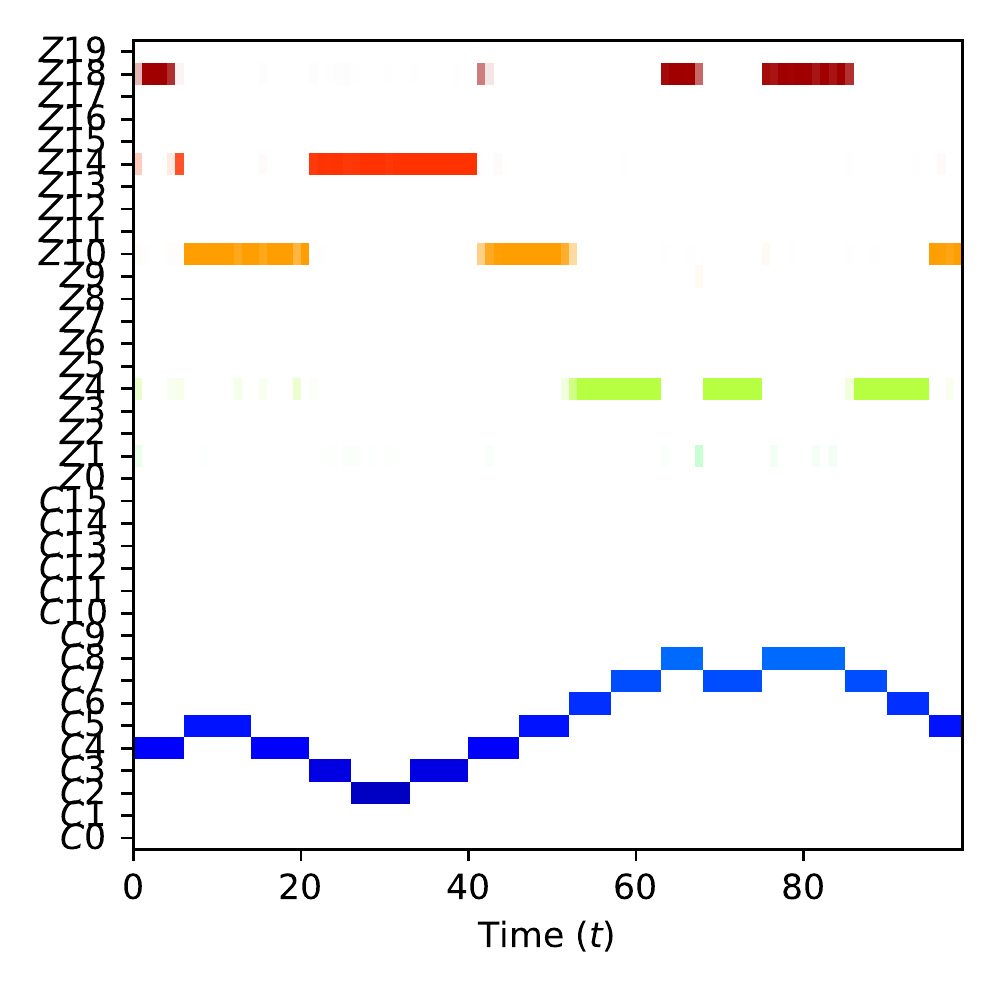}
     \caption{Context realization}  
     \label{si_fig:large_seq_3}
     \end{subfigure}
     \begin{subfigure}[t]{.48\textwidth}
     \centering \includegraphics[width=0.8\textwidth]{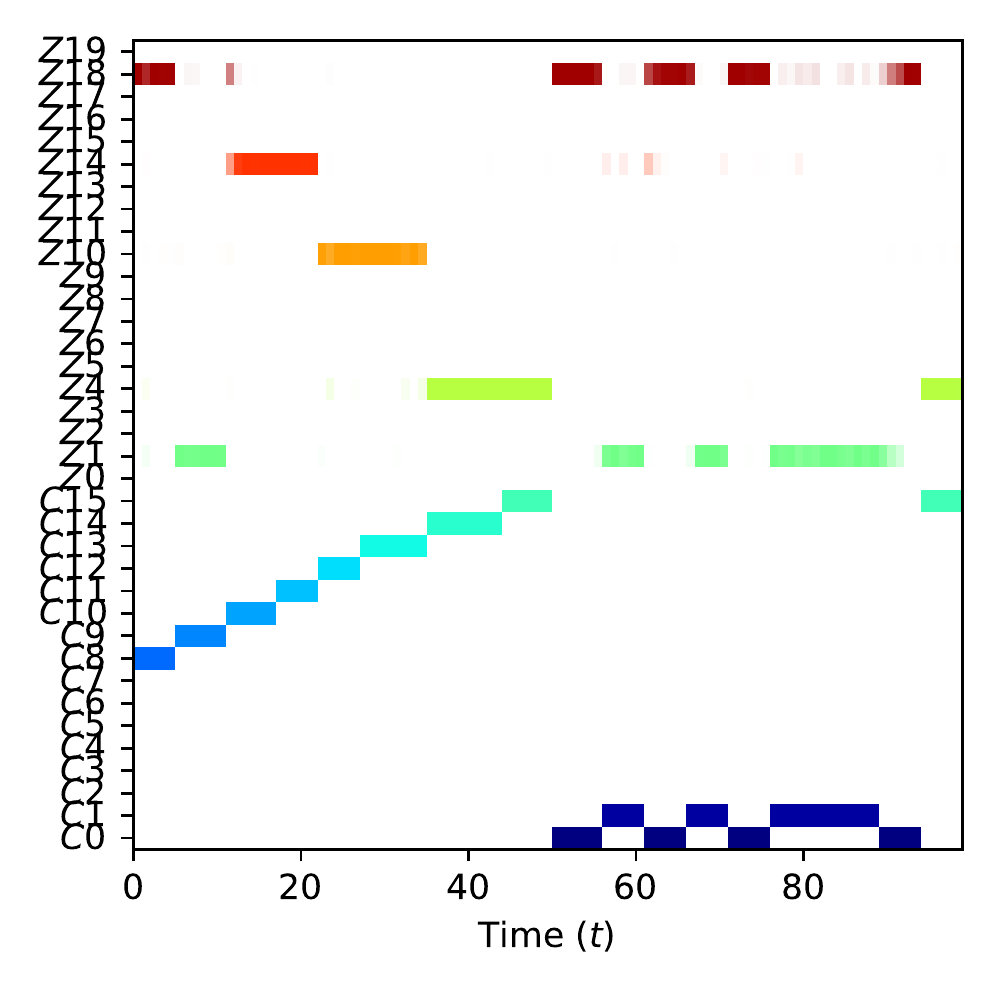} 
     \caption{Context realization}  
     \label{si_fig:large_seq_4}
     \end{subfigure}
    \caption{Learning a context model with $16$ contexts.}
    \label{si_fig:16_contexts}
\end{figure}

\end{document}